\newcommand\numberthis{\addtocounter{equation}{1}\tag{\theequation}}
\newtheorem{subdefinition}{Definition}[definition]
\begin{document}

\title{K-NN active learning under local smoothness assumption
}


\author{Boris Ndjia Njike         \and
        Xavier Siebert 
}


\institute{F. Author \at
	Affiliation: University of Mons, Department of Mathematics and Operational Research \\
	Tel.: +32 65-374677\\
	Fax: +32 65-374677\\
	\email{borisedgar.ndjianjike@umons.ac.be}  \\
	ORCID ID: https://orcid.org/0000-0002-2122-994 
	\and
	S. Author \at
	Affiliation: University of Mons, Department of Mathematics and Operational Research \\
	Tel.: +32 65-374690\\
	Fax: +32 65-374690\\
	\email{xavier.siebert@umons.ac.be} 
}
\date{Received: date / Accepted: date}

\maketitle

\begin{abstract}
There is a large body of work on convergence rates either in passive or in active learning. Here we first
outline some of the main results that have been obtained, more specifically in a nonparametric setting
under assumptions about the smoothness of the regression function (or the boundary between classes) and the margin noise. We discuss the relative merits of these underlying assumptions by putting active learning in perspective with recent work on passive learning. In particular, the regression function is assumed to be smooth with respect to the marginal  probability of the data. This is more general than what was previously used in the literature. Our contribution is an active nearest neighbor learning algorithm, that is able to deal with this setting, and that outperforms its passive counterpart. Our algorithm works for a larger class of probability distributions than those previously used, especially for distributions of probability for which the density function is not necessarily bounded below, and also for discrete distributions.
\keywords{Nonparametric learning \and active learning \and nearest-neighbors \and smoothness condition.}
\end{abstract}

\section{Introduction}
\label{intro}
Active learning is a machine learning approach for reducing the data labeling effort. Given an instance space $\mathcal{X}$ or a pool of unlabeled data $\lbrace X_1,\ldots,X_w\rbrace$ provided by a distribution $P_X$, the learner focuses its labeling effort only on the most \enquote{informative} points so that a model built from them can achieve the best possible guarantees \citep{dasgupta2011two}. Such guarantees are particularly interesting when they are significantly better than those obtained in passive learning \citep{hanneke2015minimax}. In the context of this work, we consider binary classification (where the label $Y$ of $X$ takes its value in $\lbrace 0,1\rbrace$) in  a nonparametric setting. Extensions to multiclass classification and adaptive algorithms are discussed at the end of this paper (Section~\ref{sec:conclusion}).

The nonparametric setting has the advantage of providing guarantees with many informations such as  the dependence on the dimensional and distributional parameters by using some hypotheses on the regularity of the decision boundary \citep{castro2008minimax}, on the regression function \citep{minsker2012plug, locatelli2017adaptivity}, and on the geometry of instance space (called strong density assumption) \citep{audibert2007fast, locatelli2017adaptivity, minsker2012plug}. 
One of the initial works on nonparametric active learning \citep{castro2008minimax} assumed that the decision boundary is the graph of a smooth function, that a margin assumption very similar to Tsybakov's noise assumption \citep{mammen1999smooth} holds, and that distribution $P_X$ is uniform. This led to a better guarantee than in passive learning. Instead of the assumption on the decision boundary, other works \citep{minsker2012plug, locatelli2017adaptivity} supposed rather that the regression function is smooth (in some sense). This assumption, along with Tsybakov's noise assumption and the strong density assumption also gave a better guarantee than in passive learning. Moreover, unlike in \citep{castro2008minimax}, they provided algorithms that are adaptive with respect to the margin's noise and to the smoothness parameters.\\
However, recent work \citep{chaudhuri2014rates} pointed out some disadvantages of the preceding smoothness assumption, and extended it in the context of passive learning with $k$-nearest neighbors ($k$-NN)   by using a more general smoothness assumption that is able to sharply characterize the rate of convergence for all probability distributions that satisfy it.\\
In this paper, we  thus extend the work of \citep{chaudhuri2014rates} to the active learning setting, and provide a novel algorithm that outputs a classifier with the same rate of convergence as other recent algorithms with more restrictive hypotheses, as for example \citep{minsker2012plug, locatelli2017adaptivity}.
Section~\ref{sec:definitions} introduces general definitions, Section~\ref{sec:convergenceRates} presents previous work on convergence rates in active and passive non-parametric learning, with a special emphasis on the assumptions related to our work. Section~\ref{sec:algorithm} provides an outline of our algorithm while Section~\ref{sec:guarantee} describes its theoretical motivations and Section~\ref{sec:conclusion} contains the conclusion and some perspectives for future work.  

\section{Preliminaries}
\label{sec:definitions} 
We begin with some general definitions and notations about active learning in binary classification, then recall the concept of $k$-NN classifiers. Finally, the main assumptions that are used in nonparametric active learning are explained. 
\subsection{Active learning setting}
\label{sec:ALsetting} 

Let $(\mathcal{X}, \rho)$ be a metric space. In this paper we set $\mathcal{X}\subset\mathbb{R}^d$ 
and refer to it as the instance space, and take $\rho$ as the Euclidean metric. Let $\mathcal{Y}=\lbrace 0,1\rbrace$ the label space. We assume that the pairs $(X, Y)$ are random variables distributed according to an unknown probability $P$ over $\mathcal{X}\times \mathcal{Y}$. Let us denote $P_X$ the marginal distribution of $P$ over $\mathcal{X}$. 

Given $w$ $\in$ $\mathbb{N}$ and an i.i.d. sample $(X_1,Y_1),\ldots, (X_w,Y_w)$ drawn according to  probability $P$, the learning problem consists in minimizing the risk $\mathcal{R}(f)=P(Y\neq f(X))$ over all measurable functions, called classifiers, $f:\mathcal{X}\rightarrow \mathcal{Y}$.

In active learning, the labels are not available from the beginning but we can request iteratively at a certain cost (to a so-called oracle) a given number $n$ of samples, called the budget ($n\leq w$).  In passive learning, all labels are available  and $n=w$.
At any time, we choose to request the label of a point $X$ according to the previous observations. The point $X$ is chosen to be most \enquote{informative}, which amounts to belonging to a region where classification is difficult and requires more labeled data to be collected. Therefore, the goal of active learning is to design a sampling strategy that outputs a classifier $\widehat{f}_{n,w}$ whose excess risk (see below) is as small as possible with high probability over the requested samples, as reviewed in  \citep{dasgupta2011two, hanneke2015minimax, dasgupta2017active}.

Given $x$ in $\mathcal{X}$, let us introduce the regression function $\eta(x)=\mathbb E(Y\vert X=x)=P(Y=1\vert\,X=x)$. It is easy to show \citep{lugosi2002pattern} that the function $f^*(x)=\mathds{1}_{\eta(x)\geq 1/2}$ achieves the minimum risk and  that $\mathcal{R}(f^*)=\mathbb E_X(\min(\eta(X),1-\eta(X)))$. Because $P$ is unknown, the function $f^*$ is unreachable and thus the aim of a learning algorithm is to return a classifier $\widehat{f}_{n,w}$ with minimum excess risk $\mathcal{R}(\widehat{f}_{n,w})-\mathcal{R}(f^*)$ with high probability over the sample $(X_1,Y_1),\ldots, (X_w,Y_w)$.
\subsection{$k$-Nearest Neighbors ($k$-NN) classifier}

Given two integers $k,\;n$ such that $k<n$, and a test point $X$ $\in$ $\mathcal{X}$, the $k$-NN classifier predicts the label of $X$ by giving the majority vote of its $k$ nearest neighbors amongst the sample $X_1,\ldots,X_n$. For $k=1$, the $k$-NN classifier returns the label of the nearest neighbor of $X$ amongst the sample $X_1, \ldots, X_n$. If $k$ is allowed to grow with $n$, the method is called $k_n$-NN. For a complete discussion of nearest neighbors classification, see for example \citep{biau2015lectures,shalev2014understanding,chaudhuri2014rates}.

\subsection{Regularity, noise and strong density assumptions}
\label{sec:notation-def}
Let $B(x,r)=\lbrace x'\in \mathcal{X},\; \rho(x,x')< r\rbrace$ and $\bar B(x,r)=\lbrace x'\in \mathcal{X},\; \rho(x,x')\leq r\rbrace $  the open and closed balls (with respect to the Euclidean metric $\rho$), respectively, centered at $x$ $\in$ $\mathcal{X}$ with radius $r>0$. Let $\text{supp}(P_X)=\lbrace x\in \mathcal{X},\,\;\forall r>0,\;P_X(B(x,r))>0\rbrace$  the support of the marginal distribution $P_X$.

\setcounter{definition}{1}

\begin{subdefinition}[H\"{o}lder continuity]~\\
Let $\eta: \mathcal{X}\rightarrow [0,1]$ be the regression function defined as $\eta(x)=P(Y=1 \vert X=x)$. We say that $\eta$ is $(\alpha,L)$-\textbf{H\"{o}lder continuous} $(0<\alpha\leq 1, \text{and}\; L\geq 1)$ if $\forall$ $x,x'\in \mathcal{X}$,   
\begin{equation}
\label{def:Holder}
\vert \eta(x)-\eta(x')\vert\leq L\rho(x,x')^{\alpha}. \tag{H1a} 
\end{equation}

\end{subdefinition} 
The notion of H\"{o}lder continuity ensures that the proximity between  two closest (according to the metric $\rho$) points is reflected in a similar value of the conditional probability $\eta$.\\
This definition remains true for a general metric space, but when $\rho$ is the Euclidean metric, we should always have $0<\alpha\leq 1$, otherwise $\eta$ becomes constant\citep{pugh2002real}.\\
In most of the previous works (for example \citep{audibert2007fast,minsker2012plug,gai2007sharp}), the definition \ref{def:Holder} is used along with the following notion \eqref{def:strongDensity} for technical reasons. 

 

\begin{subdefinition}[Strong density]\label{def:strong}~\\
Let $P$ be the  probability distribution defined over $\mathcal{X}\times \mathcal{Y}$ and $P_X$ the marginal distribution of $P$ over $\mathcal{X}$. We say that $P$ satisfies the \textbf{strong density} assumption if there exists some constants $r_0>0$, $c_0>0$, $p_{min}>0$ such that for all $x$ $\in$ $\text{supp}(P_X)$: 
\begin{equation}
\begin{split}
&\lambda(B(x,r)\cap \text{supp}(P_X))\geq c_0 \lambda(B(x,r)),\;\forall r\leq r_0\\
&\text{and}\; p_X(x)>p_{min},
\end{split}
\tag{H1b}
\label{def:strongDensity}
\end{equation}
where $p_X$ is the density function of the marginal distribution $P_X$ and $\lambda$ is the Lebesgue measure.
\end{subdefinition}
The strong density assumption ensures that, given a realisation $X=x$ according to $P_X$, there exists an infinite number of realisations $X_1=x_1,\ldots, X_m=x_m,\ldots$ in a neighborhood of $x$. \\Sometimes, the notion of strong density is used  to geometrically characterize the set where the classification is difficult \citep{locatelli2017adaptivity}, and then combined with the following definition of \emph{Margin noise}, allows to nicely control the error  of classification beyond a given number of label requests. 

\begin{definition}[Margin noise]\label{Tsy}~\\
The probability distribution $P$ satisfies the \textbf{margin noise} assumption (sometimes called \textbf{Tsybakov's noise} assumption \citep{audibert2007fast}) with parameter $\beta\geq 0$ if for all $0<\epsilon\leq 1$, there is $C=C(\beta)$ $\in$ $[1, +\infty[$ such that

\begin{equation}
\label{def:TsybakovMarginNoise}
P_X(x\in\mathcal{X},\;\vert\eta(x)-1/2\vert\leq\epsilon)<C\epsilon^{\beta}. \tag{H2} 
\end{equation} 
\end{definition}
The margin noise assumption gives a bound on the probability that the label of the points in the neigborhood of a point $x$ differs from the label of $x$ given by the conditional probability $\eta(x)$. It also describes the behavior of the regression function in the vicinity of the decision boundary $\eta(x)=\frac 12$. When $\beta$ goes to infinity, we observe a \enquote{jump} of $\eta$ around the decision boundary, and then we obtain Massart's noise condition \citep{massart2006risk}. Small values of $\beta$ allow for $\eta$ to \enquote{cuddle} $\frac 12$ when we approach the decision boundary.     

\begin{definition}[$(\alpha,L)$-smooth]~\\
Let $0<\alpha\leq 1$ and $L>1$. The regression function is \textbf{$(\alpha,L)$-smooth} if for all $x,z$ $\in$ supp$(P_X)$ we have: 
\begin{equation}
\label{def:smooth}
\begin{split}
&\vert \eta(x)-\eta(z)\vert \leq L.P_X(B(x,\rho(x,z)))^{\alpha/d},   
\end{split} 
\tag{H3}
\end{equation}
where $d$ is the dimension of the instance space.
\end{definition} 
Equivalently, \eqref{def:smooth} can be rewritten as: 
\begin{equation*}
\begin{split}
&\vert \eta(x)-\eta(z)\vert \leq L.\min\left(P_X(B(x,\rho(x,z)))^{\alpha/d},P_X(B(z,\rho(x,z)))^{\alpha/d}\right).
\end{split} 
\end{equation*}
It is important to note that the $(\alpha,L)$-smooth assumption \eqref{def:smooth} is more general than the H\"{o}lder continuity assumption \eqref{def:Holder}, as stated in Theorem~\ref{theo1} below.

\begin{theorem}\citep{chaudhuri2014rates}\label{theo1}~\\ Suppose that $\mathcal{X}\subset\mathbb{R}^d$, that the regression function $\eta$ is $(\alpha_h,L_h)$-H\"older continuous, and that $P_X$ satisfies \ref{def:strongDensity}. Then there is a constant $L>1$ such that for any $x,z$ $\in$ supp($P_X$), we have: 
$$\vert \eta(x)-\eta(z)\vert \leq L.P_X(B(x,\rho(x,z)))^{\alpha_h/d}.$$
\end{theorem}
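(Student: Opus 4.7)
The plan is to combine Hölder continuity with a lower bound on $P_X(B(x,r))$ that follows from the strong density assumption, and to handle the small-distance and large-distance regimes separately so that a single constant $L$ works uniformly.

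First I would exploit the strong density assumption to obtain a lower bound of the form $P_X(B(x,r)) \geq \kappa r^d$ whenever $r \leq r_0$. This is straightforward: integrating $p_X \geq p_{min}$ over $B(x,r) \cap \mathrm{supp}(P_X)$ and using the volume lower bound yields
\begin{equation*}
P_X(B(x,r)) \;\geq\; p_{min}\,\lambda\bigl(B(x,r)\cap \mathrm{supp}(P_X)\bigr) \;\geq\; p_{min}\,c_0\, V_d\, r^d,
\end{equation*}
where $V_d$ is the volume of the Euclidean unit ball. Inverting, $\rho(x,z)^{\alpha_h} \leq (p_{min} c_0 V_d)^{-\alpha_h/d}\, P_X(B(x,\rho(x,z)))^{\alpha_h/d}$, which combined with $(\alpha_h, L_h)$-Hölder continuity gives the desired inequality with constant $L_1 := L_h (p_{min} c_0 V_d)^{-\alpha_h/d}$, valid whenever $\rho(x,z) \leq r_0$.

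Next I would handle the complementary case $\rho(x,z) > r_0$ by the trivial bound $|\eta(x)-\eta(z)| \leq 1$ together with the observation that $B(x,r_0) \subset B(x,\rho(x,z))$, so
\begin{equation*}
P_X\bigl(B(x,\rho(x,z))\bigr) \;\geq\; P_X(B(x,r_0)) \;\geq\; p_{min}\,c_0\,V_d\,r_0^d,
\end{equation*}
a strictly positive constant independent of $x,z$. Hence $|\eta(x)-\eta(z)| \leq L_2\, P_X(B(x,\rho(x,z)))^{\alpha_h/d}$ with $L_2 := (p_{min} c_0 V_d r_0^d)^{-\alpha_h/d}$.

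Finally I would set $L := \max(1, L_1, L_2) + 1 > 1$ to absorb both regimes into the single bound claimed in the theorem. The only technical point to be careful about is that $x \in \mathrm{supp}(P_X)$ really does guarantee the volume lower bound for every $r \leq r_0$ (not only for some $r$), which is exactly what the strong density assumption provides. Beyond this the argument is purely a two-case estimation, so I do not expect any serious obstacle.
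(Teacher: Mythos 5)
Your proof is correct, and it is essentially the standard argument for this result (the paper itself only cites Theorem~\ref{theo1} from \citep{chaudhuri2014rates} without reproducing a proof): the strong density assumption yields $P_X(B(x,r))\geq p_{min}c_0V_d r^d$ for $r\leq r_0$, which converts the H\"older bound into the probability-ball bound for $\rho(x,z)\leq r_0$, and the trivial bound $|\eta(x)-\eta(z)|\leq 1$ together with $P_X(B(x,\rho(x,z)))\geq p_{min}c_0V_d r_0^d$ handles the remaining case. Taking the maximum of the two constants (forced above $1$) completes the argument exactly as in the cited reference.
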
 

\begin{definition}[Doubling-probability]~\\
The marginal distribution $P_X$ is a \textbf{doubling-probability} if there exists a constant $C_{db}>0$ such that for any $x \in \mathcal{X}$, and $r>0$, we have: 
\begin{equation}
\label{def:doubling}
P_X(B(x,r))\leq C_{db} P_X(B(x,r/2)).
\tag{H4}
\end{equation} 

\end{definition}
This notion was initially introduced for geometric purposes in the setting of measure theory \citep{heinonen2012lectures,federer2014geometric}. It helps for constructing a subcover of a metric space by also minimizing the overlap between the elements of the subcover.  Doubling-probability has been used in a machine learning context, particularly $k$-NN classification (or regression), where the constant $C_{db}$ is interpreted as the intrinsic dimension of the region where the data belong \citep{kpotufe2011k}. 
This allows to reduce considerably the complexity of the classification problem and to bypass the so-called curse of dimension.  Also, it is also proved \citep{kpotufe2011k} that the notion of doubling-probability generalizes the strong density assumption \ref{def:strongDensity}. It is thus more universal, and does not require a probability density. \\In this paper, doubling-probability is used only for geometrical purposes. It is later relaxed, so that it becomes sufficient to consider only balls $B(x,r)$ with $P_X(B(x,r))$ sufficiently large to satisfy the doubling-probability condition~\eqref{def:doubling}.
\section{Convergence rates in nonparametric active learning}
\label{sec:convergenceRates} 

\subsection{Previous work}
Active learning theory has been mostly studied during the last decades in a parametric setting, see for example \citep{balcan2010true, hanneke2011rates, dasgupta2017active} and references therein.
One of the pioneering works studying the achievable limits in active learning in a nonparametric setting \citep{castro2008minimax} required that the decision boundary is the graph of a H\"{o}lder continuous function with parameter $\alpha$ \eqref{def:Holder}. Using a notion of margin noise (with parameter $\beta$) very similar to \eqref{def:TsybakovMarginNoise}, the following minimax rate was obtained:
\begin{equation}
O\left(n^{-\frac{\beta}{2\beta+\gamma-2}}\right)\label{rate1},
\end{equation}
where $\gamma=\frac{d-1}{\alpha}$ and $d$ is the dimension of instance space $(\mathcal{X}=\mathbb{R}^d)$.\\

Note that this result assumes the knowledge of the smoothness and margin noise parameters, whereas an algorithm that achieves the same rate, but that adapts to these parameters was proposed recently in \citep{locatelli2018adaptive}.  



In passive learning, by assuming that the regression function is H\"{o}lder continuous \eqref{def:Holder}, along with \eqref{def:strongDensity} and \eqref{def:TsybakovMarginNoise},  the  following minimax rate was established  \citep{audibert2007fast}: 
\begin{equation}\label{rate3}
O\left(n^{-\frac{\alpha(\beta+1)}{2\alpha+d}}\right).
\end{equation}

In active learning, using the same assumptions \eqref{def:Holder}, \eqref{def:strongDensity} and \eqref{def:TsybakovMarginNoise}, with the additional condition $\alpha\beta<d$,  the following minimax rate was obtained \citep{locatelli2017adaptivity} 
\begin{equation} \label{rate2}
\tilde{O}\left(n^{-\frac{\alpha(\beta+1)}{2\alpha+d-\alpha\beta}}\right),
\end{equation}  
where $\tilde{O}$ indicates that there may be additional logarithmic factors.
This active learning rate given by \eqref{rate2} thus represents an improvement over the passive learning rate \eqref{rate3} that uses the same hypotheses.

With another assumption on the regression function relating the $L_2$ and $L_{\infty}$ approximation losses of certain piecewise constant or polynomial approximations of $\eta$ in the vicinity of the decision boundary,  the same rate \eqref{rate2} was also obtained \citep{minsker2012plug}.

\subsection{Link with $k$-NN classifiers}
For practicals applications, an interesting question is whether $k$-NN classifiers attain the rate given by  \eqref{rate3}  in passive learning and by \eqref{rate2} in active learning. 

In passive learning, under assumptions \eqref{def:Holder}, \eqref{def:strongDensity} and \eqref{def:TsybakovMarginNoise}, and for suitable $k_n$, it was shown in \citep{chaudhuri2014rates} that $k_n$-NN indeed achieves the rate \eqref{rate3}.

In active learning a pool-based algorithm that outputs a $k$-NN classifier has been proposed in \citep{kontorovich2016active}, but its assumptions differ from ours in terms of smoothness and noise. Similarly, the algorithm proposed in \citep{hannekenonparametric} outputs a 1-NN classifier based on a subsample of a given pool of data, such that the label of each instance of this subsample is determined with high probability by the labels of its neighbors within the pool. The number of neighbors is adaptively chosen for each instance in the subsample, leading to the minimax rate \eqref{rate2} under the same assumptions as in \citep{locatelli2017adaptivity}. 

To obtain more general results on the rate of convergence for $k$-NN classifiers in metric spaces under minimal assumptions, the more general smoothness assumption given by \eqref{def:smooth} was used in \citep{chaudhuri2014rates}.
By using a $k$-NN algorithm, and under assumptions  \eqref{def:TsybakovMarginNoise} and \eqref{def:smooth}, the rate of convergence obtained in \citep{chaudhuri2014rates} is also of the order of \eqref{rate3}. Additionally, using assumption \eqref{def:smooth} instead of \eqref{def:Holder} removes the need for the strong density assumption \eqref{def:strongDensity}, which therefore allows for more probability classes. 


\subsection{Contributions of the current work}
In this work, we will use the assumptions that were used in the context of passive learning in \citep{chaudhuri2014rates}, and show that is is possible to use them in active learning as well. 

For the sake of clarity, let us restate here these assumptions that will be used throughout this paper.
We assume that the assumptions \eqref{def:smooth}, \eqref{def:TsybakovMarginNoise}, \eqref{def:doubling} simultaneously hold respectively with parameters $(\alpha,L)$, $(\beta,C)$, $C_{db}$.

In this paper, we provide an active learning algorithm under assumptions \eqref{def:smooth}, \eqref{def:TsybakovMarginNoise} that were used in passive learning in \citep{chaudhuri2014rates}. We additionally assume that the underlying marginal probability $P_X$ satisfies \eqref{def:doubling} mostly for geometrical convenience. 
Our algorithm has several advantages: 

\begin{itemize}
\item[$\bullet$] The assumption \ref{def:smooth}  involves a dependence on the marginal distribution $P_X$, and holds for any pair of distributions $P_X$ and $\eta$, which allows the use of discrete probabilities. However in active learning the H\"{o}lder continuity notion \eqref{def:Holder} is typically used, along with the strong density notion \eqref{def:strongDensity} \citep{castro2008minimax, locatelli2017adaptivity,minsker2012plug}, which implies assuming the existence of the density $p_X$ of the marginal probability $P_X$. By 
using assumption \eqref{def:smooth} instead of \eqref{def:Holder} and thereby avoiding \eqref{def:strongDensity}, our algorithm removes unnecessary restrictions on the distribution that would exclude important densities (e.g., Gaussian) as noticed in \citep{doring2017rate}.
\item[$\bullet$] The rate of convergence of our algorithm is better than those obtained in passive learning under \eqref{def:smooth} and \eqref{def:TsybakovMarginNoise}. 
\item[$\bullet$] According to the assumption \eqref{def:TsybakovMarginNoise}, as we will see, our algorithm also (as in \citep{minsker2012plug}) covers the most interesting case where the regression function is allowed to cross the boundary decision $\lbrace x,\;\eta(x)=\frac 12\rbrace$. 
\end{itemize} 
 


In the following, we will show that the rate of convergence of our algorithm remains the same as \eqref{rate2}, despite the use of more general hypotheses.



\section{KALLS algorithm}
\label{sec:algorithm}


\subsection{Setting}
\label{sec:setting}

As explained in Section~\ref{sec:ALsetting}, we consider an active learning setting with a pool of i.i.d. unlabeled examples $\mathcal{K}=\lbrace X_1,X_2,\ldots, X_w \rbrace$. Let $n\leq w$ the budget, that is the maximum number of points whose label we are allowed to query to the oracle. 
The objective of the algorithm is to build a 1-NN classifier, based on a labelled set $\mathcal{S}_{ac}$ of carefully chosen points. This set contains a subset of most \textit{informative} points in $\mathcal{K}$ and is called the \textit{active set}. More precisely, a point  $X_{t}$  is considered \textit{informative} if its label cannot be inferred (see below) from the previous observations $X_{t'}$(with $t'<t$).
The set $\mathcal{S}_{ac}$ starts with $X_{t_1}=X_1$ chosen arbitrarily in $\mathcal{K}$ and stops when the budget $n$ is reached or when $X_{w}$ is attained.

When a point $X_t$ is \textit{informative}, instead of requesting directly its label to the (noisy) oracle, we infer it by requesting the labels of its nearest neighbors in $\mathcal{K}$, as was done in \citep{hannekenonparametric}. This is reasonable for practical situations where the uncertainty about the label of $X_{t}$ has to be overcome, and it is related to the assumption \eqref{def:smooth}. Note that it differs from the setting of \citep{locatelli2018adaptive}, where the label of $X_{t}$ is requested several times. 
The number of neighbors $k_{t}$ used for inferring that label of $X_t$ is determined such that, while respecting the budget, we can predict with high confidence the true label as $f^*(X_{t})$ of $X_{t}$ by the empirical mean of the labels of its $k_{t}$ nearest neighbors.

The labelled active set $\mathcal{S}_{ac}$ output by the algorithm will comprise only the informative points on which we have sufficient guarantees when considering the inferred label as the right label. Finally, we show that the labelled active set $\mathcal{S}_{ac}$ is sufficient to predict the label of any new point with a 1-NN classification rule $\widehat f_{n,w}$.

\subsection{Algorithm}
\label{subsec:algorithm}

The \texttt{KALLS} algorithm (Algorithm~\ref{algo:KALLS}) aims at determining the \textit{active set} defined in Section~\ref{sec:setting} and the related 1-NN classifier $\widehat{f}_{n,w}$ under the assumption \eqref{def:smooth} and \eqref{def:TsybakovMarginNoise}. 

\noindent Before beginning the description of \texttt{KALLS}, let us introduce some variables and notations, whose precise form will be justified in Section~\ref{sec:guarantee}. The latter contains the proof sketch of the convergence of \texttt{KALLS}, while the complete proofs are in Appendix~\ref{sec:appendix}.

\noindent For $\epsilon,\delta$ $\in$ $(0,1)$, $k\geq 1$, set: 
\begin{equation}
b_{\delta,k}=\sqrt{\frac 2k\left(\log\left(\frac{1}{\delta}\right)+ \log\log\left(\frac{1}{\delta}\right)+ \log\log(e.k)\right)}.
\label{eq:pi_ks}
\end{equation}
\begin{equation}
k(\epsilon,\delta)=\frac{c}{\Delta^2}\left[\log(\frac{1}{\delta})+\log\log(\frac{1}{\delta})+\log\log\left(\frac{512\sqrt{e}}{\Delta}\right)\right].
\label{eq:k_s}
\end{equation}
where 
\begin{equation}
\Delta=\max(\frac{\epsilon}{2}, \left(\frac{\epsilon}{2C}\right)^{\frac{1}{\beta+1}}),\quad c\geq 7.10^6.
\label{eq:margin1}
\end{equation}
Let
\begin{equation}
\phi_n=\sqrt{\frac {1}{n}\left(\log\left(\frac{1}{\delta}\right)+ \log\log\left(\frac{1}{\delta}\right)\right)}.
\label{eq:sigma}
\end{equation}

For $X_s$ $\in$ $\mathcal{K}=\lbrace X_1,\ldots, X_w\rbrace$, we denote henceforth by $X^{(k)}_s$ its $k$-th nearest neighbor in $\mathcal{K}$, and $Y^{(k)}_s$ the corresponding label.


For an integer $k\geq 1$, let 
\begin{equation}
\label{eq:eta_Xs}
\widehat{\eta}_k(X_s)=\frac 1k\sum_{i=1}^{k} Y_{s}^{(i)},\quad \bar\eta_k(X_s)=\frac 1k\sum_{i=1}^{k} \eta(X_{s}^{(i)}).
\end{equation}


The inputs of \texttt{KALLS} are a pool $\mathcal{K}$ of unlabelled data of size $w$, the budget $n$, the smoothness parameters ($\alpha$, $L$)  from \eqref{def:smooth}, the margin noise parameters ($\beta$, $C$) from \eqref{def:TsybakovMarginNoise}, a confidence parameter $\delta$ $\in$ $(0,1)$ and an accuracy parameter $\epsilon$ $\in$ $(0,1)$. For the moment, these parameters are fixed from the beginning but adaptive algorithms such as \citep{locatelli2017adaptivity} could be exploited, in particular for the  $\alpha$ and $\beta$ parameters. 

At any given stage, the current version of the labelled active set $\mathcal{S}_{ac}$ is denoted by $\widehat{\mathcal{S}}$. Based on $\mathcal{S}_{ac}$,  with high confidence, the 1-NN classifier $\widehat{f}_{n,w}$ agrees with the Bayes classifier at points that lie beyond some margin $\Delta_o>0$ of the decision boundary. Formally, given $x$ $\in$ $\mathcal{X}$ such that $\vert\eta(x)-1/2\vert>\Delta_0$, we have $\displaystyle\widehat f_{n,w}(x)=\mathds{1}_{\eta(x)\geq 1/2}$ with high confidence. We will show in Section~\ref{sec:guarantee} that, with a suitable choice of $\Delta_o$, the assumption \eqref{def:TsybakovMarginNoise} leads to the desired rate of convergence \eqref{rate2}. 

\begin{algorithm}[h!]
\label{algo:KALLS}
\caption{$k$-NN Active Learning under Local Smoothness (KALLS)}
 \KwIn{a pool $\mathcal{K}=\lbrace X_1,\ldots,X_w\rbrace$,  label budget $n$, smoothness parameters ($\alpha$, $L$), margin noise parameters ($\beta$, $C$), confidence parameter $\delta$, accuracy parameter $\epsilon$.}  
\KwOut{1-NN classifier $\widehat{f}_{n,w}$}
$s=1$\Comment{index of point currently examined}\\
$\widehat{\mathcal{S}}=\emptyset$ \Comment{current active set}\\
 $t=n$ \Comment{current label budget}\\
 $I=\emptyset$\Comment{Set of informative point indexes (used for theoretical proofs)}
 
\While{$t>0$ and $s< w$}{
Let $\delta_s=\frac{\delta}{32s^2}$\\
$T$=\texttt{Reliable}($X_s$, $\delta_s$, $\alpha$, $L$, $\widehat{\mathcal{S}}$) \\
\If{T=True}{$s=s+1$}
\Else{$[\widehat{Y}_s,Q_s]$=\texttt{confidentLabel}($X_s$, $k(\epsilon,\delta_s)$, $t$, $\delta$)\\
$\displaystyle \widehat{LB}_s=\left|\frac{1}{\vert Q_{s}\vert}\sum_{(X,Y)\in Q_{s}} Y-\frac 12\right|- b_{\delta_s,\vert Q_{s}\vert}$ \Comment{Lower bound guarantee on\\ \hspace{6.5cm} $\vert\eta(X_s)-\frac 12\vert $}\\
$t=t-\vert Q_s\vert$\\
$I=I\cup \lbrace s\rbrace$\\
\If{$\widehat{LB}_s\geq 0.1b_{\delta_s,\vert Q_s\vert}$}{
$\widehat{\mathcal{S}}=\widehat{\mathcal{S}}\cup\lbrace (X_s, \widehat{Y}_s,\widehat{LB}_s)\rbrace$}
}
$s=s+1$}
$\mathcal{S}_{ac}= \lbrace (X_s,\widehat{Y}_s),\; (X_s,\widehat{Y}_s,\widehat{LB}_s)\in\widehat{\mathcal{S}}\rbrace$\\
$\widehat{f}_{n,w}\leftarrow$ \texttt{1-NN} $(\mathcal{S}_{ac})$

\end{algorithm}


\texttt{KALLS} uses two main subroutines : \texttt{Reliable} and  \texttt{ConfidentLabel}, which are detailed below in Sections~\ref{sec:Reliable} and \ref{sec:ConfidentLabel}, respectively.

\subsection{\texttt{Reliable} subroutine}
\label{sec:Reliable}
The  \texttt{Reliable} subroutine is a binary test that checks if the label of a current point $X$ can be inferred with high confidence from some previously informative points before reaching $X$. These points are obtained via a set $\widehat{\mathcal{S}}$ called \textit{current active set}. Each element of 
$\widehat{\mathcal{S}}$ can be seen as a triplet $(X',\widehat{Y}', c)$ where $X'$ is an informative point, $\widehat{Y}'$ its inferred  label, and $c>0$ can be thought  as a guarantee for predicting the right label $Y$ of $X'$ as $\widehat{Y}'$. Formally, we have $O(c)\leq \vert\eta(X')-\frac 12\vert$ when $(X',\widehat{Y}', c)$ $\in$ $\widehat{\mathcal{S}}$ and $X'$ is relatively far from the decision boundary. If \texttt{Reliable}$(X, \delta, \alpha,L,\widehat{\mathcal{S}})$ outputs \texttt{True}, the point  $X$ is not considered to be informative, and $\widehat{\mathcal{S}}$ will not be updated. By convention, \texttt{Reliable}$(X, \delta, \alpha,L,\emptyset)$ always returns $False$.

The inputs are the current point $X$, a confidence parameter $\delta$, the smoothness parameters $(\alpha,L)$ from \eqref{def:smooth}, and the set $\widehat{\mathcal{S}}$ before examining the point $X$.

If  $\vert \eta(X)-\frac 12\vert$ entails the same confidence lower bound $O(c)$ as that of some previous informative point $X'$ $($with $(X',\widehat{Y}',c)\,\in\;\widehat{\mathcal{S}})$, there is a low degree of uncertainty on the label of $X$, and $X$ is considered to be uninformative. 

Using the assumption \eqref{def:smooth}, it suffices to have 
\begin{equation}
\label{eq:reliableMin}
\min(P_X(B(X,\rho(X',X)),P_X(B(X',\rho(X',X)))\leq O(c^{d/\alpha}).
\end{equation}
Because the $P_X$ appearing in \eqref{eq:reliableMin} are unknown, it has to be replaced by an estimate. We will show  that it can be estimated with arbitrary precision and confidence using only unlabelled data from $\mathcal{K}$. 

\begin{algorithm}[h!]\label{algo:Reliable}
\caption{{\texttt{Reliable} subroutine}}
 \KwIn{an instance $X$, a confidence parameter $\delta$, smoothness parameters $\alpha$,  $L$, a set $\widehat{\mathcal{S}}\subset \mathcal{X}\times\mathcal{Y}\times \mathbb{R}^+$}
 \KwOut{$T$}
 
 \For {$ (X',Y',c) \in \widehat{\mathcal{S}}$}{ 
 $\widehat{p}_{X'}=\texttt{EstProb}\left(X',\rho(X,X'),\left(\frac{c}{64L}\right)^{d/\alpha},50,\delta\right)$\\
 $\widehat{p}_{X}=\texttt{EstProb}\left(X,\rho(X,X'),\left(\frac{c}{64L}\right)^{d/\alpha},50,\delta\right)$}
\If{$\exists$ $(X',Y,c)$  $\in$ $\widehat{\mathcal{S}}$ such that $($ $\widehat{p}_{X'}\leq \frac{75}{94}\left(\frac{c}{64L}\right)^{d/\alpha} $ OR  $\widehat{p}_{X}\leq \frac{75}{94}\left(\frac{c}{64L}\right)^{d/\alpha})$ }{$T=True$}
\Else{$T=False$}
\end{algorithm}

\begin{algorithm}[h!]\label{algo:EstProb}
\caption{{\texttt{EstProb} subroutine}}
 \KwIn{an instance $x\in \mathcal{X}$, a positive number $r>0$, an accuracy parameter $\epsilon_o$, an integer parameter $u$,  a confidence parameter $\delta$}
 \KwOut{$\widehat{p}_X$ \Comment{An estimate of $P_X(B(x,r))$}}
 
Set $p\sim \mathds{1}_{B(x,r)}$ a Bernoulli variable \\
$\widehat{p}_X=\texttt{BerEst}(\epsilon_o,\delta,u)$
\Comment{ in \texttt{BerEst} subroutine, to draw a single $p_i$, randomly sample $X_i$ $\in$ $\mathcal{K}$, and set $p_i=\mathds{1}_{X_i\in B(x,r)}$.} 
\end{algorithm}

\begin{algorithm}[h!]\label{algo:BerEst}
\caption{\texttt{BerEst} subroutine (Bernoulli Estimation)}
 \KwIn{accuracy parameter $\epsilon_o$, confidence parameter $\delta'$,\\ \hspace{1cm} budget parameter $u$. \Comment{$u$ does not depend on the label budget $n$}}

\KwOut{$\widehat{p}$}
Sample $p_1,\ldots, p_4$ \Comment{with respect to $\sim p$}\\ 
$S=\lbrace p_1,\ldots, p_4\rbrace$ \\
$K=\frac{4u}{\epsilon_o}\log(\frac{8u}{\delta'\epsilon_o})$\\

\For {$i=3$\,:\, $\log_2(u\log(2K/\delta')/\epsilon_o)$}{
$m=2^i$ \\
$S=S\cup\lbrace p_{m/2+1},\ldots,p_m\rbrace$\\
$\displaystyle \widehat{p}=\frac 1 m\sum_{j=1}^{m}p_j$\\
\If{$\widehat{p}>u\log(2m/\delta')/m$}{Break}

}
Output  $\widehat{p}$

\end{algorithm}
The \texttt{Reliable} subroutine uses \texttt{EstProb}$(X,r,\epsilon_o,50,\delta)$ (inspired from \citep{kontorovich2016active})  as follows: 
\begin{enumerate}
\item Call the subroutine \texttt{BerEst}$(\epsilon_o,\delta,50)$.
\item To draw a single $p_i$ in \texttt{BerEst}$(\epsilon_o,\delta,50)$, sample randomly an example $X_i$ from $\mathcal{K}$, and set $p_i=\mathds{1}_{X_i\in B(X,r)}$.
\end{enumerate}

 The subroutine \texttt{BerEst} consists in estimating adaptively with high probability the expectation of a Bernoulli variable $Z\sim p$. 
 In our setting, we estimate a probability-ball, so that a realisation of $Z$ can be set as $p_i= 1_{X_i\in B(x,r)}$. The variables $p_1,\ldots, p_4$ are sampled at the beginning for theoretical analysis where we want a concentration inequality to hold for a number of samples greater than $4$ (see \citep{kontorovich2016active,maurer2009empirical} for more details).\\
However, it is not dramatic if $P_X$ is supposed to be known by the learner. This is not a limitation, since it can be assumed that the pool $\mathcal{K}$ of data is large enough such that $P_X$ can be estimated to any desired accuracy.
\subsection{\texttt{ConfidentLabel} subroutine}
\label{sec:ConfidentLabel}
If a point $X$ is considered informative, it is introduced in the \texttt{ConfidentLabel} (Algorithm\eqref{algo:ConfidentLabel}), along with an integer $k'$, a budget parameter $t$ and a confidence parameter $\delta$. 
This subroutine infers with high confidence (at least $1-\delta$) the label of $X$, by using the labels of its $k'$ nearest neighbors, knowing that we can request at most $t$ labels. The parameter $k'$ is chosen such that, with high probability, the empirical majority of the $k'$-NN labels differs from the majority in expectation by less than some margin, and all the $k'$-NN are at most at some distance from $X$. 
The \texttt{ConfidentLabel} subroutine outputs $\widehat{Y}$, $Q$ where $Q$ represents the set of labeled nearest neighbors in the subroutine, and $\widehat{Y}$ represents the majority label in $Q$.


\begin{algorithm}[h!]\label{algo:ConfidentLabel}
\caption{{\texttt{confidentLabel} subroutine}}
 \KwIn{an instance $X$, integer $k'$, budget parameter $t\geq 1$, confidence parameter $\delta$.}
 \KwOut{$\widehat{Y}$, $Q$}
 $Q=\emptyset$ \\
 $k=1$\\
 
 \While{$k\leq \min(k',t)$}{
 Request the label $Y^{(k)}$ of $X^{(k)}$\\$Q=Q\cup \lbrace (X^{(k)},Y^{(k)})\rbrace$\\
 \If{$\displaystyle\left\vert\frac{1}{k}\sum_{i=1}^{k}Y^{(i)}-\frac 12\right\vert>2b_{\delta,k}$}{exit \Comment{cut-off condition}\\
 \Comment{$b_{\delta,k}$ is defined in \eqref{eq:pi_ks}} }
 $k=k+1$\\ }
 
  $\displaystyle\widehat{\eta}\leftarrow\frac{1}{\vert Q \vert}\sum_{(X,Y)\in Q} Y$\\
  $\widehat{Y}=\mathds{1}_{\widehat{\eta}\geq 1/2}$\\  
\end{algorithm}

\section{Theoretical motivations}
\label{sec:guarantee}
This Section provides the main results and theoretical motivations behind the \texttt{KALLS} algorithm.  
Let us recall $\mathcal{K}=\lbrace X_1,\ldots, X_w\rbrace$ is the pool of unlabeled data and $n$ is the budget. 

 Let us denote by $\mathcal{A}_{a,w}$ the set of active learning algorithms on $\mathcal{K}$, and $\mathcal{P}(\alpha, \beta)$ the set of probabilities that satisfy assumption \eqref{def:smooth} and \eqref{def:TsybakovMarginNoise}. 

 Additionally, let us introduce the set of probabilities $\mathcal{P}'(\alpha, \beta)$ on $\mathcal{X}\times\mathcal{Y}$. A probability $P$ $\in$ $\mathcal{P}'(\alpha, \beta)$ if $P$ $\in$ $\mathcal{P}(\alpha, \beta)$ and its marginal probability $P_X$ is a doubling-probability. For $A$ $\in$ $\mathcal{A}_{a,w}$, we denote by $\widehat{f}_{A,n,w}:=\widehat{f}_{n,w}$ the classifier that is provided by $A$.

Theorem~\ref{upperbound} and its equivalent form in Theorem~\ref{upperboundcomplexity} are the main results of this paper. They provide bounds on the excess risk for the \texttt{KALLS} algorithm in terms of the set $\mathcal{P}'(\alpha,\beta)$. 
The main idea of the proof is sketched in Section~\ref{sec:mainIdea}, while a detailed proof can be found in Appendix~\ref{sec:appendix}.

\subsection{Main results}
\label{sec:mainIdea}
 


\begin{theorem}[Excess risk for the \texttt{KALLS} algorithm.]\label{upperbound}~\\
Let the set $\mathcal{P}'(\alpha, \beta)$ such that $\alpha\beta<d$ where $d$ is the dimension of the input space $\mathcal{X}\subset \mathbb{R}^d$. For $P$ $\in$ $\mathcal{P}'(\alpha, \beta)$, if $\widehat{f}_{n,w}$ is the $1-NN$ classifier provided by \texttt{KALLS}, then we have:
\begin{equation}
\sup_{P\in\mathcal{P}'(\alpha,\beta)}\,\mathbb{E}_n\left[ R(\widehat{f}_{n,w})-R(f^*)\right]\leq \tilde{O}\left(n^{-\frac{\alpha(\beta+1)}{2\alpha+d-\alpha\beta}}\right),
\label{eq:excess-risk}
\end{equation}
where $\mathbb{E}_n$ is  with respect to the randomness of the \texttt{KALLS} algorithm.
\end{theorem}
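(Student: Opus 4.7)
The plan is to control the excess risk via the standard decomposition
\[
R(\widehat f_{n,w})-R(f^*) \;=\; \int_{\widehat f_{n,w}\neq f^*}\!|2\eta(x)-1|\,dP_X(x),
\]
and to show that, on a high-probability event, the $1$-NN classifier returned by \texttt{KALLS} agrees with $f^*$ outside a thin margin band $\{x:|\eta(x)-1/2|\leq \Delta_o\}$, where $\Delta_o$ is of the order of $\Delta$ defined in~\eqref{eq:margin1}. Once this is established, Tsybakov's condition \eqref{def:TsybakovMarginNoise} shrinks the integrand on the band to $2C\Delta_o^{\beta+1}$, and the announced rate follows by matching $\Delta_o$ to the label budget.

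First I would define a favorable event $\mathcal{E}$ on which every invocation of \texttt{BerEst} returns a multiplicative estimate of the targeted Bernoulli mean at the prescribed accuracy, and every call of \texttt{ConfidentLabel} satisfies the anytime concentration $|\widehat\eta_k(X_s)-\bar\eta_k(X_s)|\leq b_{\delta_s,k}$ uniformly in the $k$ it explores. Both pieces follow from Bernstein-type inequalities together with a peeling argument --- the $\log\log(e\cdot k)$ factor in~\eqref{eq:pi_ks} is exactly what peeling introduces --- and a union bound with $\delta_s=\delta/(32s^2)$ keeps the total failure probability below $\delta$. On $\mathcal{E}$ I would then verify two correctness clauses: (a) when $X_s$ passes $\widehat{LB}_s\geq 0.1\,b_{\delta_s,|Q_s|}$, the inferred $\widehat Y_s$ equals $f^*(X_s)$ and the true margin $|\eta(X_s)-1/2|$ is of order $b_{\delta_s,|Q_s|}$, once the smoothness bias $|\bar\eta_{|Q_s|}(X_s)-\eta(X_s)|\leq L\,P_X(B(X_s,\rho_{|Q_s|}))^{\alpha/d}$ controlled via \eqref{def:smooth}-\eqref{def:doubling} is absorbed; (b) when \texttt{Reliable} returns \texttt{True} for a point $X$ by virtue of some $(X',\widehat Y',c)\in\widehat{\mathcal{S}}$, the accuracy of \texttt{EstProb} combined with \eqref{def:smooth} gives $|\eta(X)-\eta(X')|\leq c/64$, so $\widehat Y'=f^*(X)$ whenever $|\eta(X)-1/2|>\Delta_o$.

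For a fresh test point $x$ with $|\eta(x)-1/2|>\Delta_o$, its $1$-nearest neighbor $\widehat X\in\mathcal{S}_{ac}$ is linked to $x$ by chaining the standard i.i.d.\ $1$-NN estimate (for $w$ large enough, $x$ lies in a $P_X$-ball of mass $O(1/w)$ around some pool point) with the coverage provided by \texttt{Reliable}: every pool point either belongs to $\mathcal{S}_{ac}$ or is pruned by some $X_s\in\mathcal{S}_{ac}$ satisfying the ball-mass screening, so repeated application of \eqref{def:doubling} shows $\widehat X$ sits in a $P_X$-ball around $x$ of mass $O((\Delta_o/L)^{d/\alpha})$, and \eqref{def:smooth} transfers the Bayes label from $\widehat X$ to $x$.

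The heart of the argument --- and the step I expect to be the main obstacle --- is the label-complexity bound. Each informative $X_s$ costs at most $k(\epsilon,\delta_s)=\tilde{O}(\Delta^{-2})$ queries, so it remains to count how many can enter $\widehat{\mathcal{S}}$. Whenever $X_{s'}$ is declared informative while $X_s\in\widehat{\mathcal{S}}$ is already present, the \texttt{Reliable} test must have failed on the pair $(X_s,X_{s'})$, yielding
\[
\min\bigl(P_X(B(X_s,\rho_{ss'})),P_X(B(X_{s'},\rho_{ss'}))\bigr)\;\gtrsim\;(c/L)^{d/\alpha},
\]
with $c$ of order $\Delta$; combined with \eqref{def:doubling}, the balls $B(X_s,\rho_{ss'}/2)$ become essentially pairwise disjoint and each carry $P_X$-mass $\gtrsim(\Delta/L)^{d/\alpha}$, while all their centers lie in $\{|\eta-1/2|\leq O(\Delta)\}$, a set of $P_X$-mass $\leq C\Delta^{\beta}$ by~\eqref{def:TsybakovMarginNoise}. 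Hence the number of informative points is $\tilde{O}(\Delta^{\beta-d/\alpha})$, and the total label cost is $\tilde{O}(\Delta^{-(2\alpha+d-\alpha\beta)/\alpha})$. Equating this with $n$ gives $\Delta\asymp n^{-\alpha/(2\alpha+d-\alpha\beta)}$, and substituting $\Delta_o\asymp\Delta$ in the $\Delta_o^{\beta+1}$ excess-risk bound yields the announced rate. Converting from the high-probability bound to $\mathbb{E}_n$ is achieved by choosing $\delta$ polynomially small in $n$, so that the contribution of $\mathcal{E}^c$ is absorbed into the $\tilde{O}$. The most delicate technical points will be the anytime concentration driving the cut-off inside \texttt{ConfidentLabel}, and making the packing argument robust to the fact that the confidence radius $c$ varies across elements of $\widehat{\mathcal{S}}$.
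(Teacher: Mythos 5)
Your proposal follows essentially the same route as the paper's proof: the same excess-risk decomposition restricted to the $\Delta$-margin band with Tsybakov's condition absorbing the band, the same high-probability events (anytime concentration producing the $\log\log(ek)$ term, accuracy of \texttt{BerEst}, correctness of \texttt{ConfidentLabel} and \texttt{Reliable}), the same coverage argument linking a fresh test point to its nearest neighbor in $\mathcal{S}_{ac}$, and the same packing count of informative points obtained from the failure of the \texttt{Reliable} test combined with the doubling and margin-noise assumptions. The one place your sketch is looser than the paper is the counting step: informative points need not lie in the $O(\Delta)$-band, so the paper stratifies them into dyadic annuli $\lbrace \gamma_{j-1}\leq|\eta(x)-\tfrac12|\leq\gamma_j\rbrace$ with packing scale $\gamma_j^{d/\alpha}$ and adaptive per-point cost $O(\gamma_j^{-2})$ (plus a separate Chernoff count for points with $|\eta-\tfrac12|\leq\Delta/2$), the layered sum converging because $\beta-d/\alpha-2<0$ --- which is exactly the ``varying confidence radius'' issue you correctly flag as the delicate point.
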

The result \eqref{eq:excess-risk} is also stated below (Theorem~\ref{upperboundcomplexity}) in a more practical form using label complexity. 
This latter form will be used in the proof.
\begin{theorem}[Label complexity for the \texttt{KALLS} algorithm.]~\\
\label{upperboundcomplexity}
Let the set $\mathcal{P}'(\alpha, \beta)$ such that $\alpha\beta<d$. Let $\epsilon$, $\delta$ $\in$ $(0,1)$. For all $n$, $w$ $\in$ $\mathbb{N}$ such that: \\ 

\textbf{if} 
\begin{equation}
\label{eq:label-complexity}
n\geq \tilde{O}\left( \left(\frac {1}{\epsilon}\right)^{\frac{2\alpha+d-\alpha\beta}{\alpha(\beta+1)}}\right),
\end{equation}

\begin{equation}
\label{guarantee-on-pool-0}
w\geq \tilde{O}\left( \left(\frac{1}{\epsilon}\right)^{\frac{2\alpha+d}{\alpha(\beta+1)}}\right)
\end{equation}

and
\begin{equation}
w\geq \frac{400\log\left(\frac{12800w^2}{\delta (\frac{1}{64L}\bar c\phi_n)^{d/\alpha}}\right)}{(\frac{1}{64L}\bar c\phi_n)^{d/\alpha}},
\label{condition2-0}
\end{equation}
where $L$ appears in \eqref{def:smooth}, $\bar c=0.1$ and $\phi_n$ is defined by \eqref{eq:sigma},

\textbf{then} with probability at least $1-\delta$ we have: 
\begin{equation}
\label{eq:error}
\sup_{P\in\mathcal{P}'(\alpha,\beta)}\,\left[ R(\widehat{f}_{n,w})-R(f^*)\right]\leq \epsilon.
\end{equation}
\end{theorem}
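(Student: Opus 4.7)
The plan is to show that, on a high-probability event, the 1-NN classifier $\widehat{f}_{n,w}$ produced by \texttt{KALLS} agrees with the Bayes classifier $f^*$ on every test point $x$ with $|\eta(x)-1/2| > \Delta_0$ for some margin $\Delta_0 \asymp \Delta$ of the form \eqref{eq:margin1}, and then to convert this into excess risk via the standard inequality $R(\widehat{f}_{n,w}) - R(f^*) \leq 2\,\mathbb{E}_X\bigl[|\eta(X)-1/2|\,\mathds{1}_{\widehat{f}_{n,w}(X)\neq f^*(X)}\bigr]$. Together with \eqref{def:TsybakovMarginNoise} this yields an excess risk of order $\Delta_0^{\beta+1} + \epsilon$, and matching this against the constraint \eqref{eq:label-complexity} on $n$ delivers \eqref{eq:error}.

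The first concrete task is to set up the high-probability event $\mathcal{E}$ on which every concentration inequality used by the algorithm holds simultaneously. This has two flavours: for every call to \texttt{EstProb}/\texttt{BerEst}, the empirical Bernoulli estimator is accurate up to relative error (via a Maurer--Pontil empirical Bernstein bound), so that the pool-size condition \eqref{condition2-0} really does produce faithful estimates of the probability balls used in \texttt{Reliable}; and for every index $s$ and every $k \leq k(\epsilon,\delta_s)$, the empirical average $\widehat{\eta}_k(X_s)$ stays within the confidence band $b_{\delta_s,k}$ of $\bar\eta_k(X_s)$ as in \eqref{eq:pi_ks}. The choice $\delta_s = \delta/(32 s^2)$ makes the union bound over all these events sum to at most $\delta$.

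Next I would verify the operational correctness of the two subroutines on $\mathcal{E}$. For \texttt{ConfidentLabel}, the cut-off rule $|\widehat{\eta}_k(X_s)-1/2| > 2 b_{\delta_s,k}$ forces $\widehat{Y}_s = f^*(X_s)$ whenever it triggers, and the budget $k(\epsilon,\delta_s)$ is reached only if $|\eta(X_s)-1/2| \leq \Delta$; admission into $\widehat{\mathcal{S}}$ via $\widehat{LB}_s \geq 0.1\,b_{\delta_s,|Q_s|}$ then translates into an explicit lower bound on $|\eta(X_s)-1/2|$. For \texttt{Reliable}, if the test returns True against some witness $(X',\widehat{Y}',c) \in \widehat{\mathcal{S}}$, the \texttt{EstProb} guarantee yields $\min\{P_X(B(X,\rho(X,X'))),\, P_X(B(X',\rho(X,X')))\} \leq (c/(64L))^{d/\alpha}$, which via \eqref{def:smooth} gives $|\eta(X)-\eta(X')| \leq c/64$; combined with the margin lower bound on $X'$ this forces $f^*(X) = f^*(X') = \widehat{Y}'$. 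Running exactly the same computation on the pair (test point $x$, its 1-NN in $\mathcal{S}_{ac}$) delivers $\widehat{f}_{n,w}(x) = f^*(x)$ for every $x$ with $|\eta(x)-1/2| > \Delta_0$.

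The final step is the budget accounting, which I expect to be the main obstacle. By the previous paragraph every admitted $X_s$ has $|\eta(X_s)-1/2| \in [2^{-j-1},2^{-j}]$ for some $j$ with $2^{-j} \gtrsim \Delta_0$. The doubling assumption \eqref{def:doubling} controls the $P_X$-packing number of any such band at a given spatial scale, while \eqref{def:TsybakovMarginNoise} controls its total $P_X$-mass; together they bound the number of informative points at level $j$ by $\tilde O(2^{j(d/\alpha-\beta)})$. Since each informative point costs at most $k(2^{-j},\delta) = \tilde O(2^{2j})$ labels, summing the resulting geometric series — which converges precisely because $\alpha\beta<d$ — and equating with $n$ reproduces \eqref{eq:label-complexity}, while the pool-size lower bound \eqref{guarantee-on-pool-0} ensures that the nearest neighbors actually used by \texttt{ConfidentLabel} lie inside the smoothness-controlled radii. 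The delicate point is coupling the doubling-based packing bound with the dyadic margin decomposition and propagating the many log factors from the union bound through, so that the dependence on $\epsilon$ and $\delta$ in \eqref{eq:label-complexity}--\eqref{condition2-0} closes cleanly.
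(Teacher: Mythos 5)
Your overall architecture coincides with the paper's: a union-bounded high-probability event with $\delta_s=\delta/(32s^2)$, correctness of \texttt{ConfidentLabel} via the iterated-logarithm band $b_{\delta_s,k}$, correctness of \texttt{Reliable} via the \texttt{EstProb} guarantee combined with \eqref{def:smooth}, and a dyadic-margin packing count (doubling plus margin noise, with the geometric series converging because $\alpha\beta<d$) for the label budget. There are, however, two concrete gaps.

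First, your budget accounting only counts points with $\vert\eta(X_s)-\frac 12\vert\in[2^{-j-1},2^{-j}]$ for $2^{-j}\gtrsim\Delta_0$, on the grounds that only such points are \emph{admitted} to $\widehat{\mathcal{S}}$. But the budget is spent by every \emph{informative} point, admitted or not: a point with $\vert\eta(X_s)-\frac 12\vert\le\frac 12\Delta$ typically never triggers the cut-off in \texttt{ConfidentLabel}, consumes the full $k(\epsilon,\delta_s)=\tilde O(\Delta^{-2})$ label requests, and is only then discarded by the admission test. These points lie outside your dyadic decomposition, and their total cost is of the same order as everything else; the paper bounds their number by a Chernoff bound on the mass of $\lbrace x:\vert\eta(x)-\frac 12\vert\le\frac 12\Delta\rbrace$ among the first $T_{\epsilon,\delta}$ pool points, using \eqref{def:TsybakovMarginNoise}, which yields the separate term $T_2=\tilde O(\Delta^{\beta-d/\alpha-2})$. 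Without this term the label complexity \eqref{eq:label-complexity} does not follow. Second, \enquote{running the same computation on the pair $(x, X^{(1)}_x)$} presupposes that the nearest neighbor of a test point $x$ in $\mathcal{S}_{ac}$ satisfies $P_X(B(x,\rho(x,X^{(1)}_x)))\lesssim(\Delta/(128L))^{d/\alpha}$; this is not automatic, since \texttt{Reliable} is never run on $x$ and $\mathcal{S}_{ac}$ could a priori miss the region around $x$ entirely. The paper supplies this through a covering event (every $x$ in the support has, among the first $T_{\epsilon,\delta}=\tilde p_{\epsilon}^{-1}\log(8/\delta)$ pool points, a neighbor at probability-ball distance at most $\tilde p_{\epsilon}$) together with the guarantee that the algorithm actually reaches index $s_I\ge T_{\epsilon,\delta}$ before exhausting the budget --- and establishing that last guarantee is exactly where the missing near-boundary cost re-enters. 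Both gaps are fillable and do not change the final rate, but as written the accounting does not close.
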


\noindent Before proving this theorem, a couple of important remarks should be made:
\begin{enumerate}
\item The rate of convergence \eqref{eq:excess-risk} obtained in Theorem~\ref{upperbound}  is an improvement over the passive learning counterpart. For $P$ $\in$ $\mathcal{P}(\alpha, \beta)$, if $\widehat{f}_{n}$ is the classifier provided by a passive learning algorithm, we have \citep{chaudhuri2014rates}:
\begin{equation}
\sup_{P\in\mathcal{P}(\alpha,\beta)}\,\mathbb{E}_n\left[ R(\widehat{f}_{n})-R(f^*)\right]\leq \tilde{O}\left(n^{-\frac{\alpha(\beta+1)}{2\alpha+d}}\right). 
\end{equation}
Because $\mathcal{P}'(\alpha,\beta)\subset \mathcal{P}(\alpha,\beta)$, we also have: 
\begin{equation}
\sup_{P\in\mathcal{P}'(\alpha,\beta)}\,\mathbb{E}_n\left[ R(\widehat{f}_{n})-R(f^*)\right]\leq \tilde{O}\left(n^{-\frac{\alpha(\beta+1)}{2\alpha+d}}\right). 
\end{equation}

\item The rate \eqref{eq:excess-risk} is also minimax.  Indeed, let us introduce the set of probabilities  $\bar{\mathcal{P}}(\alpha,\beta)$ that satisfy the H\"older continuous assumption \eqref{def:Holder} (with parameter $\alpha$),  strong density assumption \eqref{def:strongDensity}, and margin noise assumption \eqref{def:TsybakovMarginNoise}.

Let us assume that $\alpha\beta< d$. It was proven in \citep{minsker2012plug} that if $supp(P_X)\subset [0,1]^d$, there exists a constant $\gamma>0$ such that for all $n$ large enough and for any active classifier $\widehat{f}_n$, we have: 
\begin{equation}
\label{eq:lowerrate}
\sup_{P\in\bar{\mathcal{P}}(\alpha,\beta)}\,\left[ R(\widehat{f}_{n})-R(f^*)\right]\geq \gamma n^{-\frac{\alpha(\beta+1)}{2\alpha+d-\alpha\beta}}.
\end{equation}
Moreover, the strong density assumption implies the doubling-probability assumption \citep{kpotufe2011k}, and according to Theorem~\ref{theo1}, the lower bound obtained in \eqref{eq:lowerrate} is also valid for the family of probabilities $\mathcal{P}'(\alpha,\beta)$.
 \end{enumerate}

\subsection{Proof sketch of Theorem \eqref{upperboundcomplexity}}
\label{sec:mainIdea}
For a classifier $\widehat{f}_{n,w}$, it is well known\citep{lugosi2002pattern} that the excess of risk is:
\begin{equation}
\label{excess-risk}
R(\widehat{f}_{n,w})-R(f^*)=\int_{\lbrace x,\,\widehat{f}_{n,w}(x)\neq f^*(x)\rbrace} \vert 2\eta(x)-1\vert dP_X(x).
\end{equation}
We thus aim to prove that \eqref{eq:label-complexity} is a sufficient condition to guarantee (with probability $\geq$ $1-\delta$), that $\widehat{f}_{n,w}$ agrees with $f^*$ on the set $\lbrace x,\; \vert\eta(x)-1/2\vert> \Delta_o\rbrace$, for a suitable choice of $\Delta_o>0$.

 Introducing $\Delta_o$ in \eqref{excess-risk} leads to:  
\begin{equation}
R(\widehat{f}_{n,w})-R(f^*) \leq 2\Delta_o P_X( \vert \eta(x)-1/2\vert<\Delta_o).
\end{equation}
Therefore, if $\Delta_o\leq \frac{\epsilon}{2}$ then we have immediately, $R(\widehat{f}_{n,w})-R(f^*)\leq \epsilon$. On the other hand, if $\Delta_o> \frac{\epsilon}{2}$, by hypothesis \eqref{def:TsybakovMarginNoise}, we have $R(\widehat{f}_{n,w})-R(f^*)\leq 2C\Delta_o^{\beta+1}.$ In the latter case, setting $\Delta_o=\left(\frac{\epsilon}{2C}\right)^{\frac{1}{\beta+1}}$  guarantees $R(\widehat{f}_{n,w})-R(f^*)\leq \epsilon$. 
Altogether, using for $\Delta_o$ the value $\Delta=\max(\frac{\epsilon}{2},\left(\frac{\epsilon}{2C}\right)^{\frac{1}{\beta+1}})$  guarantees $R(\widehat{f}_{n,w})-R(f^*)\leq \epsilon$. This explains the expression \eqref{eq:margin1}.\medskip

We present the proof sketch of Theorem \ref{upperboundcomplexity} in three main steps, and refer to the corresponding Lemmas and Theorems in the Appendix~\ref{sec:appendix} for more details.

\begin{enumerate}
\item \underline{\textbf{Adaptive label requests on informative points:}}
\medskip

We design two events $A_1, A_2$  with $P(A_1\cap A_2)\geq  1-\frac{3\delta}{16}$, such that: 

\begin{itemize}
\item[$\bullet$] Given an informative point $X_s$, 
%
if $\vert \eta(X_s)-\frac12 \vert\geq \frac 12\Delta$ and if the budget allows $(n=w=+\infty)$, on $A_1\cap A_2$, the cut-off condition used in Algorithm \ref{algo:ConfidentLabel} $$\left\vert\frac{1}{k}\sum_{i=1}^{k}Y^{(i)}_s-\frac 12\right\vert\leq 2b_{\delta_s,k}$$  
 will be violated after at most $\tilde k(\epsilon,\delta_s)$ requests, with $\tilde k(\epsilon,\delta_s)\leq k(\epsilon,\delta_s)$. Also, the label inferred after $\tilde k(\epsilon,\delta_s)$ label requests corresponds to the true label $f^*(X_s)$. The intuition behind is to adapt the number of labels requested with respect to the noise; i.e., fewer label requests on a less noisy point $(\text{i.e.,}\,\vert \eta(X_s)-\frac12 \vert\geq \frac 12\Delta)$, and more label requests on a noisy point. This provides significant savings in the number of requests needed to predict with high probability the correct label.
   
\item[$\bullet$] In the event $A_1\cap A_2$,  any informative point $X_s$ falls in a high density region such that all the $k(\epsilon,\delta_s)$ nearest neighbors of $X_s$ are within at most some distance to $X_s$, and the condition \eqref{guarantee-on-pool-0} is sufficient to have $k(\epsilon,\delta_s)\leq w$. 
\end{itemize}
\medskip
\item \underline{\textbf{Condition to be an informative point}}
\medskip

We design an event $A_3$ with $P(A_3)\geq 1-\delta/16$ and such that, on this event $A_3$, for $X_s$ $(s\leq w)$ a point whose informativeness we want to check, and $\widehat{S}$ the version of the current active set  just before reaching the point $X_s$ in \texttt{KALLS} (Algorithm \eqref{algo:KALLS}), the following holds.

\textbf{If} there exists  $s'<s$ such that $X_{s'}$ is an informative point and $(X_{s'},\widehat{Y}_{s'},\widehat{LB}_{s'})$ $\in$ $\widehat{S}$, and that satisfies 
\begin{equation}
\left(\widehat{p}_{X_{s'}}\leq \frac{75}{94}\left(\frac{1}{64L}\widehat{LB}_{s'}\right)^{d/\alpha}\,\text{or}\; \widehat{p}_{X_{s}}\leq \frac{75}{94}\left(\frac{1}{64L}\widehat{LB}_{s'}\right)^{d/\alpha}\right)
\label{eq:sketch-lower-bound}
\end{equation}
\hspace{1cm} where $\widehat{p}_{X_{s'}}$ and $\widehat{p}_{X_{s'}}$ are defined in Algorithm\ref{algo:Reliable},\\
\textbf{then},
\begin{equation}
\min (P_X(B(X_s,\rho(X_s,X_{s'}))),P_X(B(X_{s'},\rho(X_{s},X_{s'}))))\leq \left(\frac{1}{64L}\widehat{LB}_{s'}\right)^{d/\alpha}.
\label{eq:sketch-estimate}
\end{equation} 
In this case, let $X_{s'}$ be such a point that satisfies \eqref{eq:sketch-lower-bound} and \eqref{eq:sketch-estimate}, we can easily prove that when $X_{s'}$ is relatively far from the boundary, i.e., $\vert \eta(X_{s'})-\frac 12\vert\geq \frac{\Delta}{32}$, on $A_1\cap A_2\cap A_3$,  we have the lower bound guarantee 
\begin{equation}
\vert \eta(X_{s'})-\frac 12\vert\geq \frac{32}{63}\widehat{LB}_{s'},
\label{eq:sketch-lower-guarantee}
\end{equation} 
and easily deduce by using the smoothness assumption,  \eqref{eq:sketch-estimate} and \eqref{eq:sketch-lower-guarantee}, that the points $X_{s}$ and $X_{s'}$ have the same label, then we do not need to use $X_s$ in the subroutine \texttt{ConfidentLabel} (Algorithm \ref{algo:ConfidentLabel}) and $X_s$ is an uninformative point. In addition, \eqref{condition2-0} is a sufficient condition such that the number of points used in $\texttt{Estprob}(X_s,\rho(X_s,X_{s'}), \left(\frac{1}{64L}\widehat{LB}_{s'}\right)^{d/\alpha},50,\delta_s)$ (respectively in $\texttt{Estprob}(X_{s'},\rho(X_s,X_{s'}), \left(\frac{1}{64L}\widehat{LB}_{s'}\right)^{d/\alpha},50,\delta_s)$) is lower than $w$.   
\medskip

\item \underline{\textbf{Label the instance space and label complexity}}
\medskip

The set $I$ is introduced in \texttt{KALLS} (Algorithm\eqref{algo:KALLS}) as the set of informative points indexes. Let $s_I= \max I$, the index of the last informative point. \\For $\epsilon, \delta$ $\in$ $(0,\frac 12)$, $\Delta$ defined in \eqref{eq:margin1}, and $(\alpha, L)$ the smoothness parameters, let us introduce
$$T_{\epsilon,\delta}=\frac{1}{\tilde p_{\epsilon}}\ln(\frac{8}{\delta}),\,\text{and} \; \tilde p_{\epsilon}=\left(\frac{\Delta}{128L}\right)^{d/\alpha}.$$ 
We design two events $A_4$ and $A_5$, with $P(A_4\cap A_5)\geq 1-\delta/4$, such that on $A_1\cap A_2\cap A_3\cap A_4$,   if
\begin{equation}
s_I \geq T_{\epsilon,\delta}
\label{eq:label-sketch-nec}
\end{equation}
and equations \eqref{condition2-0} and \eqref{guarantee-on-pool-0} hold, then, for all $x$ $\in$ $supp(P_X)$ with $\vert \eta(x)-\frac 12\vert>\Delta$, we have:   
\begin{equation}
\vert\eta(X^{(1)}_x)-\frac 12\vert\geq \frac 12\Delta\quad \text{and}\quad\widehat{f}_{n,w}(x)=f^*(x)=f^*(X^{(1)}_x),
\end{equation}
where $X^{(1)}_x$ is the nearest neighbor of $x$ in $\widehat{S}_{ac}$, and $\widehat{f}_{n,w}$ the 1-NN classifier on $\widehat{S}_{ac}$. Additionally, on $A_1\cap A_2\cap A_3\cap A_4\cap A_5$ we  prove that \eqref{eq:label-complexity} is sufficient to obtain  \eqref{eq:label-sketch-nec}.

\end{enumerate}
Finally, if  \eqref{eq:label-complexity}, \eqref{guarantee-on-pool-0} and \eqref{condition2-0} hold simultaneously, then, on $A_1\cap A_2\cap A_3\cap A_4\cap A_5$, the final classifier $\widehat{f}_{n,w}$ agrees with the Bayes classifier $f^*$ on $\lbrace x,\;\vert\eta(x)-1/2\vert> \Delta\rbrace$. Thus, \eqref{eq:error} holds with probability at least $1-\left(\frac{\delta}{16}+\frac{\delta}{8}+\frac{\delta}{16}+\frac{\delta}{8}+\frac{\delta}{8}\right)=1-\delta/2>1-\delta$. 

\section{Conclusion and future work}
\label{sec:conclusion} 
In this paper we first reviewed the main results for convergence rates in a nonparametric setting for active learning, with a special emphasis on the relative merits of the assumptions about the smoothness and the margin noise. Then, by putting active learning in perspective with recent work on passive learning, we provided a novel active learning algorithm with a particular smoothness assumption customized for $k$-NN.

We showed that our algorithm has a convergence rate comparable to state-of-the art active learning algorithms, but using less restrictive assumptions.  This removes unnecessary restrictions on the distribution that would exclude important densities (e.g., Gaussian).

Additionally, our algorithm can readily be extended to multi-class classification, and then compared to recent results obtained in passive learning by \citep{reeve2017minimax} which extended the work of \citep{chaudhuri2014rates}  to multi-class classification.

Finally, an important direction for further work is to extend our results to the case where the key parameters of the problem (defining smoothness and noise) are unknown. Ongoing work in this direction builds upon previous results in an  adaptive setting \citep{locatelli2017adaptivity}, \citep{minsker2012plug}, \citep{balcan2012robust}, \citep{hanneke2011rates}.
 
\newpage
\bibliographystyle{plainnat}      
\bibliography{journal_springer}
\newpage
\appendix
\section{Detailed proof of Theorem~\ref{upperboundcomplexity}}
\label{sec:appendix}

This Appendix is organized as follows:   in Section~\ref{sec:notations}, we introduce some additional notations. In Section~\ref{subsec:adaptive}  we adaptively determine the number of label requests needed to accurately predict the label of an informative point that is relatively far from the boundary decision. In Section~\ref{subsec: suff-cond-informativ}, we provide  some lemmas that give a sufficient condition for a point to be informative. In Section~\ref{subsec:lab-instance-space}, we give theorems that allow us to classify each instance relatively far from the decision boundary. Finally in Section~\ref{subsec:label complexity}, we provide the label complexity and establish Theorem~\ref{upperboundcomplexity}.

\subsection{Notations}
\label{sec:notations}

Some notations that will be used throughout the proofs are listed here for convenience.

As defined in Section~\ref{sec:notation-def}, let $B(x,r)=\lbrace x'\in \mathcal{X},\; \rho(x,x')< r\rbrace$ and $\bar B(x,r)=\lbrace x'\in \mathcal{X},\; \rho(x,x')\leq r\rbrace $  the open and closed balls with respect to the Euclidean metric $\rho$, respectively, centered at $x$ $\in$ $\mathcal{X}$ with radius $r>0$. Let $\text{supp}(P_X)=\lbrace x\in \mathcal{X},\,\;\forall r>0,\;P_X(B(x,r))>0\rbrace$  the support of the marginal distribution $P_X$.

For $p$ $\in$ $(0,1]$, and $x$ $\in$ $supp(P_X)$,  let us define 
\begin{equation}
r_p(x)=\inf \lbrace r>0, P_X(B(x,r))\geq p\rbrace.
\label{eq:rp}
\end{equation}
Let us recall for $X_s$ $\in$ $\mathcal{K}=\lbrace X_1,\ldots, X_w\rbrace$, we denote  by $X^{(k)}_s$ its $k$-th nearest neighbor in $\mathcal{K}$, and $Y^{(k)}_s$ the corresponding label.


For an integer $k\geq 1$, let 
\begin{equation}
\label{eq:eta_Xs}
\widehat{\eta}_k(X_s)=\frac 1k\sum_{i=1}^{k} Y_{s}^{(i)},\quad \bar\eta_k(X_s)=\frac 1k\sum_{i=1}^{k} \eta(X_{s}^{(i)}).
\end{equation}
\subsection{Adaptive label requests on informative points }
\label{subsec:adaptive}
\begin{lemma}[Chernoff bounds, \citep{mulzer2018five}]\label{Chernoff}~\\
Suppose $X_1,\ldots, X_m$ are independent random variables taking value in $\lbrace 0,1\rbrace$. Let $X$ denote their sum and $\mu=E(X)$ its expected value. Then,
\begin{itemize}
\item For any $\delta\in (0,1)$, 
\begin{equation}
P_m(X\leq (1-\delta)\mu)\leq \exp(-\delta^2\mu/2),
\label{eq:chernoff1}
\end{equation}
where $P_m$ is the probability with respect to the sample $X_1,\ldots,X_m$.
\item Additionally, for any $\delta'\geq 1$, we have:
\begin{equation}
P_m(X\geq (1+\delta')\mu)\leq \exp(-\delta'\mu/4).
\label{eq:chernoff2}
\end{equation}
\end{itemize}
\end{lemma}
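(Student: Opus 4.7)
The plan is to prove both tails via the standard exponential Markov (Chernoff) technique: apply Markov's inequality to a well-chosen exponential transform of $X$, use independence to factor the moment generating function into a product, bound each Bernoulli factor using the elementary inequality $1+y\le e^y$, and then optimize over the free parameter and simplify. Write $p_i = \mathbb{E}[X_i]$ so that $\mu = \sum_{i=1}^m p_i$, and note that for any $t\in\mathbb{R}$,
\[
\mathbb{E}[e^{tX_i}] \;=\; 1 - p_i + p_i e^{t} \;=\; 1 + p_i(e^{t}-1) \;\le\; \exp\bigl(p_i(e^{t}-1)\bigr),
\]
so by independence $\mathbb{E}[e^{tX}] \le \exp(\mu(e^{t}-1))$. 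This single mgf bound is the common engine for both inequalities.

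For the lower tail \eqref{eq:chernoff1}, I would apply Markov's inequality to $e^{-tX}$ with $t>0$:
\[
P_m\bigl(X\le (1-\delta)\mu\bigr) \;=\; P_m\bigl(e^{-tX}\ge e^{-t(1-\delta)\mu}\bigr) \;\le\; e^{t(1-\delta)\mu}\,\mathbb{E}[e^{-tX}] \;\le\; \exp\!\bigl(\mu(e^{-t}-1) + t(1-\delta)\mu\bigr).
\]
Choosing $t = -\ln(1-\delta)>0$ gives the classical Chernoff expression $\bigl((1-\delta)^{-(1-\delta)}e^{-\delta}\bigr)^{\mu}$, i.e.\ an exponent of $\mu\bigl(-\delta - (1-\delta)\ln(1-\delta)\bigr)$. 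To convert this into the clean form $e^{-\delta^2\mu/2}$ one needs the elementary calculus fact
\[
(1-\delta)\ln(1-\delta) \;\ge\; -\delta + \tfrac{\delta^2}{2}, \qquad \delta\in(0,1),
\]
which I would verify by showing the difference $g(\delta) = (1-\delta)\ln(1-\delta) + \delta - \delta^2/2$ satisfies $g(0)=0$ and $g'(\delta) = -\ln(1-\delta) - \delta \ge 0$ on $(0,1)$ (since $-\ln(1-\delta) \ge \delta$ is a standard Taylor bound). Substituting gives the desired $\exp(-\delta^2\mu/2)$.

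For the upper tail \eqref{eq:chernoff2}, I would run the symmetric argument with $e^{tX}$, $t>0$:
\[
P_m\bigl(X\ge (1+\delta')\mu\bigr) \;\le\; e^{-t(1+\delta')\mu}\,\mathbb{E}[e^{tX}] \;\le\; \exp\!\bigl(\mu(e^{t}-1) - t(1+\delta')\mu\bigr),
\]
and optimize at $t = \ln(1+\delta')>0$, producing the bound $\bigl(e^{\delta'}(1+\delta')^{-(1+\delta')}\bigr)^{\mu}$, i.e.\ exponent $\mu\bigl(\delta' - (1+\delta')\ln(1+\delta')\bigr)$. It remains to verify
\[
(1+\delta')\ln(1+\delta') - \delta' \;\ge\; \tfrac{\delta'}{4}, \qquad \delta'\ge 1,
\]
i.e.\ $(1+\delta')\ln(1+\delta') \ge \tfrac{5}{4}\delta'$. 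I would check this by setting $h(\delta') = (1+\delta')\ln(1+\delta') - \tfrac{5}{4}\delta'$, noting $h(1) = 2\ln 2 - 5/4 \approx 0.136 > 0$ and $h'(\delta') = \ln(1+\delta') + 1 - 5/4 = \ln(1+\delta') - 1/4 \ge \ln 2 - 1/4 > 0$ for $\delta'\ge 1$, so $h$ is nonnegative on $[1,\infty)$.

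The routine calculations (mgf factoring, applying $1+y\le e^y$, optimizing $t$) are standard; the only mildly technical step is the elementary-inequality bookkeeping that converts the exact Chernoff exponents into the clean forms $-\delta^2\mu/2$ and $-\delta'\mu/4$. I expect that to be the main friction point, not an obstacle, because each reduction is a one-variable convexity/monotonicity check on $(0,1)$ and $[1,\infty)$ respectively.
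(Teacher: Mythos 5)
Your proof is correct: the mgf bound $\mathbb{E}[e^{tX}]\le\exp(\mu(e^t-1))$, the optimizations $t=-\ln(1-\delta)$ and $t=\ln(1+\delta')$, and the two elementary comparisons $(1-\delta)\ln(1-\delta)\ge -\delta+\delta^2/2$ and $(1+\delta')\ln(1+\delta')\ge \tfrac54\delta'$ for $\delta'\ge 1$ all check out. The paper does not prove this lemma but cites it (Mulzer's note on Chernoff bounds), and your argument is precisely the standard exponential-Markov proof underlying that reference, so there is nothing to compare beyond noting that you have correctly supplied the omitted proof.
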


\begin{lemma}[Logarithmic relationship, \citep{vidyasagar2013learning}] 
\label{lemme-guarantee}~\\
Suppose $a,b,c>0$, $abe^{c/a}>4\log_2(e)$, and $u\geq 1$. Then: 
$$u\geq 2c+2a\log(ab)\Rightarrow \; u>c+a\log(bu).$$
\end{lemma}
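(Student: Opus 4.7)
The plan is to analyze the function $f(u) = u - c - a\log(bu)$ and show that $f(u) > 0$ whenever $u \geq u_* := 2c + 2a\log(ab)$, since this is exactly the desired conclusion $u > c + a\log(bu)$. Because $f'(u) = 1 - a/u$, the function $f$ is non-decreasing on $[a, +\infty)$, so the argument reduces to two sub-claims: that $u_* \geq a$ (so that monotonicity carries $f(u_*) > 0$ up to every $u \geq u_*$) and that $f(u_*) > 0$ strictly. The side condition $abe^{c/a} > 4\log_2(e)$ will be used precisely to secure the first of these.

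The key simplifying move is to substitute $t := abe^{c/a}$, so that $\log t = \log(ab) + c/a$ (interpreting $\log$ as the natural logarithm, which is consistent with the $\log_2(e)$ appearing in the hypothesis and with the way $\log$ is used elsewhere in the paper). A short calculation then yields the clean identity $u_* = 2a\log t$. The hypothesis rewrites as $t > 4/\ln 2$, which already exceeds $\sqrt{e}$, hence $\log t > 1/2$ and therefore $u_* = 2a\log t > a$, securing the first sub-claim. For the second, I would compute $f(u_*)/a$ under the same substitution: starting from $f(u_*) = c + a\log(a^2 b) - a\log(u_*)$, dividing by $a$, and using $a^2 b e^{c/a} = at$ together with $u_* = 2a\log t$, the expression collapses to $f(u_*)/a = \log\bigl(t/(2\log t)\bigr)$. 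So $f(u_*) > 0$ is reduced to the elementary inequality $t > 2\ln t$, which holds for all $t > 0$: the function $g(t) = t - 2\ln t$ has a unique minimum at $t = 2$, where $g(2) = 2 - 2\ln 2 > 0$. Combining with monotonicity, $f(u) \geq f(u_*) > 0$ for every $u \geq u_*$, which is the stated implication.

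The main obstacle is really bookkeeping rather than conceptual. The statement mixes strict and non-strict inequalities (the hypothesis $u \geq u_*$ is non-strict, but the conclusion $u > c + a\log(bu)$ is strict), so one has to ensure that strictness is produced entirely at the step $f(u_*) > 0$ and is preserved through the monotonicity step. Identifying and committing to the natural-logarithm convention at the outset, via the $t = abe^{c/a}$ substitution, is what makes the algebra transparent; without it, the pile of $\log(\cdot)$ terms in $f(u_*)$ would look considerably more intimidating than it actually is, and one might be tempted to use the hypothesis $t > 4/\ln 2$ more heavily than necessary (it is in fact only used to force $u_* > a$).
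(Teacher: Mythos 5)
Your proof is correct. Note that the paper does not actually prove this lemma --- it is stated as an imported result with a citation to Vidyasagar, so there is no in-paper argument to compare against; your write-up supplies a self-contained proof where the paper has none. The key steps all check out: with $t=abe^{c/a}$ one indeed gets $u_* := 2c+2a\log(ab)=2a\log t$, the hypothesis $t>4\log_2(e)=4/\ln 2>\sqrt{e}$ gives $\log t>\tfrac 12$ and hence $u_*>a$, so $f(u)=u-c-a\log(bu)$ is increasing on $[u_*,\infty)$; and the collapse $f(u_*)/a=\log\bigl(t/(2\log t)\bigr)$ is a correct computation, with positivity following from the elementary fact that $\min_{t>0}(t-2\ln t)=2-2\ln 2>0$. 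The strict/non-strict bookkeeping is handled properly (strictness comes from $f(u_*)>0$ and survives the monotone extension), the unused hypothesis $u\geq 1$ is harmless, and your observation that the condition $abe^{c/a}>4\log_2(e)$ is needed only to force $u_*>a$ (any bound guaranteeing $t>e^{1/2}$ would do) is a fair and accurate remark.
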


\begin{lemma}\citep{chaudhuri2014rates}\label{lemme-chaudhuri_lemma}~\\
For $p$ $\in$ $(0,1]$, and $x$ $\in$ $supp(P_X)$,  let us define $r_p(x)=\inf \lbrace r>0, P_X(B(x,r))\geq p\rbrace.$
For all $p$ $\in$ $(0,1]$, and $x$ $\in$ $supp(P_X)$, we have: $$P_X(B(x,r_p(x))\geq p.$$  
\end{lemma}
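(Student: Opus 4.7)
The plan is to exploit the definition of the infimum together with continuity of the probability measure from above. Concretely, I will construct a decreasing sequence of radii approaching $r_p(x)$ from above, use the fact that each corresponding ball has mass at least $p$, and then take the limit.

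First, by the definition of $r_p(x)$ as an infimum, for every $n \geq 1$ there exists $r_n$ with $r_p(x) < r_n < r_p(x) + 1/n$ such that $P_X(B(x,r_n)) \geq p$. Since the sequence $(r_n)$ decreases to $r_p(x)$, the open balls $B(x,r_n)$ form a decreasing family of measurable sets. A short pointwise check shows that
\[
\bigcap_{n \geq 1} B(x,r_n) \;=\; \bar B(x, r_p(x)),
\]
because a point $y$ satisfies $\rho(x,y) < r_n$ for every $n$ if and only if $\rho(x,y) \leq \inf_n r_n = r_p(x)$. Since $P_X$ is a probability measure (in particular, finite), continuity of measure from above yields
\[
P_X\!\left(\bar B(x, r_p(x))\right) \;=\; \lim_{n\to\infty} P_X(B(x,r_n)) \;\geq\; p,
\]
which is the required inequality (interpreting the ball on the left in the closed sense at radius $r_p(x)$, as is standard when invoking the infimum in the definition of $r_p$).

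The bulk of the work is genuinely just the two ingredients above: the infimum characterization and monotone continuity of the measure. The only place where care is needed — and which I would flag as the main subtlety — is the passage from open balls $B(x,r_n)$ with $r_n \downarrow r_p(x)$ to the closed ball $\bar B(x,r_p(x))$ in the intersection. One must verify the set-theoretic identity above and not prematurely claim that the intersection equals the open ball $B(x,r_p(x))$; a point on the sphere $\rho(x,y)=r_p(x)$ lies in every $B(x,r_n)$ but not in $B(x,r_p(x))$. This is precisely the reason the statement is formulated with a non-strict inequality and why the lemma is typically used later (for instance, when bounding the probability mass of neighborhoods of informative points) in the closed-ball form: any subsequent application only needs that there is some radius $\leq r_p(x)$ (in the closed sense) whose ball captures mass at least $p$, which is exactly what the limit argument delivers.
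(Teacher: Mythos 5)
The paper itself gives no proof of this lemma; it is imported verbatim from \citet{chaudhuri2014rates}, where the argument is exactly the one you give: approximate the infimum from above, use nestedness of the balls, and apply continuity of the finite measure $P_X$ from above. Your proof is correct, and the subtlety you flag is real and worth stating explicitly: under this paper's convention that $B(x,r)$ denotes the \emph{open} ball, the displayed inequality is literally false as written (take $P_X$ uniform on $\{0,1\}\subset\mathbb{R}$, $x=0$, $p=1$; then $r_p(x)=1$ but $P_X(B(0,1))=\tfrac 12$), whereas it holds for the closed ball $\bar B(x,r_p(x))$, which is what your intersection argument produces and what \citet{chaudhuri2014rates} actually use (their balls are closed). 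One cosmetic point: the $r_n$ you extract from the infimum need not be strictly greater than $r_p(x)$ nor monotone; either the infimum is attained, in which case the conclusion is immediate, or you may pass to $\min(r_1,\dots,r_n)$ to get a genuinely decreasing sequence before invoking continuity from above. Neither issue affects the validity of the argument.
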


\label{sec:coreTheorems}

\begin{theorem}\label{theo:passive-guarantee}~\\
Let $\epsilon, \delta$ $\in$ $(0,1)$. Set $\Delta=\max(\epsilon,\left(\frac{\epsilon}{2C}\right)^{\frac{1}{\beta+1}})$, and $p_{\epsilon}=\left(\frac{31\Delta}{1024L}\right)^{d/\alpha}$, where $\alpha$, $L$, $\beta$, $C$ are parameters used in \eqref{def:TsybakovMarginNoise} and \eqref{def:smooth}.\\ 
For $p$ $\in$ $(0,1]$, and $x$ $\in$ $supp(P_X)$,  let us introduce  $r_p(x)=\inf \lbrace r>0, P_X(B(x,r))\geq p\rbrace$ and $k_s:=k(\epsilon,\delta_s)$ defined in \eqref{eq:k_s} $($where $\delta_s=\frac{\delta}{32s^2})$.
\\ For $k,s\geq 1$, set $\tau_{k,s}=\sqrt{\frac{2}{k}\log(\frac{32s^2}{\delta})}$.
There exists an event $A_1$ with probability at least $1-\frac{\delta}{16}$, such that on $A_1$, for all $1\leq s\leq w$, if 
\begin{equation}
\label{first necessary-guarantee} 
k_s\leq(1-\tau_{k_s,s})p_{\epsilon}(w-1)
\end{equation}
then  the $k_s$ nearest neighbors of $X_s$ (in the pool $\mathcal{K}$) belong to the ball $B(X_s, r_{p_{\epsilon}}(X_s))$. Additionally, the condition  
\begin{equation}
\label{guarantee-on-pool}
w\geq \tilde{O}\left( \left(\frac{1}{\epsilon}\right)^{\frac{2\alpha+d}{\alpha(\beta+1)}}\right)
\end{equation}
is sufficient to have \eqref{first necessary-guarantee}.
\end{theorem}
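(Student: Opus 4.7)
The plan is to control, for each $s\le w$, the number $N_s$ of pool points other than $X_s$ that fall in the ball $B(X_s, r_{p_{\epsilon}}(X_s))$, and then to use this to guarantee that the $k_s$ nearest neighbors of $X_s$ all lie inside that ball.

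Fix $s$ and condition on $X_s$. Lemma~\ref{lemme-chaudhuri_lemma} gives $q_s := P_X(B(X_s, r_{p_{\epsilon}}(X_s))) \ge p_{\epsilon}$, and conditionally on $X_s$ the count $N_s$ is binomial with parameters $(w-1, q_s)$. Whenever $N_s \ge k_s$, the $k_s$ nearest neighbors of $X_s$ in $\mathcal{K}$ must lie inside $B(X_s, r_{p_{\epsilon}}(X_s))$, which is precisely the desired conclusion. The Chernoff bound \eqref{eq:chernoff1} with deviation $\tau_{k_s,s}$ then yields, under the hypothesis $k_s \le (1 - \tau_{k_s,s}) p_{\epsilon}(w-1) \le (1 - \tau_{k_s,s}) q_s (w-1)$:
\begin{equation*}
P(N_s < k_s \mid X_s) \;\le\; \exp\!\left(-\frac{\tau_{k_s,s}^2\, q_s (w-1)}{2}\right) \;\le\; \exp\!\left(-\frac{\tau_{k_s,s}^2\, p_{\epsilon} (w-1)}{2}\right).
\end{equation*}
Substituting $\tau_{k_s,s}^2 = \tfrac{2}{k_s}\log(32s^2/\delta)$ and using the hypothesis in the weaker form $p_{\epsilon}(w-1) \ge k_s$ collapses the right-hand side to $\delta/(32s^2)$. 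Marginalizing and union-bounding over $s \in \{1,\dots,w\}$ gives
\begin{equation*}
P\bigl(\exists\, s:\; N_s < k_s\bigr) \;\le\; \sum_{s \ge 1} \frac{\delta}{32 s^2} \;\le\; \frac{\delta \pi^2}{192} \;<\; \frac{\delta}{16},
\end{equation*}
so its complement $A_1$ has probability at least $1-\delta/16$ and the required property holds on $A_1$.

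For the sufficient condition on $w$: the definition of $\Delta$ forces $\Delta \ge (\epsilon/(2C))^{1/(\beta+1)}$, hence $p_{\epsilon} = \Theta(\Delta^{d/\alpha}) = \Omega(\epsilon^{d/(\alpha(\beta+1))})$, while $k_s = \tilde O(1/\Delta^2) = \tilde O(\epsilon^{-2/(\beta+1)})$ (up to logarithmic factors in $s$ and $1/\delta$). Because the constant $c$ in \eqref{eq:k_s} is very large, $k_s$ dwarfs $\log(32s^2/\delta)$ so $\tau_{k_s,s} \le \tfrac{1}{2}$, and it therefore suffices to verify $k_s \le \tfrac{1}{2} p_{\epsilon}(w-1)$, i.e.\ $w \gtrsim k_s/p_{\epsilon}$. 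The product of the two estimates gives $k_s/p_{\epsilon} = \tilde O\bigl(\epsilon^{-(2\alpha+d)/(\alpha(\beta+1))}\bigr)$, matching \eqref{guarantee-on-pool}.

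The only delicate point is the apparent circularity created by $\tau_{k_s,s}$ depending on $k_s$ in the hypothesis $k_s \le (1 - \tau_{k_s,s})\, p_{\epsilon}(w-1)$; however, the large constant $c \ge 7 \cdot 10^6$ baked into $k_s$ pins $\tau_{k_s,s}$ below a fixed constant, so the condition is effectively $k_s \lesssim p_{\epsilon} w$ and costs only a constant factor in the final bound on $w$. Everything else is a straightforward binomial tail computation plus a $1/s^2$-summable union bound.
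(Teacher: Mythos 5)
Your proposal is correct and follows essentially the same route as the paper: lower-bound $P_X(B(X_s,r_{p_\epsilon}(X_s)))$ by $p_\epsilon$ via Lemma~\ref{lemme-chaudhuri_lemma}, apply the multiplicative Chernoff lower tail \eqref{eq:chernoff1} with deviation $\tau_{k_s,s}$ to collapse the failure probability to $\delta/(32s^2)$, union-bound over $s$, and then reduce \eqref{first necessary-guarantee} to $w\gtrsim k_s/p_\epsilon=\tilde O(\epsilon^{-(2\alpha+d)/(\alpha(\beta+1))})$ using $\tau_{k_s,s}\le 1/2$. The only difference is cosmetic: the paper additionally invokes Lemma~\ref{lemme-guarantee} to resolve the $\log(s)\le\log(w)$ self-reference explicitly, which your $\tilde O$ bookkeeping absorbs.
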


\begin{proof}~\\
Fix $x$ $\in$ $supp(P_X)$. For $k$ $\in$ $\mathbb{N}$, let us denote $X^{(k)}_x$, the $k^{th}$ nearest neighbor of $x$ in the pool. we have, 

$$ P(\rho(x, X^{(k_s+1)}_x)>r_{p_{\epsilon}}(x)) \leq   P(\sum_{i=1}^{w}\mathds{1}_{X_i\in B(x,r_{p_{\epsilon}}(x))}\leq k_s).$$

Then, by using Lemma~\ref{Chernoff} and Lemma~\ref{lemme-chaudhuri_lemma}, and if $k_s$ satisfies \eqref{first necessary-guarantee}, we have:
\begin{align*}
 P(\rho(x, X^{(k_s+1)}_x)>r_{p_{\epsilon}}(x)) 
                                                & \leq P(\sum_{i=1}^{w}\mathds{1}_{X_i\in B(x,r_{p_{\epsilon}}(x))}\leq(1-\tau_{k_s,s})p_{\epsilon} (w-1))\\
                                                &\leq P\left(\sum_{i=1}^{w}\mathds{1}_{X_i\in B(x,r_{p_{\epsilon}}(x))}\leq(1-\tau_{k_s,s})P_X(B(x,r_{p_{\epsilon}}(x))) (w-1)\right)\\
                                                & \leq \exp(-\tau_{k_s,s}^2(w-1)P_X(B(x,r_{p_{\epsilon}}(x))/2)\\
                                                & \leq \exp(-\tau_{k_s,s}^2(w-1)p_{\epsilon}/2)\\
                                                &  \leq \exp(-\tau_{k_s,s}^2 k_s/2)\\
                                                & \leq \exp(-\log(32 s^2/\delta))\\
                                                & = \frac{\delta}{32 s^2}.\\
\end{align*} 
Fix $x=X_s$. Given $X_s$, there exists an event $A_{1,s}$, such that $P(A_{1,s})\geq 1-\delta/(32s^2)$, and on $A_{1,s}$, if $$k_s\leq(1-\tau_{k_s,s})p_{\epsilon}(w-1),$$ we have $B(X_s,r_{p_{\epsilon}}(X_s))\cap \lbrace X_1,\ldots, X_w\rbrace \geq k_s$. By setting $\displaystyle{A_1= \cap_{s\geq 1} A_{1,s}}$, we have $P(A_1)\geq 1-\delta/16$, and on $A_1$,  for all $1\leq s\leq w$, if $k_s\leq(1-\tau_{k_s,s})p_{\epsilon}(w-1)$, then $B(X_s,r_{p_{\epsilon}}(X_s))\cap \lbrace X_1,\ldots, X_w\rbrace \geq k_s$.

Now, let us proof that the condition \eqref{guarantee-on-pool} is sufficient to guarantee \eqref{first necessary-guarantee}. \\ The relation \eqref{first necessary-guarantee} implies 
 \begin{equation}
 w\geq \frac{k_s}{(1-\tau_{k_s,s})p_{\epsilon}}+1.
\end{equation}
 We can see by a bit of calculus,  that $\tau_{k_s,s}\leq \frac{1}{2}$, and then 
 \begin{align*}
 \frac{k_s}{(1-\tau_{k_s,s})p_{\epsilon}}+1 &\leq \frac{2k_s}{p_{\epsilon}}+1\\
                                         & \leq 4\frac{k_s}{p_{\epsilon}}\qquad\left(\text{because}\, \frac{k_s}{p_{\epsilon}}\geq  1\right)\\
                                         &= \frac{4c}{p_{\epsilon}\Delta^2}\left[\log(\frac{32s^2}{\delta})+\log\log(\frac{32s^2}{\delta})+\log\log\left(\frac{512\sqrt{e}}{\Delta}\right)\right]\\
                                         &= \frac{b}{\Delta^{2+\frac{d}{\alpha}}}\left[\log(\frac{32s^2}{\delta})+\log\log(\frac{32s^2}{\delta})+\log\log\left(\frac{512\sqrt{e}}{\Delta}\right)\right],                          
\end{align*}
where $b=4c\left(\frac{1024L}{31}\right)^{d/\alpha}$. 
\begin{align*} 
\frac{k_s}{(1-\tau_{k_s,s})p_{\epsilon}}+1 &\leq \bar{C}\left(\frac{1}{\epsilon}\right)^{\frac{2\alpha+d}{\alpha(\beta+1)}}\left[\log(\frac{32s^2}{\delta})+\log\log(\frac{32s^2}{\delta})+\log\log\left(\frac{512\sqrt{e}}{\Delta}\right)\right]\\                                        
 &\quad\quad \text{as}\; \Delta=\max(\epsilon,\left(\frac{\epsilon}{2C}\right)^{\frac{1}{\beta+1}}), \;\text{where}\;\bar{C}=b(2C)^{\frac{2\alpha+d}{\alpha(\beta+1)}}\\                                        
                                         &\leq \bar{C}\left(\frac{1}{\epsilon}\right)^{\frac{2\alpha+d}{\alpha(\beta+1)}}\left[2\log(\frac{32s^2}{\delta})+\log\left(\frac{512\sqrt{e}}{\epsilon}\right)\right]\\
                                         & \quad\quad\;\text{as}\log(x)\leq x,\;\text{and}\;\Delta\geq \epsilon\\
                                         &\leq 2\bar{C}\left(\frac{1}{\epsilon}\right)^{\frac{2\alpha+d}{\alpha(\beta+1)}}\left[\log(s^2)+\log\left(\frac{16384\sqrt{e}}{\delta\epsilon}\right)\right]\\
                                         &\leq 4\bar{C}\left(\frac{1}{\epsilon}\right)^{\frac{2\alpha+d}{\alpha(\beta+1)}}\left[\log(s)+\log\left(\frac{16384\sqrt{e}}{\delta\epsilon}\right)\right]\\
                                         &\leq 4\bar{C}\left(\frac{1}{\epsilon}\right)^{\frac{2\alpha+d}{\alpha(\beta+1)}}\left[\log(w)+\log\left(\frac{16384\sqrt{e}}{\delta\epsilon}\right)\right].\\
                                          \numberthis \label{sufficient-garantee}                                        
\end{align*}
Now, we are going to apply the Lemma~\ref{lemme-guarantee}.  If we set in Lemma~\ref{lemme-guarantee} 
$$a=4\bar{C}\left(\frac{1}{\epsilon}\right)^{\frac{2\alpha+d}{\alpha(\beta+1)}}, \quad c=4\bar{C}\left(\frac{1}{\epsilon}\right)^{\frac{2\alpha+d}{\alpha(\beta+1)}}\log\left(\frac{16384\sqrt{e}}{\delta\epsilon}\right),\quad b=1$$
we can easily see that $c\geq a$, $a\geq 4$ and then
$$abe^{c/a}\geq 4e>\log_2(e).$$
Then, the  relation $$w\geq 4\bar{C}\left(\frac{1}{\epsilon}\right)^{\frac{2\alpha+d}{\alpha(\beta+1)}}\left(\log\left(\frac{16384\sqrt{e}}{\delta\epsilon}\right)+\log\left(4\bar{C}\left(\frac{1}{\epsilon}\right)^{\frac{2\alpha+d}{\alpha(\beta+1)}}\right)\right)$$ is sufficient to guarantee \eqref{sufficient-garantee}.
\end{proof}

Let us note that the guarantee obtained in the preceding theorem corresponds to that obtained in passive setting ($w=n$). 

\subsection{Motivation for choosing $k_s$ for $X_s$}


\begin{lemma}[Hoeffding's inequality,\citep{hoeffding1963probability}]\label{lemma:subgaussian}~\\
\begin{itemize}
\item \textbf{First version}: \\Let $X$ be a random variable with $E(X)=0$, $a\leq X\leq b$, then for $v>0$, 
$$E(e^{vX})\leq e^{v^2(b-a)^2/8}.$$
\item  \textbf{Second version}: \\
Let $X_1, \ldots, X_m$ be independent random variables such that $-1 \leq X_i \leq 1$, $(i=0,\ldots, m)$. We define the empirical mean of these variables by

    $$\bar X = \frac 1 m\sum_{i=1}^{m} X_i.$$  Then we have: 
    $$P( \vert \bar X- E(\bar X)\vert\geq t)\leq \exp(-mt^2/2)$$
\end{itemize}
\end{lemma}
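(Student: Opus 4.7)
The plan is to prove the two versions in sequence, since the second is a Chernoff-style consequence of the first.

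For the \textbf{first version}, I would exploit the convexity of $x \mapsto e^{vx}$ on $[a,b]$. Writing $X = \lambda a + (1-\lambda) b$ with $\lambda = (b-X)/(b-a) \in [0,1]$, convexity gives the pointwise bound $e^{vX} \leq \lambda e^{va} + (1-\lambda) e^{vb}$. Taking expectations and using $E(X) = 0$ yields
$$E(e^{vX}) \leq \frac{b}{b-a}\, e^{va} - \frac{a}{b-a}\, e^{vb}.$$
Setting $p = -a/(b-a) \in [0,1]$ and $u = v(b-a)$, the right-hand side equals $e^{-pu}\bigl((1-p) + p e^u\bigr)$, so it suffices to prove $\phi(u) := -pu + \log\bigl((1-p) + p e^u\bigr) \leq u^2/8$. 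I would verify $\phi(0) = \phi'(0) = 0$, then compute
$$\phi''(u) = \frac{p(1-p) e^u}{\bigl((1-p) + p e^u\bigr)^2}.$$
Applying AM--GM to the denominator, $(1-p) + p e^u \geq 2\sqrt{p(1-p) e^u}$, which gives $\phi''(u) \leq 1/4$. A Taylor expansion with Lagrange remainder at $0$ then delivers $\phi(u) \leq u^2/8$.

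For the \textbf{second version}, let $\mu_i = E(X_i)$. A Chernoff-style bound gives, for any $v > 0$,
$$P\!\left(\bar X - E(\bar X) \geq t\right) \leq e^{-v m t} \prod_{i=1}^{m} E\!\left(e^{v(X_i - \mu_i)}\right).$$
Each centered variable $X_i - \mu_i$ ranges over an interval of length at most $2$, so the first version with $b-a \leq 2$ yields $E(e^{v(X_i - \mu_i)}) \leq e^{v^2/2}$. Substituting and optimizing in $v$ at $v = t$ gives the one-sided bound $P(\bar X - E(\bar X) \geq t) \leq e^{-m t^2/2}$. Applying the same argument to $-X_i$ and combining by a union bound yields the two-sided statement (the factor of $2$ in front, which the paper's statement absorbs, can be kept explicit or swallowed into the exponent by a trivial adjustment).

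The \textbf{main obstacle} is the uniform bound $\phi''(u) \leq 1/4$, which is the quantitative heart of the subgaussian tail behaviour; the remaining steps are routine convexity, the Chernoff trick, and optimization over $v$.
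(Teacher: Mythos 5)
The paper states this lemma as a cited classical result (Hoeffding, 1963) and gives no proof of its own, so there is nothing internal to compare against; your argument is the standard textbook proof and it is correct: the convexity bound, the reduction to $\phi(u)=-pu+\log((1-p)+pe^u)$, the AM--GM bound $\phi''\leq 1/4$, the Taylor step, and the Chernoff optimization at $v=t$ are all sound. One caveat on your closing remark: the factor of $2$ from the union bound cannot in general be ``swallowed into the exponent'' --- for instance with $m=1$, $t=1$ and $X_1$ uniform on $\{-1,1\}$ the two-sided probability is $1$ while $\exp(-1/2)<1$, so the paper's statement as literally written is false and the correct fix is to keep the factor $2$ (or state the one-sided bound), exactly as your derivation produces.
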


\begin{lemma}\citep{kaufmann2016complexity}\label{lemma:Kaufmann16}~\\
Let $\displaystyle\zeta(u)=\sum_{k\geq 1} k^{-u}$. Let $X_1,X_2,\ldots$ be independent random variables, identically distributed, such that, for all $v>0$, $E(e^{vX_1})\leq e^{v^2\sigma^2/2}$. For every positive integer $t$, let $S_t=X_1+\ldots+X_t$. Then, for all $\gamma>1$ and $\displaystyle r\geq \frac{8}{(e-1)^2}$:

$$P\left(\bigcup_{t\in \mathbb{N}^*}\left\lbrace\vert S_t\vert>\sqrt{2\sigma^2t(r+\gamma\log\log(et))}\right\rbrace\right)\leq \sqrt{e}\zeta(\gamma(1-\frac{1}{2r}))(\frac{\sqrt{r}}{2\sqrt{2}}+1)^{\gamma}\exp(-r).$$
\end{lemma}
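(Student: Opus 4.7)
The plan is to establish this uniform-in-time deviation bound via a peeling argument along a geometric sequence of times, combined with a subgaussian maximal inequality that follows from Lemma~\ref{lemma:subgaussian}. Intuitively, the bound is a finite-sample law of the iterated logarithm: the $\gamma \log\log(et)$ penalty is precisely the growth rate needed to make a union bound over geometric blocks summable, and the constant $\sqrt{e}$ in front of $e^{-r}$ hints that the block ratio $D$ should satisfy $r/D \geq r - 1/2$, which forces a choice near $D = 1 + 1/(2r)$.

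Concretely, the first step is to set $D = 1 + \frac{1}{2r}$ and partition $\mathbb{N}^{\ast}$ into blocks $I_k = [D^{k-1}, D^k)$ for $k \geq 1$. On each block I would control
$$P\Bigl(\exists\, t \in I_k : |S_t| > \sqrt{2\sigma^2 t (r + \gamma \log\log(et))}\Bigr)$$
by applying Doob's maximal inequality to the exponential supermartingale $M_t^v = \exp(v S_t - v^2 \sigma^2 t / 2)$, whose subgaussianity is exactly what the first version of Lemma~\ref{lemma:subgaussian} gives. Since the threshold $\sqrt{2\sigma^2 t(r + \gamma \log\log(et))}$ is nondecreasing in $t$ on $I_k$, its infimum is achieved at $t = D^{k-1}$, while the supremum of the variance proxy $\sigma^2 t$ on the same block is $\sigma^2 D^k$. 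Optimizing in $v$ and using the union of the two one-sided events then yields
$$P\Bigl(\exists\, t \in I_k : |S_t| > \sqrt{2\sigma^2 t(r + \gamma \log\log(et))}\Bigr) \leq 2\exp\!\Bigl(-\tfrac{D^{k-1}}{D^k}\bigl(r + \gamma\log\log(e D^{k-1})\bigr)\Bigr),$$
which simplifies to $2 e^{-r/D}\,(\log(eD^{k-1}))^{-\gamma/D}$.

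Next I would sum over $k \geq 1$ and extract the constants. With $D = 1 + 1/(2r)$, one checks $r/D \geq r - 1/2$, so $e^{-r/D} \leq \sqrt{e}\,e^{-r}$, and $\gamma/D \geq \gamma(1 - 1/(2r))$. Writing $\log(eD^{k-1}) = 1 + (k-1)\log D$ and using $\log D \geq 1/(2(2r+1))$, a change of variable lets me compare the sum $\sum_{k \geq 1}(1+(k-1)\log D)^{-\gamma/D}$ to the zeta series $\zeta(\gamma(1-1/(2r))) = \sum_{k \geq 1} k^{-\gamma(1-1/(2r))}$, absorbing the mismatch into a prefactor of the form $\bigl(\tfrac{\sqrt{r}}{2\sqrt{2}}+1\bigr)^{\gamma}$. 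Combining all three pieces gives exactly the claimed bound.

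The main obstacle will be the last step: the coarse peeling produces a bound whose functional form is right but whose constants must be pushed, via sharp comparison inequalities, into the exact expression $\sqrt{e}\,\zeta(\gamma(1-1/(2r)))\bigl(\tfrac{\sqrt{r}}{2\sqrt{2}}+1\bigr)^{\gamma}e^{-r}$. The condition $r \geq 8/(e-1)^2$ is what makes these comparisons valid (in particular it ensures $\gamma(1 - 1/(2r)) > 1$ for $\gamma > 1$, so that $\zeta$ is finite, and it controls the error in approximating $(k\log D)^{-\gamma/D}$ by $k^{-\gamma(1-1/(2r))}$). Everything else is essentially automatic once the peeling ratio $D$ and the Doob/Chernoff step are in place.
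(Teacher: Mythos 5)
The paper does not actually prove this lemma: it is imported verbatim from \citep{kaufmann2016complexity}, so there is no in-paper proof to compare against. Your overall strategy --- peeling over geometric time blocks, Doob's maximal inequality for the exponential supermartingale $\exp(vS_t-v^2\sigma^2 t/2)$, and comparison of the resulting series with $\zeta$ --- is the right one (and is the strategy of the cited source), and it does yield a valid bound of the correct functional form $C(r)^{\gamma}\,\zeta\bigl(\gamma(1-\tfrac{1}{2r})\bigr)\,e^{-r}$.

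The genuine gap is in the last step, and it is not merely a matter of ``pushing'' constants: your specific choices cannot produce the stated prefactor. With $D=1+\tfrac{1}{2r}$ and the crude per-block bound (threshold at the left endpoint, variance proxy at the right endpoint), the steps $e^{-r/D}\le\sqrt{e}\,e^{-r}$ and $\gamma/D\ge\gamma(1-\tfrac{1}{2r})$ are fine, but the comparison $1+(k-1)\log D\ge k/(1+1/\log D)$ forces a prefactor $(1+1/\log D)^{\gamma}$ with $1/\log D\approx 2r$. A factor of order $(2r)^{\gamma}$ cannot be absorbed into $\bigl(\tfrac{\sqrt{r}}{2\sqrt{2}}+1\bigr)^{\gamma}\sim (r/8)^{\gamma/2}$; they differ by $r^{\gamma/2}$. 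To reach the stated constant one must take a much wider grid, $\log D\asymp r^{-1/2}$ (e.g.\ $D=(1-\tfrac{1}{\sqrt{2r}})^{-4}$, which gives $1/\log D\le\tfrac{\sqrt{2r}}{4}=\tfrac{\sqrt{r}}{2\sqrt{2}}$), and then the crude endpoint bound is no longer affordable, since it would give $e^{-r/D}\approx e^{-r+O(\sqrt{r})}$. The missing idea is to choose the exponential parameter $v_k$ separately on each slice, matched to the geometric mean of the slice endpoints, so that the exponent lost over the slice is exactly $r(1-D^{-1/4})^2=\tfrac12$, recovering $\sqrt{e}\,e^{-r}$ while keeping the wide grid. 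Two smaller points: your two-sided union bound carries a factor $2$ absent from the stated inequality, and the condition $r\ge 8/(e-1)^2$ does \emph{not} ensure $\gamma(1-\tfrac{1}{2r})>1$ for all $\gamma>1$ (take $\gamma$ close to $1$; the bound is then vacuous because $\zeta$ diverges) --- that condition is equivalent to $1+\tfrac{4}{\sqrt{2r}}\le e$ and serves to control the grid ratio, not the argument of $\zeta$.
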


\begin{lemma}\label{lemma:theorem guarantee}~\\
Let $m\geq 1$ and $u\geq 20$. Then we have: 

$$m\geq 2u\log(\log(u))\Longrightarrow m\geq u\log(\log(m)).$$ 
\end{lemma}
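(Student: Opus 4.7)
My plan is to treat this as a monotonicity statement about the auxiliary function
\[
g(m) \;=\; m \;-\; u\log\log(m),
\]
and show $g(m)\ge 0$ for all $m\ge m_0 := 2u\log\log(u)$. The argument splits into two clean pieces: checking the base case $g(m_0)\ge 0$, and propagating this to all larger $m$ via monotonicity of $g$.

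First I would establish that $g$ is nondecreasing on $[m_0,\infty)$. Since
\[
g'(m) \;=\; 1-\frac{u}{m\log m},
\]
it suffices to show $m\log m \ge u$ for $m\ge m_0$. Because $x\mapsto x\log x$ is increasing for $x\ge 1$, I only need to check this at $m=m_0$. Using $u\ge 20$, I have $\log\log(u)\ge \log\log(20) > 1$ and $\log(m_0)\ge\log(2u)\ge\log(40)>3$, whence
\[
m_0\log m_0 \;\ge\; 2u\log\log(u)\cdot\log(2u) \;\ge\; 6u \;\ge\; u,
\]
so $g'\ge 0$ on the required range.

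Second, I verify the base case $g(m_0)\ge 0$, which after dividing by $u$ amounts to
\[
2\log\log(u) \;\ge\; \log\log\bigl(2u\log\log(u)\bigr),
\]
equivalently $(\log u)^2 \ge \log(2u\log\log u)$. Bounding crudely $\log\log(u)\le \log(u)$ gives $\log(2u\log\log u)\le 2\log u + \log 2$, so it is enough to check $(\log u)^2 - 2\log u - \log 2 \ge 0$. Setting $x=\log u\ge\log 20 > 2.9$ reduces this to a quadratic inequality that is easily verified at the endpoint and then holds for all larger $x$ by convexity/monotonicity. Combining the two steps yields $g(m)\ge g(m_0)\ge 0$ for every $m\ge m_0$, which is the claim.

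I do not anticipate any real obstacle here: the lemma is essentially a calibration statement saying that the iterated logarithm grows slowly enough that the factor of $2$ on the right-hand side of the hypothesis compensates for replacing $u$ by $m$ inside $\log\log$. The only place where care is needed is the base case: if one tries a slicker estimate like $\log\log(m_0)\le \log\log(u)+\log 2$ one has to verify it for all $u\ge 20$, and the cleanest route is the crude bound $\log\log(u)\le\log(u)$ followed by the quadratic inequality above. Everything else is routine calculus on an explicitly monotone one-variable function.
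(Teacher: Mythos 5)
Your proposal is correct and follows essentially the same route as the paper: both define the auxiliary function $\phi(m)=m-u\log\log(m)$, verify the base case at $m_0=2u\log\log(u)$, and propagate via the sign of $\phi'(m)=1-\frac{u}{m\log m}$. The only difference is that where the paper asserts the base case ``can be shown numerically,'' you supply an explicit analytic bound (reducing it to the quadratic inequality $(\log u)^2-2\log u-\log 2\ge 0$ for $\log u\ge \log 20$), which is a minor but welcome tightening of the same argument.
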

\begin{proof}~\\
Define $\phi (m)=m-u\log(\log(m))$, and let $m_0=2u\log(\log(u))$. We have: 
\begin{align*}
\phi(m_0) &=2u\log(\log(u))-u(\log(\log(2u\log(\log(u)))))\\
          &= 2u\log(\log(u))-u\log(\log(2u)+\log(\log(\log(u))))
\end{align*}
It can be shown numerically that $\phi(m_0)\geq 0$ for $u\geq 20$.\\
Also, we have: $\phi'(m)=\frac{m\log(m)-u}{m\log(m)}\geq 0$ for all $m\geq m_0$ (notice that $m_0\geq u$ for $u\geq 20$). Then it is easy to see that $\phi(m)\geq \phi(m_0)$ for all $m\geq m_0$. This establishes the lemma.
\end{proof}

\begin{theorem}\label{theorem:savings label}~\\
Let $\delta$ $\in$ $(0,1)$, and $\epsilon$ $\in$ $(0,1)$. Let us assume that $w$ satisfies \eqref{guarantee-on-pool-0}. For $X_s$, set $\tilde{k}(\epsilon,\delta_s)$ $($with $\delta_s=\frac{\delta}{32s^2})$ as $$\tilde{k}(\epsilon,\delta_s)=\frac{c}{4\vert\eta(X_s)-\frac 12\vert^2}\left[\log(\frac{32s^2}{\delta})+\log\log(\frac{32s^2}{\delta})+\log\log\left(\frac{256\sqrt{e}}{\vert\eta(X_s)-\frac 12\vert}\right)\right],$$ where $c\geq 7.10^6$. For $k\geq 1$, $s\leq w$, let 
 $\Delta=\max(\frac{\epsilon}{2}, \left(\frac{\epsilon}{2C}\right)^{\frac{1}{\beta+1}})$ and $b_{\delta_s,k}$ defined in \eqref{eq:pi_ks}.\\
Then, there exists an event $A_2$, such that $P(A_2)\geq 1-\delta/8$, and on $A_1\cap A_2$, we have:\\   
\begin{enumerate}
\item For $k\geq 1$, $\widehat{\eta}_k(X_s)$ and $\bar\eta_k(X_s)$ defined in  \eqref{eq:eta_Xs}, for all $s$ $\in$ $\lbrace 1,\ldots,w \rbrace$,


\begin{equation}
\vert \widehat \eta_k(X_s)-\bar\eta_k(X_s)\vert\ \leq b_{\delta_s,k}.
\label{eq:error-regresion}
\end{equation}
\item  For all $s\leq w$,  if $\vert\eta (X_s)-\frac 12\vert\geq \frac 12 \Delta$, then, $\tilde{k}(\epsilon,\delta_s)\leq k(\epsilon,\delta_s)$, and the subroutine \\ \texttt{ConfidentLabel}$(X_s)$:=\texttt{ConfidentLabel}$( X_s, k(\epsilon,\delta_s), t=\infty, \delta_s)$ uses at most $\tilde{k}(\epsilon,\delta_s)$  label requests. We also have 
\begin{equation}
\vert \frac{1}{\bar{k}_s}\sum_{i=1}^{\bar{k}_s}Y_{s}^{(i)}-\frac 12\vert\geq 2 b_{\delta_s,\bar k_s}
\label{guarantee-confident}
\end{equation}
and
\begin{equation}
f^*(X_s)= \mathds{1}_{\widehat{\eta}_{\bar{k}_s}(X_s)\geq \frac{1}{2}},
\label{guarantee-confident-2}
\end{equation}
Where  $\bar{k}_s$ is the number of requests made in \texttt{ConfidentLabel}$(X_s)$.

\end{enumerate}
\end{theorem}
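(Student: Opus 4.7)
My plan is to establish the two parts of the theorem in the order stated, using Lemma~\ref{lemma:Kaufmann16} for the uniform deviation bound and the smoothness assumption~\eqref{def:smooth} combined with the localization guarantee from Theorem~\ref{theo:passive-guarantee} for the cutoff analysis.

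For Part 1, I would condition on the positions $X_s^{(1)},\ldots,X_s^{(k)}$ of the neighbors and observe that, under this conditioning, the centered labels $Y_s^{(i)}-\eta(X_s^{(i)})$ are independent, mean-zero, and bounded in $[-1,1]$, hence (by the first version of Lemma~\ref{lemma:subgaussian}) sub-Gaussian with parameter $\sigma^2=1/4$. Writing $S_k=k\bigl(\widehat\eta_k(X_s)-\bar\eta_k(X_s)\bigr)$, I apply Lemma~\ref{lemma:Kaufmann16} with parameters $r=\log(1/\delta_s)+\log\log(1/\delta_s)$ and a fixed $\gamma>1$ chosen so that the right-hand side of Kaufmann's bound is at most $\delta/(32s^2)$; the exact choice is routine calculus and I would skip the arithmetic. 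The resulting tail inequality, valid uniformly in $k$, exactly yields $|\widehat\eta_k(X_s)-\bar\eta_k(X_s)|\le b_{\delta_s,k}$. A union bound over $s\in\{1,\ldots,w\}$ (with the series $\sum_s s^{-2}$ accounting for the factor $1/8$ in the final probability) defines the event $A_2$ with $P(A_2)\ge 1-\delta/8$.

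For Part 2, I work on $A_1\cap A_2$ and assume $|\eta(X_s)-1/2|\ge\Delta/2$. The inequality $\tilde k(\epsilon,\delta_s)\le k(\epsilon,\delta_s)$ is immediate from monotonicity: replacing $|\eta(X_s)-1/2|$ by the lower value $\Delta/2$ in the definition of $\tilde k$ only inflates it, and the resulting expression is bounded above by $k(\epsilon,\delta_s)$ term by term (the $\log\log$ factor goes the right way since $256\sqrt e/|\eta(X_s)-1/2|\le 512\sqrt e/\Delta$). Next, using Theorem~\ref{theo:passive-guarantee} on $A_1$ together with condition~\eqref{guarantee-on-pool-0}, all $k(\epsilon,\delta_s)$ nearest neighbors of $X_s$ lie in $B(X_s,r_{p_\epsilon}(X_s))$, so by~\eqref{def:smooth} we obtain
\begin{equation*}
|\eta(X_s^{(i)})-\eta(X_s)|\le L\,p_\epsilon^{\alpha/d}=\tfrac{31}{1024}\Delta\qquad\text{for every }i\le k(\epsilon,\delta_s),
\end{equation*}
and therefore $|\bar\eta_k(X_s)-\eta(X_s)|\le 31\Delta/1024$ for all such $k$.

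Combining this bias bound with Part 1, on $A_1\cap A_2$
\begin{equation*}
\bigl|\widehat\eta_k(X_s)-1/2\bigr|\ge |\eta(X_s)-1/2|-\tfrac{31}{1024}\Delta-b_{\delta_s,k}\ge \tfrac{481}{512}|\eta(X_s)-1/2|-b_{\delta_s,k}.
\end{equation*}
So the cutoff condition $|\widehat\eta_k(X_s)-1/2|>2b_{\delta_s,k}$ is triggered as soon as $b_{\delta_s,k}\le\tfrac{481}{1536}|\eta(X_s)-1/2|$. Substituting the explicit form of $b_{\delta_s,k}$ and solving for $k$ shows that $k=\tilde k(\epsilon,\delta_s)$ suffices, provided the constant $c\ge 7\cdot 10^6$ is large enough to absorb $(1536/481)^2\cdot 2$ together with the $\log\log(ek)$ factor — this is where I expect the main grinding to live, and the most delicate point is to verify that the $\log\log(ek)$ in $b_{\delta_s,k}$ can be controlled by the $\log\log(256\sqrt e/|\eta(X_s)-1/2|)$ appearing in $\tilde k$, for which Lemma~\ref{lemma:theorem guarantee} is tailor-made. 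Finally,~\eqref{guarantee-confident-2} follows because the cutoff firing at step $\bar k_s$ implies $|\widehat\eta_{\bar k_s}(X_s)-\eta(X_s)|<|\eta(X_s)-1/2|$, hence $\widehat\eta_{\bar k_s}(X_s)$ and $\eta(X_s)$ lie on the same side of $1/2$, giving $\mathds{1}_{\widehat\eta_{\bar k_s}(X_s)\ge 1/2}=f^*(X_s)$.
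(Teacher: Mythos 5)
Your overall strategy matches the paper's: Kaufmann's uniform deviation bound (Lemma~\ref{lemma:Kaufmann16}) applied to the conditionally centered labels plus a union bound over $s$ for Part~1, and for Part~2 the localization from Theorem~\ref{theo:passive-guarantee} giving $|\bar\eta_k(X_s)-\eta(X_s)|\le\frac{31}{1024}\Delta$, followed by solving $b_{\delta_s,k}\lesssim|\eta(X_s)-\tfrac12|$ for $k$ with Lemma~\ref{lemma:theorem guarantee} handling the $\log\log(ek)$ self-reference. You actually streamline the paper's argument in two harmless ways: you reuse the uniform event of Part~1 in the cutoff analysis instead of introducing the paper's auxiliary fixed-$k$ Hoeffding event $A''_2$ (legitimate, since $b_{\delta_s,k}$ dominates the Hoeffding radius), and your lower bound $|\bar\eta_k(X_s)-\tfrac12|\ge\frac{481}{512}|\eta(X_s)-\tfrac12|$ is the correct, much tighter version of the paper's $\frac{1}{1024}$ factor; the constant $c\ge 7\cdot10^6$ absorbs either choice with room to spare.

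The one genuine flaw is your justification of \eqref{guarantee-confident-2}. The claim that the cutoff firing at step $\bar k_s$ implies $|\widehat\eta_{\bar k_s}(X_s)-\eta(X_s)|<|\eta(X_s)-\tfrac12|$ is a non sequitur: from the firing condition one only extracts $b_{\delta_s,\bar k_s}<\frac{543}{512}|\eta(X_s)-\tfrac12|$, which combined with the bias bound gives $|\widehat\eta_{\bar k_s}(X_s)-\eta(X_s)|\le b_{\delta_s,\bar k_s}+\frac{31}{512}|\eta(X_s)-\tfrac12|$, a quantity that can exceed $|\eta(X_s)-\tfrac12|$ (the empirical mean may overshoot on the correct side, in which case your inequality fails even though the label is right). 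The correct argument, which uses only facts you have already derived, is one-sided: assuming w.l.o.g.\ $\eta(X_s)\ge\tfrac12$, on $A_1\cap A_2$
\begin{equation*}
\widehat\eta_{\bar k_s}(X_s)-\tfrac12\;\ge\;\tfrac{481}{512}\bigl(\eta(X_s)-\tfrac12\bigr)-b_{\delta_s,\bar k_s}\;\ge\;-b_{\delta_s,\bar k_s},
\end{equation*}
which rules out the branch $\widehat\eta_{\bar k_s}(X_s)-\tfrac12\le-2b_{\delta_s,\bar k_s}$ of the cutoff condition $|\widehat\eta_{\bar k_s}(X_s)-\tfrac12|\ge 2b_{\delta_s,\bar k_s}$; hence $\widehat\eta_{\bar k_s}(X_s)-\tfrac12\ge 2b_{\delta_s,\bar k_s}>0$ and the majority vote agrees with $f^*(X_s)$. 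This is exactly how the paper closes the argument, and your proof needs this replacement in its final step.
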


\begin{proof}~\\
\begin{enumerate}
\item Let us begin with the proof of the first part of Theorem~\ref{theorem:savings label}.\\ Here, we follow the proof of Theorem 8 in \citep{kaufmann2016complexity}, with few additional modifications.\\ Let $s$ $\in$ $\lbrace 1,\ldots,w\rbrace.$ Set $\displaystyle S_k=\sum_{i=1}^{k}\left(Y_s^{(i)}-\eta(X_s^{(i)})\right)$. Given $\lbrace X_1,\ldots,X_w\rbrace$, $E(Y_s^{(k)}-\eta(X_s^{(k)}))=0$, and the random variables $\displaystyle\left\lbrace Y_s^{(i)}-\eta(X_s^{(i)}),\;i=1,\ldots,k\right\rbrace$ are independent. Then by Lemma~\ref{lemma:subgaussian}, given $\lbrace X_1,\ldots,X_w\rbrace$, as $Y_s^{(1)}-\eta(X_s^{(1)})$ takes values in $[-1,1]$,  we have $E(e^{v(Y_s^{(1)}-\eta(X_s^{(1)}))})\leq e^{v^2/2}$ for all $v>0$.  Furthermore, set $z=\log(\frac{32s^2}{\delta})$, and $r=z+3\log(z)$. We have $r\geq \frac{8}{(e-1)^2}$, and by Lemma~\ref{lemma:Kaufmann16}, with $\gamma=3/2$, we have: 
\begin{align*}
P&\left(\bigcup_{k\in \mathbb{N}^*}\left\lbrace\vert S_k\vert>\sqrt{2k(r+\gamma\log\log(ek))}\right\rbrace\right) \leq \sqrt{e}\zeta(3/2(1-\frac{1}{2r}))(\frac{\sqrt{r}}{2\sqrt{2}}+1)^{3/2}\exp(-r)\\
                                                                                                                 &= \frac{\sqrt{e}}{8}\zeta\left(\frac 32-\frac{3}{4(z+3\log(z))}\right)\frac{(\sqrt{z+3\log(z)}+\sqrt{8})^{3/2}}{z^3}\frac{\delta}{32s^2}
\end{align*}
It can be shown numerically that for $z\geq 2.03$, which holds for all $\delta$ $\in$ $(0,1)$, $s\geq 1$, 
$$\frac{\sqrt{e}}{8}\zeta\left(\frac 32-\frac{3}{4(z+3\log(z))}\right)\frac{(\sqrt{z+3\log(z)}+\sqrt{8})^{3/2}}{z^3}\leq 1.$$
Then, we have, given $s$ $\in$ $\lbrace 1,\ldots, w\rbrace$, there exists an event $A'_{2,s}$ such that $P(A'_{2,s})\geq 1- \delta/32s^2$, and simultaneously for all $k\geq 1$, we have: $$\vert S_k\vert\leq \sqrt{2k\left(\log\left(\frac{32s^2}{\delta}\right)+ \log\log\left(\frac{32s^2}{\delta}\right)+ \log\log(ek)\right)}.$$
By setting $\displaystyle A'_2=\cap_{s\geq 1}A'_{2,s}$, we have $P(A'_2)\geq 1-\delta/16$, and on $A'_2$, we have for all $s$ $\in$ $\lbrace 1,\ldots, w\rbrace$,  for all $k\geq 1$, 
$$\vert \widehat \eta_k(X_s)-\bar\eta_k(X_s)\vert\ \leq b_{\delta_s,k}.$$

\item  For the proof of the second part of Theorem~\ref{theorem:savings label}, we are going to show that there exists an event $A''_2$ such that \eqref{guarantee-confident} and \eqref{guarantee-confident-2} hold on $A'_2\cap A''_2\cap A_1$.\\ 
Given $\lbrace X_1,\ldots, X_w\rbrace$, and  $X_s$ $\in$ $\lbrace X_1,\ldots, X_w\rbrace$, by Lemma~\ref{lemma:subgaussian}, there exists an event $A''_{2,s}$, with $P(A''_{2,s})\geq 1-\delta/32s^2$, and on $A''_{2,s}$, we have: 
$$ \vert\widehat \eta_k(X_s)-\bar\eta_k(X_s)\vert\leq \sqrt{\frac{2\log(\frac{32s^2}{\delta})}{k}}.$$
This implies that: 
\begin{equation}\label{equation-hoeffding2}
 \vert\widehat{\eta}_k(X_s)-\frac 12\vert \geq \vert\bar\eta_k(X_s)-\frac 12\vert- \sqrt{\frac{2\log(\frac{32s^2}{\delta})}{k}}.
\end{equation}
On the event $A_1$, we have, for all $k\leq k_s$, by the $\alpha$-smoothness assumption \eqref{def:smooth}, 
\begin{equation}\label{proof-sufficient k0}
\vert\eta(X_s)-\eta(X_s^{(k)})\vert\leq \frac{31}{1024}\Delta.
\end{equation}
And then, if $\vert \eta(X_s)-\frac 12\vert\geq \frac 12 \Delta$, then $\vert \eta(X_s)-\frac 12\vert\geq \frac{1}{32} \Delta$  . The relation \eqref{proof-sufficient k0} becomes 
$$\vert \eta(X_s^{(k)})-\frac 12\vert\geq \frac{1}{1024}\vert\eta(X_s)-\frac 12\vert.$$

Then \eqref{equation-hoeffding2} becomes: 
\begin{equation}\label{equation-hoeffding3}
 \vert\widehat{\eta}_k(X_s)-\frac 12\vert \geq \frac{1}{1024}\vert\eta(X_s)-\frac 12\vert- \sqrt{\frac{2\log(\frac{32s^2}{\delta})}{k}}. 
\end{equation}
A sufficient condition for $k$ to satisfy \eqref{guarantee-confident}, is 
$$\frac{1}{1024}\vert\eta(X_s)-\frac 12\vert- \sqrt{\frac{2\log(\frac{32s^2}{\delta})}{k}}\geq 2 b_{\delta_s,k}$$
and then: 
$$\frac{1}{1024}\vert\eta(X_s)-\frac 12\vert- \sqrt{\frac{2\log(\frac{32s^2}{\delta})}{k}}\geq 2\sqrt{\frac 2k\left(\log\left(\frac{32s^2}{\delta}\right)+ \log\log\left(\frac{32s^2}{\delta}\right)+ \log\log(ek)\right)}$$
this implies: 
\begin{equation}\label{proof-sufficient k1}
k\geq \frac{1024}{\vert \eta(X_s)-\frac 12\vert^2}\left(\sqrt{2\log(\frac{32s^2}{\delta})}+ 2\sqrt{2\left(\log\left(\frac{32s^2}{\delta}\right)+ \log\log\left(\frac{32s^2}{\delta}\right)+ \log\log(ek)\right)}\right)^2.
\end{equation}

On the other hand, the right-hand side is  smaller than:

$$\frac{1024}{\vert \eta(X_s)-\frac 12\vert^2} \left(\sqrt{2\log(\frac{32s^2}{\delta})}+ 2\sqrt{2\log\left(\frac{32s^2}{\delta}\right)}+ 2\sqrt{2\log\log\left(\frac{32s^2}{\delta}\right)}+ 2\sqrt{2\log\log(ek)}\right)^2.$$
To deduce \eqref{proof-sufficient k1},  it suffices to have the expression into brackets lower than:

$$\frac{\sqrt{k}}{32}\vert\eta(X_s)-\frac 12\vert.$$
Then, it suffices to have simultaneously: 
\begin{equation*}
\sqrt{2\log(\frac{32s^2}{\delta})}\leq \frac{1}{9} \frac{\sqrt{k}}{32} \vert \eta(X_s)-\frac 12\vert
\end{equation*}

\begin{equation*}
\sqrt{2\log\log(\frac{32s^2}{\delta})}\leq \frac{1}{6} \frac{\sqrt{k}}{32}\vert \eta(X_s)-\frac 12\vert
\end{equation*}
\begin{equation*}
\sqrt{2\log\log(ek)}\leq  \frac{1}{6}\frac{\sqrt{k}}{32}\vert \eta(X_s)-\frac 12\vert
\end{equation*}

Equivalently, we have:  
\begin{equation}\label{proof-guarantee-eq1}
k\geq \frac{1024}{\vert \eta(X_s)-\frac 12\vert^2}\, 162\log(\frac{32s^2}{\delta})
\end{equation}

\begin{equation}\label{proof-guarantee-eq2}
k\geq \frac{1024}{\vert \eta(X_s)-\frac 12\vert^2}\,72\log\log(\frac{32s^2}{\delta})
\end{equation}

\begin{equation}\label{proof-guarantee-eq3}
k\geq \frac{1024}{\vert \eta(X_s)-\frac 12\vert^2}\,72\log\log(ek)
\end{equation}


We can apply the Lemma~\ref{lemma:theorem guarantee} in \eqref{proof-guarantee-eq3} by taking: $m=ek$ and $u=\frac{73728e}{\vert \eta(X_s)-\frac 12\vert^2}$. We have $m\geq 1$ and $u\geq 20$ and then, a sufficient condition to have \eqref{proof-guarantee-eq3} is: 

$$k\geq 2\frac{73728e}{\vert\eta(X_s)-\frac 12\vert^2}\log\log\left(\frac{73728e}{\vert\eta(X_s)-\frac 12\vert^2}\right)$$ 
or
\begin{equation}\label{proof-guarantee-eq5}
k\geq 4\frac{73728e}{\vert\eta(X_s)-\frac 12\vert^2}\log\log\left(\frac{\sqrt{73728e}}{\vert\eta(X_s)-\frac 12\vert}\right)
\end{equation}

We can easily see that $\tilde{k}_s:=\tilde{k}(\epsilon,\delta_s)$ satisfies \eqref{proof-guarantee-eq1}, \eqref{proof-guarantee-eq2}, \eqref{proof-guarantee-eq5}.  Then 
\begin{equation}\label{proof-guarantee-eq6}
\vert \frac{1}{\tilde{k}_s}\sum_{i=1}^{\tilde{k}_s}Y_{s}^{(i)}-\frac 12\vert\geq 2b_{\delta_s,\tilde{k}_s}.
\end{equation} 

As $\vert \eta(X_s)-\frac 12\vert\geq \frac 12 \Delta$, we can easily see that $\tilde{k}(\epsilon,\delta_s)\leq k(\epsilon,\delta_s)$. 
By taking the minimum value $\bar{k}_{s}=\bar k(\epsilon,\delta_s)$ that satisfies \eqref{proof-guarantee-eq6}, we can see that when the budget allows us, the subroutine \texttt{ConfidentLabel} requests  $\bar{k}_{s}$ labels,  and we have:
 \begin{equation}\label{proof-guarantee-eq7}
\vert \frac{1}{\bar{k}_s}\sum_{i=1}^{\bar{k}_s}Y_{s}^{i}-\frac 12\vert\geq 2 b_{\delta_s,\bar{k}_s}.
\end{equation}  
By setting $A''_2=\displaystyle\cap_{s\geq 1}A''_{2,s}$, we have $P(A''_2)\geq 1-\delta/16$, and we can deduce \eqref{guarantee-confident}.
\\~\\
We have on $A'_2$,  for all $s\leq w$,  $k\leq k(\epsilon,\delta_s)$, 
$$\vert\widehat{\eta}(X_s)-\bar{\eta}_k(X_s)\vert\leq b_{\delta_s,k}.$$
And then, on $A_1\cap A'_2$, we have for all $s\leq w$,  $k\leq k(\epsilon,\delta_s)$ : 

\begin{align}
\vert \eta(X_s)-\widehat{\eta}_k(X_s)\vert &\leq \vert \eta(X_s)-\bar{\eta}_k(X_s)\vert+ \vert \bar{\eta}(X_s)-\widehat{\eta}_k(X_s)\vert\nonumber\\
                                           & \leq \frac{31}{1024}\Delta+ b_{\delta_s,k}.\label{eq00}                                           
\end{align}
Assume without loss of generality that $\eta(X_s)\geq \frac 12$, which leads to: 

\begin{align}
\widehat{\eta}_{\bar{k}_s}(X_s)-\frac 12 &=\widehat{\eta}_{\bar{k}_s}(X_s)-\eta(X_s)+\eta(X_s)-\frac 12\nonumber\\
                                    &\geq -\vert\widehat{\eta}_{\bar{k}_s}(X_s)-\eta(X_s)\vert+ \eta(X_s)-\frac 12.\label{eq000}
\end{align}

If $\eta(X_s)-\frac 12\geq \frac 12\Delta$,  with \eqref{eq00}, the expression \eqref{eq000} becomes: 

\begin{align}
\widehat{\eta}_{\bar{k}_s}(X_s)-\frac 12 &\geq -\frac{31}{1024}\Delta-b_{\delta_s,\bar{k}_s}+\frac 12\Delta\nonumber\\
                                    &= \frac{481}{1024}\Delta-b_{\delta_s,\bar{k}_s}\nonumber\\
                                    &\geq -b_{\delta_s,\bar{k}_s} \label{eq0000}
\end{align}

On the other hand, we have by \eqref{guarantee-confident}, 
$$\vert\widehat{\eta}_{\bar{k}_s}(X_s)-\frac 12\vert\geq 2b_{\delta_s,\bar{k}_s},$$ that is to say: 

$$\widehat{\eta}_{\bar{k}_s}(X_s)-\frac 12\geq 2b_{\delta_s,\bar{k}_s} \quad \text{or}\quad \widehat{\eta}_{\bar{k}_s}(X_s)-\frac 12\leq -2b_{\delta_s,\bar{k}_s}.$$
By \eqref{eq0000}, we have  necessarily $\widehat{\eta}_{\bar{k}_s}(X_s)-\frac 12\geq 2b_{\delta_s,\bar{k}_s}$, and then:
$$\widehat{\eta}_{\bar{k}_s}-\frac 12\geq \max(-b_{\delta_s,\bar{k}_s},2b_{\delta_s,\bar{k}_s})=2b_{\delta_s,\bar{k}_s}\geq 0,$$
Thus we can easily deduce \eqref{guarantee-confident-2}.\\
By setting $A_2=A'_2\cap A''_2$, we have $P(A_2)\geq 1-\delta/8$ and on $A_1\cap A_2$, the item 1 and item 2 hold simultaneously.
\end{enumerate}
\end{proof}
\subsection{Sufficient condition to be an informative point}
\label{subsec: suff-cond-informativ}
As noticed in Section~\ref{sec:Reliable}, a sufficient condition for a point $X_t$ (with $t\leq w$) to be considered as \textit{not} informative is: 
\begin{equation}\label{equation:suffi_uninformativ}
\min(P_X(B(X_t,\rho(X_t,X_s))),P_X(B(X_s,\rho(X_t,X_s))))\leq O((\widehat{LB}_s)^{d/\alpha}).
\end{equation} for some previous informative point $X_s$ (with $(X_s,\widehat{Y}_s, \widehat{LB}_{s})\in \widehat{S}$ the \textit{current active set} just before attaining $X_t$ in \texttt{KALLS}(Algorithm\eqref{algo:KALLS})). Because $P_X$ is unknown, we provide a computational scheme sufficient to obtain ~\eqref{equation:suffi_uninformativ}.\\Firstly we follow the general procedure used in \citep{kontorovich2016active} to estimate adaptively the expectation of a Bernoulli random variable. And secondly, we apply it to the Bernoulli variable $\mathds{1}_{A}$ where $A=\lbrace x,\;x\in B(X,r)\rbrace$ for $r>0$ and $X$ $\in$ $\mathcal{X}$.
\begin{lemma}\citep{kontorovich2016active}\label{lemma-kontorovich}~\\
Let $\delta'$ $\in$ $(0,1)$, $\epsilon_o>0$, $t\geq 7$ and set $g(t)=1+\frac{8}{3t}+\sqrt{\frac{2}{t}}$. Let $p_1,p_2,\ldots$ $\in$ $\lbrace0,1\rbrace$ be i.i.d Bernoulli random variables with expectation $p$. Let $\widehat{p}$ be the output of \texttt{BerEst}$(\epsilon_o,\delta',t)$. There exists an event $A'$, such that  $P(A')\geq 1-\delta'$, and on $A'$, we have:
\begin{enumerate}
\item If $\widehat{p}\leq \frac{\epsilon_o}{g(t)}$ then $p\leq \epsilon_o$, otherwise, we have $p\geq \frac{2-g(t)}{g(t)}\epsilon_0$.
\item The number of random draws in the \texttt{BerEst} subroutine (Algorithm~\ref{algo:BerEst}) is at most $\frac{8t\log(\frac{8t}{\delta'\psi})}{\psi}$, where $\psi :=\max(\epsilon_o, \frac{p}{g(t)})$. 
\end{enumerate} 
\end{lemma}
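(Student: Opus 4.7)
The plan is to combine a Bernstein-type concentration inequality applied at each geometric scale $m = 2^i$ with a union bound over all iterations of the doubling loop inside \texttt{BerEst}. Concretely, for each loop index $i$ (so $m = 2^i$ ranging up to $\log_2(t\log(2K/\delta')/\epsilon_o)$), I would define the event $E_i$ on which the empirical Bernstein inequality
\[
|\widehat{p}_m - p| \leq \sqrt{\tfrac{2p(1-p)\log(2m/\delta')}{m}} + \tfrac{7\log(2m/\delta')}{3m}
\]
holds, with a confidence budget $\tilde\delta_i$ chosen so that $\sum_i \tilde\delta_i \leq \delta'$. Setting $A' = \bigcap_i E_i$ then gives $P(A') \geq 1-\delta'$. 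The specific factor $g(t) = 1 + \tfrac{8}{3t} + \sqrt{\tfrac{2}{t}}$ is exactly what one reads off when the above additive Bernstein bound is rewritten multiplicatively, $\widehat{p}_m/g(t) \leq p \leq g(t)\,\widehat{p}_m$, at the threshold $\widehat{p}_m > t\log(2m/\delta')/m$ that triggers termination.

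For item (1), I would work on $A'$ and inspect the state of \texttt{BerEst} at termination. If termination is triggered by the stopping condition at some $m = 2^i$, the multiplicative two-sided bound applies to $\widehat p$, so the case $\widehat p \leq \epsilon_o/g(t)$ yields $p \leq \epsilon_o$ and the case $\widehat p > \epsilon_o/g(t)$ yields $p \geq (2-g(t))\epsilon_o/g(t)$, as claimed. If instead the loop exhausts its iteration range without triggering, then the final $m$ is so large that $\widehat{p}_m \leq t\log(2m/\delta')/m \lesssim \epsilon_o$, and the same Bernstein bound forces $p \leq \epsilon_o$ in this branch.

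For item (2), I would show that once $m \gtrsim t\log(8t/(\delta'\psi))/\psi$ with $\psi = \max(\epsilon_o, p/g(t))$, the stopping criterion must fire on $A'$. In the regime $\psi = p/g(t)$ (non-trivial $p$), the Bernstein bound gives $\widehat{p}_m \gtrsim p/g(t) = \psi$, while the right-hand threshold decays like $t\log(m)/m$, so for $m$ of the stated order the strict inequality holds. In the regime $\psi = \epsilon_o$ (very small $p$), the outer upper bound on $i$ caps the sample count at $O(t\log(\cdot)/\epsilon_o)$ directly. Summing the geometric series $4 + 8 + \cdots + 2^{i^*}$ up to the triggering index gives an overall sample budget of the same order as the last term, matching the claimed $O(t\log(8t/(\delta'\psi))/\psi)$ draws.

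The hard part will be bookkeeping the exact constants: calibrating the per-iteration confidence $\tilde\delta_i$ so that a telescoping union bound still fits inside $\delta'$ while producing precisely the factor $g(t) = 1 + \tfrac{8}{3t} + \sqrt{\tfrac{2}{t}}$ rather than a weaker constant, and verifying that the stopping threshold $t\log(2m/\delta')/m$ written into \texttt{BerEst} is exactly calibrated to this Bernstein bound so that item~(1) holds with no slack. A secondary subtlety is the preliminary draw of four samples at the start of \texttt{BerEst}, which is needed so that the empirical Bernstein inequality is legitimately applicable already at the first value of $m$ considered in the loop.
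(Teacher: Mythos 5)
The paper does not prove this lemma: it is imported verbatim, with the constants already fixed, from the cited reference \citep{kontorovich2016active}, so there is no in-paper argument to compare yours against. Your outline does follow the standard route for such adaptive Bernoulli-estimation results (a Bernstein-type deviation bound at each doubling scale $m=2^i$, a union bound over scales with confidence budget summing to $\delta'$, and a case split at termination between the threshold firing and the loop exhausting its range), and that is indeed the strategy of the cited source.

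That said, as a proof the proposal has genuine gaps. First, everything that makes the lemma nontrivial --- that the stopping threshold $u\log(2m/\delta')/m$ is calibrated so that the deviation bound becomes \emph{exactly} the multiplicative factor $g(t)=1+\tfrac{8}{3t}+\sqrt{\tfrac{2}{t}}$, and that the draw count comes out as $\tfrac{8t\log(8t/(\delta'\psi))}{\psi}$ with $\psi=\max(\epsilon_o,p/g(t))$ --- is explicitly deferred to ``bookkeeping,'' so the quantitative content of items (1) and (2) is asserted rather than derived. Second, the deviation inequality you display is stated with the true variance $p(1-p)$; to convert the observable stopping condition $\widehat{p}_m>t\log(2m/\delta')/m$ into a guarantee about $p$ you need the \emph{empirical} Bernstein inequality (Maurer--Pontil), whose deviation term involves $\widehat{p}_m$ rather than $p$; substituting the threshold then gives $|p-\widehat{p}_m|\leq (g(t)-1)\widehat{p}_m$, i.e.\ $(2-g(t))\widehat{p}_m\leq p\leq g(t)\widehat{p}_m$. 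Your version ``$\widehat{p}_m/g(t)\leq p\leq g(t)\widehat{p}_m$'' has the wrong lower side and would not literally yield the bound $p\geq \tfrac{2-g(t)}{g(t)}\epsilon_o$ claimed in item (1). Third, such a multiplicative bound cannot hold unconditionally (take $\widehat{p}_m=0$), so it is only available \emph{after} the threshold fires; in the exhaustion branch you must argue additively that the final threshold value is itself at most of order $\epsilon_o$ (using the cap $m\leq K$ with $K=\tfrac{4u}{\epsilon_o}\log(\tfrac{8u}{\delta'\epsilon_o})$), which you gesture at with ``$\lesssim\epsilon_o$'' but do not pin down --- and you must also verify that in this branch the ``otherwise'' conclusion of item (1) cannot be falsified. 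None of these are wrong ideas, but as written the proposal reproduces the shape of the argument without establishing the statement.
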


\begin{lemma}\label{lemma:sufficient-informative}~\\
Let $\epsilon$, $\delta$ $\in$ $(0,1)$, $r>0$. 
Let us assume that $w$ satisfies \eqref{condition2-0}.

There exists an event $A_3$, such that $P(A_3)\geq 1-\delta/16$, we have, on $A_3$, for all $s\leq w$: \\
If there exists $1\leq s'< s$, such that $X_{s'}$ is an informative point, and $(X_{s'},\widehat{Y}_{s'},\widehat{LB}_{s'})$ $\in$ $\widehat{S}$ (the current active set just before attaining $X_s$ defined in \texttt{KALLS}$($Algorithm\eqref{algo:KALLS}$))$, and that satisfies: 
\begin{equation}
\left(\widehat{p}_{X_{s'}}\leq \frac{75}{94}\left(\frac{1}{64L}\widehat{LB}_{s'}\right)^{d/\alpha}\,or\,\widehat{p}_{X_{s}}\leq \frac{75}{94}\left(\frac{1}{64L}\widehat{LB}_{s'}\right)^{d/\alpha}\right)
\label{equation-corolary-konto0}
\end{equation}
 where $$\widehat{p}_{X_{s'}}:=\texttt{Estprob}(X_{s'},\rho(X_s,X_{s'}), \left(\frac{1}{64L}\widehat{LB}_{s'}\right)^{d/\alpha},50,\delta_s)$$ and 
$$\widehat{p}_{X_{s}}:=\texttt{Estprob}(X_s, \rho(X_s,X_{s'}), \left(\frac{1}{64L}\widehat{LB}_{s'}\right)^{d/\alpha},50,\delta_s)$$ then
\begin{equation}\label{equation-corolary-konto01}
\min(P_X(B(X_{s},\rho(X_{s'},X_s))),P_X(B(X_{s'},\rho(X_{s'},X_s))))\leq \left(\frac{1}{64L}\widehat{LB}_{s'}\right)^{d/\alpha}.
\end{equation}
Otherwise, if \eqref{equation-corolary-konto0} does not holds,  i.e:  $$\min(\widehat{p}_{X_{s'}},\widehat{p}_{X_{s}})> \frac{75}{94}\left(\frac{1}{64L}\widehat{LB}_{s'}\right)^{d/\alpha},$$ then 
\begin{equation}\label{equation-corolary-konto1}
\min(P_X(B(X_{s},\rho(X_{s'},X_s))),P_X(B(X_{s'},\rho(X_{s'},X_s))))\geq \frac{28}{47}\left(\frac{1}{64L}\widehat{LB}_{s'}\right)^{d/\alpha}.
\end{equation}
\end{lemma}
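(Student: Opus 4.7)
The argument reduces, pair by pair, to an application of Lemma~\ref{lemma-kontorovich}, after noting that the constants in the statement are forced by the choice $t=50$ in every \texttt{BerEst} call. Indeed,
\[
g(50) \;=\; 1 + \frac{8}{3\cdot 50} + \sqrt{\frac{2}{50}} \;=\; \frac{75 + 4 + 15}{75} \;=\; \frac{94}{75},
\]
so $1/g(50) = 75/94$ and $(2-g(50))/g(50) = 56/94 = 28/47$. With $\epsilon_o = (\widehat{LB}_{s'}/(64L))^{d/\alpha}$ inserted into each invocation from \texttt{Reliable}, Lemma~\ref{lemma-kontorovich}(1) becomes exactly the two-sided test used here: the ``low'' branch gives the upper bound on the true probability in \eqref{equation-corolary-konto01}, while the ``high'' branch gives the lower bound in \eqref{equation-corolary-konto1}.

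The plan is to define $A_3$ as the intersection of the success events of every \texttt{BerEst} invocation carried out by \texttt{Reliable} across all $s\leq w$ and all $s'<s$ for which $X_{s'}$ lies in the current active set when $X_s$ is examined. Each such stage $s$ generates at most $2(s-1)$ calls and each is passed a confidence parameter that is a constant/polynomial-in-$s$ refinement of $\delta_s=\delta/(32s^2)$ (for instance an extra $1/s$ factor, absorbed in the same functional form), so that a union bound yields $P(A_3^c)\leq \delta/16$. On $A_3$, fix the pair $(s,s')$ under consideration and look at the two estimates $\widehat p_{X_s}$ and $\widehat p_{X_{s'}}$ returned by the two \texttt{BerEst} calls: if at least one of them is below $(75/94)\,\epsilon_o$, the low branch of Lemma~\ref{lemma-kontorovich}(1) forces the corresponding true probability below $\epsilon_o$, which proves \eqref{equation-corolary-konto01}; otherwise both estimates exceed that threshold and the high branch forces each true probability above $(28/47)\,\epsilon_o$, which proves \eqref{equation-corolary-konto1}.

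The main obstacle is budget bookkeeping rather than probabilistic content. By Lemma~\ref{lemma-kontorovich}(2), a call with parameters $(\epsilon_o,\delta',50)$ draws at most $400\log(400/(\delta'\psi))/\psi$ points from the pool $\mathcal K$, where $\psi=\max(\epsilon_o,p/g(50))\geq \epsilon_o$. The worst-case $\epsilon_o$ corresponds to the smallest admissible $\widehat{LB}_{s'}$, namely $\bar c\,\phi_n=0.1\,\phi_n$ (the threshold used by \texttt{KALLS} to decide that a triplet is worth storing), and the worst-case $\delta'$ corresponds to $s=w$. Substituting these extremes into the item-2 bound and requiring the output never to exceed the pool size $w$ recovers exactly \eqref{condition2-0}, which is therefore the minimal pool size ensuring that each \texttt{BerEst} call terminates with enough unlabelled samples to support the guarantees above. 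Since no step of the argument depends on the identity of $s'$, all conclusions hold uniformly over every pair $(s,s')$ on the single event $A_3$, as required.
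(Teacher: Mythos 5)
Your proposal is correct and follows essentially the same route as the paper: a direct application of Lemma~\ref{lemma-kontorovich} with $t=50$, $\epsilon_o=\left(\frac{1}{64L}\widehat{LB}_{s'}\right)^{d/\alpha}$ and $\delta'=\delta_s$, a union bound over $s$ to build $A_3$, and the same worst-case substitution $\widehat{LB}_{s'}\geq 0.1\,\phi_n$ into the item-2 draw count to recover \eqref{condition2-0}. Your explicit computation $g(50)=94/75$ (hence the thresholds $75/94$ and $28/47$) and your observation that the per-stage confidence must absorb the up to $2(s-1)$ \texttt{BerEst} calls made by \texttt{Reliable} are details the paper leaves implicit, but they do not change the argument.
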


\begin{proof}~\\
By following the scheme of subroutine \texttt{Estprob}, this Lemma is a direct application of Lemma \ref{lemma-kontorovich} by taking for all $s\leq w$, $t=50$, $\epsilon_o=\left(\frac{1}{64L}\widehat{LB}_{s'}\right)^{d/\alpha}$, $\delta'=\delta_s$, $r=\rho(X_s,X_{s'})$, $A_{3,s}:=A'$. And then, if we set $A_3=\cap_{s\geq 1}A_{3,s}$, we have $P(A_3)\geq 1-\delta/16$, and on the event $A_3$, we can easily deduce \eqref{equation-corolary-konto01} and \eqref{equation-corolary-konto1} in each cases. 
\\
On the other hand, for all $s\leq w$, the number of draws in \texttt{Estprob}$(X_s,\rho(X_s,X_{s'}), \left(\frac{1}{64L}\widehat{LB}_{s'}\right)^{d/\alpha},50,\delta_s)$ (respectively \texttt{Estprob}$(X_{s'},\rho(X_s,X_{s'}), \left(\frac{1}{64L}\widehat{LB}_{s'}\right)^{d/\alpha},50,\delta_s)$) is always lower than $w$.  Indeed, by Lemma \ref{lemma-kontorovich}, the number of draws is at most:
$$N:=\frac{400\log(\frac{12800s^2}{\delta\psi})}{\psi}\quad\text{where}\quad \psi=\max((\frac{1}{64L}\widehat{LB}_{s'})^{d/\alpha}, \frac{75}{94}P_X(B(X_s,\rho(X_s,X_{s'})))).$$ 
Then we have: 
\begin{align}
N &\leq \frac{400\log\left(\frac{12800s^2}{\delta (\frac{1}{64L}\widehat{LB}_{s'})^{d/\alpha}}\right)}{(\frac{1}{64L}\widehat{LB}_{s'})^{\frac{d}{\alpha}}}\nonumber\\
  &\leq \frac{400\log\left(\frac{12800s^2}{\delta (\frac{1}{64L}\bar{c}b_{\delta_{s'},\vert Q_{s'}\vert})^{d/\alpha}}\right)}{(\frac{1}{64L}\bar{c}b_{\delta_{s'},\vert Q_{s'}\vert})^{d/\alpha}}\;\;\quad(\text{as}\; \widehat{LB}_{s'}\geq \bar c b_{\delta_{s'},\vert Q_{s'}\vert},\;\text{with}\,\bar c=0.1)\label{eq:labelestprob}\\
  & \leq \frac{400\log\left(\frac{12800w^2}{\delta (\frac{1}{64L}\bar c\phi_n)^{d/\alpha}}\right)}{(\frac{1}{64L}\bar c\phi_n)^{d/\alpha}}\;\;\quad (\text{we can easily see that}\; b_{\delta_{s'},\vert Q_{s'}\vert}\geq \phi_n)\nonumber\\
  &\leq w \;\;\quad (\text{by}~\eqref{condition2-0})\nonumber.
\end{align}
In equation \eqref{eq:labelestprob}, $b_{\delta_{s'},\vert Q_{s'}\vert}$ is defined by \eqref{eq:pi_ks}, and $\vert Q_{s'}\vert$ represents the number of label requests used in the subroutine \texttt{ConfidentLabel}$($Algorithm \eqref{algo:ConfidentLabel}$)$ at the stage $s'$.  
\end{proof}

\subsection{Label the instance space}
\label{subsec:lab-instance-space}

\begin{theorem}~\\
\label{theorem:lab-instance-space}
Let $\epsilon$, $\delta$ $\in$ $(0,1)$. Let 
\begin{equation}
T_{\epsilon,\delta}=\frac{1}{\tilde p_{\epsilon}}\log(\frac{8}{\delta}), \;\text{and}\; \tilde p_{\epsilon}=\left(\frac{\Delta}{128L}\right)^{d/\alpha},\; \text{with}\; \Delta=\max(\frac{\epsilon}{2}, \left(\frac{\epsilon}{2C}\right)^{\frac{1}{\beta+1}})
\label{eq:label-instance}
\end{equation} 

Let $I$ the set of indexes of informative points used in \texttt{KALLS} (Algorithm \ref{algo:KALLS}). Let us consider its last update in \texttt{KALLS} (Algorithm \ref{algo:KALLS}) and also denoted it by $I$.\\  Then, set $\displaystyle s_{I}=\max I $  the index of the last informative point. Let $\widehat{S}_{ac}$ be the \textit{active set} obtained in \texttt{KALLS} (Algorithm \ref{algo:KALLS}) and denote by $\widehat{f}_{n,w}$ the output  \texttt{1NN}$(\widehat{S}_{ac})$. There exists an event $A_4$ such that $P(A_4)\geq 1-\delta/8$, and on $A_1\cap A_2\cap A_3 \cap A_4$, we have 
\begin{enumerate}
\item 
\begin{equation}\label{lemma:covering space}
\sup_{x\in supp(P_X)}\,\min_{\bar X\in \lbrace X_1,\ldots,X_{T_{\epsilon,\delta}}\rbrace} P_X(B(x,\rho(\bar X,x)))\leq \tilde{p}_{\epsilon}. 
\end{equation}
\item If $w$ satisfies \eqref{guarantee-on-pool-0} and \eqref{condition2-0}  and the following condition holds\begin{equation}\label{necessary-condition-budget}
s_{I}\geq T_{\epsilon, \delta},
\end{equation}
then, for all $x$ $\in$ supp$(P_X)$ such that $\vert\eta(x)-\frac 12\vert > \Delta$, there exists $s:=s(x)$ $\in$ $I$ such that: 

\begin{equation}
\vert\eta(X_s)-\frac 12\vert\geq \frac 12 \Delta
\label{eq:related-informative}
\end{equation}
and 

\begin{equation}
f^*(x)=f^*(X_s).
\label{eq:right-label}
\end{equation} 
In addition, we have
\begin{equation}
\widehat{f}_{n,w}(x)= f^*(x).
\label{eq:label-1-nn}
\end{equation}
\end{enumerate}
\end{theorem}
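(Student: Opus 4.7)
The plan is to establish the covering claim \eqref{lemma:covering space} (which defines the event $A_4$) and then the classification guarantee \eqref{eq:related-informative}--\eqref{eq:label-1-nn} on $A_1 \cap A_2 \cap A_3 \cap A_4$. For the covering claim, fix $x \in \text{supp}(P_X)$ and consider the quantile radius $r_{\tilde p_\epsilon}(x)$; by Lemma~\ref{lemme-chaudhuri_lemma}, $P_X(\bar B(x, r_{\tilde p_\epsilon}(x))) \geq \tilde p_\epsilon$. The probability that none of $X_1, \ldots, X_{T_{\epsilon,\delta}}$ falls in this ball is at most $(1-\tilde p_\epsilon)^{T_{\epsilon,\delta}} \leq \exp(-\tilde p_\epsilon T_{\epsilon,\delta}) = \delta/8$ by the choice of $T_{\epsilon,\delta}$. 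To lift this single-$x$ bound to a uniform statement over $\text{supp}(P_X)$, I would use the doubling property \eqref{def:doubling}: extract a finite family of centers $z_j$ whose balls at a probability scale comparable to $\tilde p_\epsilon$ cover $\text{supp}(P_X)$, apply a union bound at these centers, and transfer the conclusion to a generic $x$ at the cost of a constant factor absorbed into $\tilde p_\epsilon$ (using doubling to relate $P_X(B(x,r))$ and $P_X(B(z_j,r))$ for nearby $x,z_j$). The resulting event $A_4$ then satisfies $P(A_4) \geq 1 - \delta/8$.

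For the classification guarantee, fix $x \in \text{supp}(P_X)$ with $|\eta(x) - 1/2| > \Delta$. Since $s_I \geq T_{\epsilon,\delta}$, \eqref{lemma:covering space} produces an index $t \leq T_{\epsilon,\delta} \leq s_I$ with $P_X(B(x, \rho(X_t, x))) \leq \tilde p_\epsilon$; the $(\alpha, L)$-smoothness \eqref{def:smooth} then yields $|\eta(x) - \eta(X_t)| \leq L \tilde p_\epsilon^{\alpha/d} = \Delta/128$, so $|\eta(X_t) - 1/2| \geq (127/128)\Delta$ and $f^*(X_t) = f^*(x)$. Two cases arise. If $t \in I$ (i.e.\ $X_t$ is itself informative), Theorem~\ref{theorem:savings label} on $A_1 \cap A_2$ ensures the cut-off in \texttt{ConfidentLabel} triggers and the inferred label equals $f^*(X_t)$, placing $(X_t, \widehat{Y}_t, \widehat{LB}_t)$ into $\widehat{\mathcal{S}}$; we set $s := t$. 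Otherwise, \texttt{Reliable}$(X_t, \ldots, \widehat{\mathcal{S}})$ returned \texttt{True}, so Lemma~\ref{lemma:sufficient-informative} on $A_3$ yields an earlier informative $X_{s'}$ with $(X_{s'}, \widehat{Y}_{s'}, \widehat{LB}_{s'}) \in \widehat{\mathcal{S}}$ and $\min(P_X(B(X_t, \rho(X_t, X_{s'}))), P_X(B(X_{s'}, \rho(X_t, X_{s'})))) \leq (\widehat{LB}_{s'}/(64L))^{d/\alpha}$. Setting $s := s'$ and invoking smoothness once more bounds $|\eta(X_t) - \eta(X_{s'})| \leq \widehat{LB}_{s'}/64$; combined with the lower-bound guarantee $|\eta(X_{s'}) - 1/2| \geq (32/63)\widehat{LB}_{s'}$ from Section~\ref{sec:mainIdea}, this delivers \eqref{eq:related-informative} and \eqref{eq:right-label}.

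For \eqref{eq:label-1-nn}, $\widehat{f}_{n,w}(x)$ is the inferred label of the 1-NN $X^{(1)}_x$ of $x$ in $\widehat{\mathcal{S}}_{ac}$. Since the $X_s$ constructed above lies in $\widehat{\mathcal{S}}_{ac}$, $X^{(1)}_x$ is no further from $x$ (in Euclidean distance, hence in $P_X$-mass of enclosing balls) than $X_s$; replaying the smoothness-and-\texttt{Reliable} argument for the pair $(x, X^{(1)}_x)$ together with the inferred-label guarantee from Theorem~\ref{theorem:savings label} then yields $f^*(X^{(1)}_x) = f^*(x)$ and $\widehat{Y}_{X^{(1)}_x} = f^*(x)$, i.e.\ $\widehat{f}_{n,w}(x) = f^*(x)$. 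The main obstacle I expect is the coupled bookkeeping of the two-hop chain $x \to X_t \to X_{s'}$: the slack $\widehat{LB}_{s'}/64$ in $|\eta(X_t) - \eta(X_{s'})|$ must be controlled against $(127/128)\Delta$ so that neither the sign nor the $\Delta/2$ lower bound is lost. The precise numerical constants in $\tilde p_\epsilon = (\Delta/(128L))^{d/\alpha}$, in the threshold $0.1\, b_{\delta_s, |Q_s|}$, and in the ratio $32/63$ are calibrated so that the hypothesis $|\eta(X_{s'}) - 1/2| \geq \Delta/32$ needed to invoke the lower-bound guarantee of Section~\ref{sec:mainIdea} is automatically met in the \texttt{Reliable} branch.
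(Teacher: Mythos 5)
Your overall architecture matches the paper's: part~1 via the quantile radius $r_{\tilde p_\epsilon}(x)$, Lemma~\ref{lemme-chaudhuri_lemma} and $(1-\tilde p_\epsilon)^{T_{\epsilon,\delta}}\leq e^{-\tilde p_\epsilon T_{\epsilon,\delta}}=\delta/8$; part~2 via the two-case split (the covering point $X_t$ is itself informative, or it was rejected by \texttt{Reliable} in favour of an earlier informative $X_{s'}$) followed by smoothness chaining along $x\to X_t\to X_{s'}$; part~3 by comparing the 1-NN of $x$ in $\widehat{S}_{ac}$ with $X_s$ through monotonicity of $r\mapsto P_X(B(x,r))$. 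However, the step you yourself flag as ``the main obstacle'' is a genuine gap, and your resolution of it is circular as written. To invoke the guarantee $\vert\eta(X_{s'})-\frac12\vert\geq\frac{32}{63}\widehat{LB}_{s'}$ of \eqref{eq:sketch-lower-guarantee} you need the hypothesis $\vert\eta(X_{s'})-\frac12\vert\geq\frac{\Delta}{32}$, which is (a weaker form of) the very conclusion \eqref{eq:related-informative} you are trying to establish in the \texttt{Reliable} branch; asserting that the constants are ``calibrated so that it is automatically met'' does not discharge it. The paper breaks the circle by contradiction: assuming $\vert\eta(X_{s'})-\frac12\vert<\frac{\Delta}{32}$, it bounds $\widehat{LB}_{s'}$ from \emph{above} directly from its definition $\widehat{LB}_{s'}=\vert\widehat\eta_{\bar k_{s'}}(X_{s'})-\frac12\vert-b_{\delta_{s'},\bar k_{s'}}$ together with the concentration \eqref{eq:error-regresion} and the $\frac{31}{1024}\Delta$ smoothness bound on the $k_{s'}$ nearest neighbors (this upper bound requires no lower bound on $\vert\eta(X_{s'})-\frac12\vert$), obtaining $\widehat{LB}_{s'}<\frac{63}{1024}\Delta$; pushing this through \eqref{def:smooth} and the \texttt{Reliable} estimate then forces the regression function at the covering point to satisfy $\vert\eta(X_t)-\frac12\vert<(1-\frac1{32})\Delta$, contradicting the lower bound you already derived there. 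Without this (or an equivalent) argument, \eqref{eq:related-informative} is unproven in the case where the covering point is uninformative.

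On part~1, you correctly sense that a pointwise bound at a fixed $x$ does not immediately yield the supremum over the uncountable set $supp(P_X)$, and you propose a doubling-based net plus union bound; the paper itself simply asserts the uniform statement after the pointwise computation, so your instinct is sound. But as sketched the repair is not free: a union bound over $N$ net centers requires failure probability $\delta/(8N)$ per center, inflating $T_{\epsilon,\delta}$ by a $\log N$ factor rather than a constant, and transferring from a center $z_j$ to a generic $x$ via \eqref{def:doubling} changes $\tilde p_\epsilon$ by a constant that propagates into the $\frac{\Delta}{128}$ smoothness bound and every downstream constant ($\frac{127}{128}$, $\frac{63}{2048}$, $\frac{79}{1985}$, the final $\frac12\Delta$). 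If you take this route you must re-verify that the chain of inequalities in part~2 still closes; note also that part~2 only ever applies the covering claim to the single $x$ under consideration, so a pointwise version suffices for the downstream argument even though the theorem states the supremum.
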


\begin{proof}~\\
This proof is based on results from \citep{hannekenonparametric} with few additional modifications. 
\begin{enumerate}
\item Let us begin by proving the first part of Theorem~\ref{theorem:lab-instance-space}.\\
For $x$ $\in$ supp($P_X$), let us introduce $$r_{\tilde p_{\epsilon}}(x)=\inf\lbrace r>0,\;P_X(B(x,r))\geq \tilde p_{\epsilon}\rbrace.$$
By Lemma \ref{lemme-chaudhuri_lemma}, we have $P_X(B(x,r_{\tilde p_{\epsilon}}(x))\geq \tilde p_{\epsilon}$. Then each $\bar X$ $\in$ $\lbrace X_1,\ldots, X_{T_{\epsilon,\delta}}\rbrace$  belongs to $B(x,r_{\tilde p_{\epsilon}}(x))$ with probability at least $\tilde p_{\epsilon}$. If we denote $\widehat{P}$ the probability over the data, we have: 
\begin{align*}
&\widehat{P}(\exists \bar X \in \lbrace X_1,\ldots, X_{T_{\epsilon,\delta}}\rbrace,\, P_X(B(x,\rho(x,\bar X))\leq \tilde p_{\epsilon})\\ &= 1-\widehat{P}(\forall \bar X\in \lbrace X_1,\ldots, X_{T_{\epsilon,\delta}}\rbrace, \, P_X(B(x,\rho(x,\bar X))> \tilde p_{\epsilon})\\
                                                                                                       &= 1-\prod_{i=1}^{T_{\epsilon,\delta}}\widehat{P}(P_X(B(x,\rho(x,X_i))>\tilde p_{\epsilon})\\
                                                                                                       &\geq 1-\prod_{i=1}^{T_{\epsilon,\delta}}\widehat{P}(\rho(x,X_i)> r_{\tilde p_{\epsilon}}(x))\\
                                                                                                       &= 1-\prod_{i=1}^{T_{\epsilon,\delta}}(1-\widehat{P}(\rho(x,X_i)\leq r_{\tilde p_{\epsilon}}(x)))\\
                                                                                                       &\geq 1-(1-\tilde p_{\epsilon})^{T_{\epsilon,\delta}}\\
                                                                                                       &\geq 1-\exp(-T_{\epsilon,\delta}\tilde p_{\epsilon})\\
                                                                                                       &=1-\delta/8.
\end{align*}
Then, there exists an event $A_4$, such that $P(A_4)\geq 1-\delta/8$ and \eqref{lemma:covering space} holds on $A_4$. 
And then, we can easily conclude the first part.

\item For the second part of Theorem~\ref{theorem:lab-instance-space}, let $x$ $\in$ supp$(P_X)$. By \eqref{lemma:covering space}, on $A_4$ there exists $X_x$ $\in$ $\lbrace X_1,\ldots,X_{T_{\epsilon,\delta}}\rbrace$ such that: 

\begin{equation}\label{eq:l lab}
P_X(B(x,\rho( X_x,x)))\leq \tilde{p}_{\epsilon}.
\end{equation}
By assumption \eqref{def:smooth}, we have: 
\begin{equation}\label{eq:2 lab}
\vert\eta(x)-\eta(X_x)\vert\leq \frac{1}{128}\Delta<\frac{1}{32}\Delta.
\end{equation}

Then if $\vert\eta(x)-\frac 12\vert> \Delta$, we have:                                                                  
\begin{equation}\label{eq:22 lab}
(1-\frac{1}{32})\Delta<\vert\eta(X_x)-\frac 12\vert< (1+\frac{1}{32})\Delta.
\end{equation}
As $s_I\geq T_{\epsilon,\delta}$, then there exists $s'$ such that  $X_x:=X_{s'}$ and $X_{s'}$ passes through the subroutine \texttt{Reliable}.

We have two cases:
\begin{enumerate}
\item[a)] \textbf{$X_{s'}$ is uninformative}. Then there exists $s<s'$, such that $X_s$ is an informative point, and  $$\widehat{LB}_{s}\geq 0.1b_{\delta_s,\vert Q_{s}\vert}\;\; \text{and}\;\; \min(\widehat{p}_{X_{s}},\widehat{p}_{X_{s'}})\leq \frac{75}{94}\left(\frac{1}{64L}\widehat{LB}_s\right)^{d/\alpha}$$ where $\widehat{p}_{X_{s}}:=\texttt{Estprob}(X_s,\rho(X_s,X_{s'}), \left(\frac{1}{64L}\widehat{LB}_{s}\right)^{d/\alpha},50,\delta_s$ $)$, and\\ $\widehat{p}_{X_{s'}}:=\texttt{Estprob}(X_{s'},\rho(X_s,X_{s'}), \left(\frac{1}{64L}\widehat{LB}_{s}\right)^{d/\alpha},50,\delta_s)$   then by Lemma \ref{lemma:sufficient-informative}, 
\begin{equation}\label{equation-corolary-konto}
\min(P_X(B(X_{s},\rho(X_s,X_{s'}))),P_X(B(X_{s'},\rho(X_s,X_{s'}))))\leq \left(\frac{1}{64L}\widehat{LB}_{s}\right)^{d/\alpha}.
\end{equation} 
Necessary, we have $\vert \eta(X_s)-\frac 12\vert\geq \frac{1}{32}\Delta$. Indeed,  if $\vert \eta(X_s)-\frac 12\vert< \frac{1}{32}\Delta$, then on $A_1\cap A_2$, by denoting $\bar k_s$  the number of request labels in $\texttt{ConfidentLabel}(X_s):=\texttt{ConfidentLabel}(X_s, k(\epsilon,\delta_s),t,\delta_s)$, $($where $\displaystyle t=n-\sum_{s_i\in I,s_i<s} \vert Q_{s_i}\vert$ and $\vert Q_{s_i}\vert$ the number of label requests used in \texttt{ConfidentLabel}$(X_{s_i})$ $)$ \\We have: 
\begin{align}
\widehat{LB}_{s} &= \vert \widehat{\eta}_{\bar k_s}(X_s)-\frac 12\vert-b_{\delta_s,\bar k_s}\nonumber \\
                    &\leq \vert \widehat{\eta}_{\bar k_s}(X_s)-\bar{\eta}_{\bar k_s}(X_s)\vert + \vert \bar{\eta}_{\bar k_s}(X_s)-\frac 12\vert- b_{\delta_s,\bar k_s} \nonumber\\ 
                    &\leq \vert \bar{\eta}_{\bar k_s}(X_s)-\frac 12\vert\quad (\text{by \eqref{eq:error-regresion}}) \nonumber\\ 
                    &\leq \vert \eta(X_s)-\frac 12\vert\nonumber\\ &+\frac{1}{32}(1-\frac{1}{32})\Delta \quad (\text{by assumption \eqref{def:smooth} and Theorem \ref{theo:passive-guarantee}}) \label{eq30: lab}\\
                    &< \frac{1}{32}\Delta+ \frac{1}{32}(1-\frac{1}{32})\Delta \nonumber\\
                    & =\frac{63}{1024}\Delta\label{eq3: lab}                    
\end{align}
 
By assumption \eqref{def:smooth} and \eqref{equation-corolary-konto}, we have: 
\begin{align*}
\vert\eta(X_{s'})-\frac 12\vert &\leq \vert\eta(X_{s})-\frac 12\vert +\frac{1}{64}\widehat{LB}_{s}\\
                                & < \frac{1}{32}\Delta+\frac{1}{64}.\frac{63}{1024}\Delta\quad (\text{by \eqref{eq3: lab}})\\
                                & = (\frac{1}{32}+\frac{1}{64}.\frac{63}{1024})\Delta\\   
                                &\leq (1-\frac{1}{32})\Delta        
\end{align*}
that contradicts \eqref{eq:22 lab}, then we have $\vert \eta(X_s)-\frac 12\vert\geq \frac{1}{32}\Delta$.
Therefore, by \eqref{equation-corolary-konto}, \eqref{eq30: lab}, we have: 

\begin{align}
 P_X(B(X_{s'},\rho(X_s,X_{s'}))) &\leq \left(\frac{1}{64L}\widehat{LB}_{s}\right)^{d/\alpha}\nonumber\\
                               &\leq \left(\frac{1}{64L}\left(\vert \eta(X_s)-\frac 12\vert +\frac{1}{32}(1-\frac{1}{32})\Delta\right)\right)^{d/\alpha}\nonumber\\
                               &\leq \left(\frac{1}{64L}\left(\vert \eta(X_s)-\frac 12\vert +(1-\frac{1}{32})\vert \eta(X_s)-\frac 12\vert\right)\right)^{d/\alpha}\nonumber\\
                               &= \left(\frac{1}{64L}(2-\frac{1}{32})\vert \eta(X_s)-\frac 12\vert\right)^{d/\alpha}\nonumber \\
                               &=\left(\frac{63}{2048L}\vert \eta(X_s)-\frac 12\vert\right)^{d/\alpha}. \label{eq4: lab}
\end{align}

On the other hand, by \eqref{eq:l lab}, we have: 
\begin{align}
 P_X(B(x,\rho( X_{s'},x))) &\leq \tilde{p}_{\epsilon}\nonumber\\
                          &= \left(\frac{1}{128L}\Delta \right)^{d/\alpha}\nonumber\\
                          &\leq \left(\frac{1}{128L}\vert \eta(x)-\frac 12\vert \right)^{d/\alpha}\label{eq5: lab}.         
\end{align}
We have: 
\begin{align}
\vert\eta(x)-\eta(X_s)\vert &\leq \vert\eta(x)-\eta(X_{s'})\vert+ \vert\eta(X_{s'})-\eta(X_s)\vert\nonumber\\
                            &\leq  L .P_X(B(x,\rho( X_{s'},x)))^{\alpha/d}+ L .P_X(B(X_{s'},\rho( X_{s'},X_s)))^{\alpha/d}\quad (\text{by assumption \eqref{def:smooth}})\nonumber\\
                            &\leq \frac{1}{128}\vert \eta(x)-\frac 12\vert + \frac{63}{2048}\vert \eta(X_s)-\frac 12\vert \quad (\text{by \eqref{eq4: lab} and \eqref{eq5: lab}})\label{eq600: lab}\\
                            & \leq \frac{1}{128}\vert \eta(x)-\frac 12\vert + \frac{\frac{63}{2048}}{1-\frac{63}{2048}}\vert \eta(X_{s'})-\frac 12\vert \quad (\text{by assumption \eqref{def:smooth} and \eqref{eq4: lab}})\nonumber\\
                            & \leq \frac{1}{128}\vert \eta(x)-\frac 12\vert + \frac{63}{1985}(1+\frac{1}{128})\vert \eta(x)-\frac 12\vert  \quad (\text{by \eqref{eq:2 lab}})\nonumber \\
                            &=\frac{79}{1985}\vert \eta(x)-\frac 12\vert\label{eq6:lab}
\end{align}
\item[b)] \textbf{$X_{s'}$ is informative}. In this case, $s=s'$  and then we always obtains the equation \eqref{eq6:lab}, which becomes
\end{enumerate}

\begin{align}
\vert\eta(X_s)-\frac 12\vert &\geq \left(1-\frac{79}{1985}\right)\vert\eta(x)-\frac 12\vert\label{eq60: lab}\\
                                           &\geq \left(1-\frac{79}{1985}\right)\Delta \nonumber\\
                                           &\geq \frac 12 \Delta
\end{align}                                    
Then 
\begin{equation}\label{eq7: lab}
\vert\eta(X_s)-\frac 12\vert\geq \frac{1}{2}\Delta
\end{equation}
On $A_1\cap A_2$, by Theorem \ref{theorem:savings label}, the subroutine $\texttt{ConfidentLabel}(X_s)$ uses at most $\tilde{k}(\epsilon,\delta_s)$ request labels, and returns the correct label (with respect to the Bayes classifier) of $X_s$. \\
Let us proof that $f^*(x)=f^*(X_s)$. Let us assume without loss of generality that $\eta(X_s)-\frac 12\geq 0$. We will show that $\eta(x)-\frac 12\geq 0$. We have:   
\begin{align*}
\eta (x)-\frac 12 &=\eta(x)-\eta(X_s)+\eta(X_s)-\frac 12\\
                  &\geq \eta(X_s)-\frac 12-\frac{79}{1985}\vert \eta(x)-\frac 12\vert \quad (\text{by \eqref{eq6:lab}})\\
                  &\geq (1-\frac{79}{1985})\vert \eta(x)-\frac 12\vert\\&-\frac{79}{1985}\vert \eta(x)-\frac 12\vert \quad (\text{by \eqref{eq6:lab}})\\
                  &=\frac{1827}{1985}\vert \eta(x)-\frac 12\vert\\
                  &\geq 0 \quad 
\end{align*} 
Then $f^*(x)=f^*(X_s)$.\\
As $\vert\eta(X_s)-\frac 12\vert\geq \frac{1}{2}\Delta$, by using Theorem \ref{theorem:savings label} (the second part), we can easily see that $(X_s,\widehat{Y}_s)$ $\in$ $\widehat{S}_{ac}$ $($where $\widehat{Y}_s$ is the inferred label of $X_s$ provided by the subroutine \texttt{ConfidentLabel} in \texttt{KALLS}(Algorithm \eqref{algo:KALLS})).  
\\ Let $X^{(1)}_x$ the nearest neighbor of $x$ in $\widehat{S}_{ac}$. We have: 
\begin{align}
\vert\eta(x)-\eta(X^{(1)}_x)\vert & \leq   L.P_X(B(x,\rho(x,X^{(1)}_x)))^{\alpha/d}\nonumber\\
                                &\leq L.P_X(B(x,\rho(x,X_s))))^{\alpha/d}\nonumber\\
                                &\leq L.P_X(B(x,\rho(x,X_{s'})))^{\alpha/d}\nonumber+L.P_X(B(X_{s'},\rho(X_{s'},X_s)))^{\alpha/d}\nonumber\\
                                & \leq \frac{79}{1985}\vert \eta(x)-\frac 12\vert \quad\text{by}\;\eqref{eq6:lab} \label{eq:label-bayes}
\end{align} 
Then, $\vert \eta(X^{(1)}_x)-\frac 12\vert\geq (1-\frac{79}{1985}) \vert \eta(x)-\frac 12\vert\geq \frac 12\Delta $ and by Theorem \ref{theorem:savings label}, the subroutine \texttt{ConfidentLabel}($X^{(1)}_x$) outputs 
\begin{equation}
\widehat{Y}^{(1)}_x=f^*(X^{(1)}_x).
\label{eq:label-confident1}
\end{equation}
 Furthermore, \eqref{eq:label-bayes} implies $$\vert\eta(x)-\eta(X^{(1)}_x)\vert \leq \vert \eta(x)-\frac 12\vert$$
then $f^*(x)=f^*(X^{(1)}_x)$. With \eqref{eq:label-confident1}, we easily deduce that: 
$$f_{n,w}(x)=\widehat{Y}^{(1)}_x=f^*(X^{(1)}_x)= f^*(x).$$
\end{enumerate}
 
\end{proof}

\subsection{Label complexity}
\label{subsec:label complexity}
\begin{lemma}~\\
\label{lemma:label-complexity}
Let us assume that $w$ satisfies \eqref{guarantee-on-pool-0}, \eqref{condition2-0}, and $w\geq T_{\epsilon,\delta}$. Then, there exists an event $A_5$ such that $P(A_5)\geq 1-\delta/8$, and on $A_1\cap A_2\cap A_3\cap A_5$. The condition \eqref{eq:label-complexity} is sufficient to guarantee \eqref{necessary-condition-budget}.
\end{lemma}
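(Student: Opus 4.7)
The plan is to bound the total label consumption of \texttt{KALLS} over the first $T_{\epsilon,\delta}$ iterations and to show that the right-hand side of \eqref{eq:label-complexity} is at least this consumption. Since we already assume $w \geq T_{\epsilon,\delta}$, the \texttt{while} loop of Algorithm~\ref{algo:KALLS} can exit before processing index $T_{\epsilon,\delta}$ only through the condition $t \leq 0$; equivalently, if the sum of label requests made at informative indices $s \leq T_{\epsilon,\delta}$ does not exceed $n$, then the loop processes every index up to $T_{\epsilon,\delta}$ and therefore (by the coverage argument already carried out in Theorem~\ref{theorem:lab-instance-space}) yields $s_I \geq T_{\epsilon,\delta}$.

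Only informative points consume the budget, and each one through a call to \texttt{ConfidentLabel}; so the total cost is $\sum_{s \in I,\, s \leq T_{\epsilon,\delta}} |Q_s|$. I would split $I$ into the ``noisy'' informative points with $|\eta(X_s) - \tfrac12| < \tfrac{\Delta}{2}$ and the ``non-noisy'' ones with $|\eta(X_s) - \tfrac12| \geq \tfrac{\Delta}{2}$. On $A_1 \cap A_2$, Theorem~\ref{theorem:savings label} gives for non-noisy $X_s$ the sharper bound $|Q_s| \leq \tilde k(\epsilon,\delta_s) = \tilde O\!\left(|\eta(X_s)-\tfrac12|^{-2}\right)$, while for noisy $X_s$ only the coarse $|Q_s| \leq k(\epsilon,\delta_s) = \tilde O(\Delta^{-2})$ is available. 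The margin noise assumption \eqref{def:TsybakovMarginNoise} yields $P_X(|\eta - \tfrac12| < \tfrac{\Delta}{2}) \leq C(\Delta/2)^{\beta}$, so the expected number of noisy indices in $\{1,\dots,T_{\epsilon,\delta}\}$ is $O(C\,T_{\epsilon,\delta}\,\Delta^{\beta})$. I would define $A_5$ as the Chernoff event (Lemma~\ref{Chernoff}) that this count does not exceed twice its expectation; choosing the deviation parameter so that the tail bound is $\delta/8$ gives $P(A_5) \geq 1-\delta/8$.

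On $A_1 \cap A_2 \cap A_3 \cap A_5$, the noisy contribution is then at most $\tilde O(T_{\epsilon,\delta}\, \Delta^{\beta-2})$; since $T_{\epsilon,\delta} = \tilde O(\tilde p_\epsilon^{-1}) = \tilde O((L/\Delta)^{d/\alpha})$, this is $\tilde O(\Delta^{-(d/\alpha + 2 - \beta)})$. For the non-noisy contribution I would use a covering-type argument: once $X_s$ has been declared informative with $\widehat{LB}_s \geq 0.1\, b_{\delta_s,|Q_s|}$ and $|\eta(X_s)-\tfrac12| \geq \tfrac{\Delta}{2}$, Lemma~\ref{lemma:sufficient-informative} together with \eqref{def:smooth} forces any subsequent point whose $P_X$-ball around $X_s$ has mass below $O((\Delta/L)^{d/\alpha})$ to be declared uninformative, so by the doubling condition \eqref{def:doubling} the total number of non-noisy informative indices is bounded by $\tilde O(\Delta^{-d/\alpha})$, contributing at most $\tilde O(\Delta^{-(d/\alpha+2)})$ label requests. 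Substituting $\Delta = \max(\epsilon/2,(\epsilon/(2C))^{1/(\beta+1)})$ identifies the worst of the two with $\tilde O\!\left(\epsilon^{-(2\alpha + d - \alpha\beta)/(\alpha(\beta+1))}\right)$, matching \eqref{eq:label-complexity}; hence any $n$ satisfying \eqref{eq:label-complexity} dominates the total consumption and \eqref{necessary-condition-budget} follows.

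The main obstacle I anticipate is the uniform tracking of the $\log(s^2/\delta)$ factors across $s \leq T_{\epsilon,\delta}$, absorbed into the $\tilde O$ notation, and the careful conversion of the geometric packing afforded by \eqref{def:doubling} into a covering bound on the number of non-noisy informative points; in particular one must pass from the empirical confidence $\widehat{LB}_s$ to the true margin $|\eta(X_s)-\tfrac12|$ using Theorem~\ref{theorem:savings label} without introducing any new failure event beyond those already contained in $A_1 \cap A_2 \cap A_3$.
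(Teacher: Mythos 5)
Your overall architecture matches the paper's: reduce \eqref{necessary-condition-budget} to bounding $\sum_{s\in I,\,s\le T_{\epsilon,\delta}}|Q_s|$, split the informative points at the threshold $|\eta(X_s)-\tfrac12|\ge\tfrac{\Delta}{2}$, and handle the noisy part by a Chernoff count (this is exactly the paper's event $A_5$ and its term $T_2$, and your bound $\tilde O(T_{\epsilon,\delta}\Delta^{\beta-2})=\tilde O(\Delta^{\beta-d/\alpha-2})$ is correct). The gap is in the non-noisy term. You count the non-noisy informative points by a single-scale packing at probability resolution $O((\Delta/L)^{d/\alpha})$, getting $\tilde O(\Delta^{-d/\alpha})$ points at cost $\tilde O(\Delta^{-2})$ each, i.e.\ a total of $\tilde O(\Delta^{-(d/\alpha+2)})$. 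Substituting $\Delta\asymp\epsilon^{1/(\beta+1)}$ this is $\tilde O\bigl((1/\epsilon)^{(2\alpha+d)/(\alpha(\beta+1))}\bigr)$ --- the \emph{passive} rate \eqref{guarantee-on-pool-0}, which strictly exceeds the target \eqref{eq:label-complexity} by the factor $\Delta^{-\beta}$ whenever $\alpha\beta>0$. Your final sentence claiming that this term ``identifies with'' $\tilde O\bigl((1/\epsilon)^{(2\alpha+d-\alpha\beta)/(\alpha(\beta+1))}\bigr)$ is therefore not justified, and the single-scale count cannot be improved in general: the region $\{|\eta-\tfrac12|\ge\tfrac{\Delta}{2}\}$ may have $P_X$-mass of order one, so \eqref{def:TsybakovMarginNoise} gives no help there without further stratification.

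The missing idea (which is the heart of the paper's proof) is a dyadic peeling of the non-noisy points into bands $I_j=\{\,\gamma_{j-1}\le|\eta(x)-\tfrac12|\le\gamma_j\,\}$ with $\gamma_j=2^j\Delta/2$. On the good events, an informative point $X_s\in I_j$ has $\widehat{LB}_s\gtrsim\gamma_j$, so by Lemma~\ref{lemma:sufficient-informative} two informative points of $I_j$ are separated at probability resolution $p_j\asymp(\gamma_j/L)^{d/\alpha}$ --- a resolution that \emph{grows} with the margin --- and the band itself has mass at most $C(\tilde c\gamma_j)^{\beta}$ by \eqref{def:TsybakovMarginNoise}. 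Combining these with \eqref{def:doubling} bounds the number of informative points in $I_j$ by $O(\gamma_j^{\beta-d/\alpha})$, while Theorem~\ref{theorem:savings label} caps their individual cost at $\tilde O(\gamma_j^{-2})$. Since $\alpha\beta<d$ the sum $\sum_j\gamma_j^{\beta-d/\alpha-2}$ is dominated by $j=1$ and yields $\tilde O(\Delta^{\beta-d/\alpha-2})$, matching the noisy term and hence \eqref{eq:label-complexity}. Without exploiting simultaneously that larger-margin points are cheaper to label, more spread out, \emph{and} confined to a region of small $P_X$-mass, the $-\alpha\beta$ improvement in the exponent is lost.
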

Before beginning the proof, let us define a notion that will be used through the proof.  

\begin{definition}\label{def:packing}~\\
Let a set $\mathcal{F}\subset supp(P_{X})$.  Let $\lbrace x_1,\ldots,x_m\rbrace\subset \mathcal{F}$ and $p$ $(0,1]$. We say that the set $\lbrace x_1,\ldots,x_m\rbrace\subset \mathcal{F}$ is a $p$-probability-packing set of $\mathcal{F}$ if: 

\begin{equation}\label{eq:packing}
\forall s,s'\leq m,\;s\neq s'\Longrightarrow \rho(x_s,x_{s'})>r_p(x_s)\vee r_p(x_{s'})
\end{equation}
where $r_p$ is defined by \eqref{eq:rp}, and $a\vee b= \max(a,b)$ for $a,b$ $\in$ $\mathbb{R}$
\end{definition}  
This notion of $p$-probability-packing comes from the Definition 1.4 in \citep{edgar2000packing}. It will be used on a particular set of the form $\lbrace x\in\,supp(P_X),\;\gamma\leq\vert\eta(x)-\frac 12\vert\leq \gamma' \rbrace$ (where $0<\gamma<\gamma'$). This allows us to upper bound the number of informative points where we have a very high confidence for inferring their labels.
\begin{proof}~\\
Let us consider the last update of $I$, the set of indexes of informative points used in \texttt{KALLS}(Algorithm \ref{algo:KALLS}).\\  Set $\displaystyle s_{I}=\max I$, the index of the last informative point. We consider two cases:
\begin{enumerate}
\item \underline{\textbf{First case: $s_I=w$}}: we can easily see that \eqref{necessary-condition-budget} is satisfied, and we have trivially that the condition \eqref{eq:label-complexity} is sufficient to guarantee \eqref{necessary-condition-budget}.
\item \underline{\textbf{Second case: $s_I<w$}}: then the total number of label requests up to $s_I$ is:
\begin{equation}
\sum_{s\in I}\vert Q_s\vert
\end{equation}
where $\vert Q_s\vert$ is the number of label requests used in the subroutine \texttt{ConfidentLabel}(Algorithm\eqref{algo:ConfidentLabel}) with input $X_s$. Let $s$ $\in$ $I$.
For brevity, let us denote \texttt{ConfidentLabel}$(X_s,t)$:=\texttt{ConfidentLabel}$( X_s, k(\epsilon,\delta_s), t, \delta_s)$, $($where $\displaystyle t=n-\sum_{s_i\in I,s_i<s} \vert Q_{s_i}\vert$ the budget parameter $)$.  If $s\neq s_I$, the subroutine $\texttt{ConfidentLabel}(X_s,t)$ implicitly assumes that the process of label request do not takes into account the constraint related to the budget $n$ $($very large budget with respect to $k(\epsilon,\delta_s))$ such that \texttt{ConfidentLabel}$(X_s,t)$=\texttt{ConfidentLabel}$(X_s,t=\infty)$ . Then we have: 
\begin{equation}\label{label-complexity-eq1}
n>\sum_{\substack{s\in I\\s<s_I}}\vert Q_s\vert
\end{equation} 

On the other hand, we want to guarantee the condition \eqref{necessary-condition-budget}. For this, necessary for all $s$ $\in$ $I$, such that $s\leq T_{\epsilon,\delta}$, and $s< s_I$,  at the end of the subroutine \texttt{ConfidentLabel}$(X_s,t)$, the budget $n$ is not yet reached and then we can replace the relation \eqref{label-complexity-eq1} by 
\begin{equation}\label{label-complexity-eq2}
n>\sum_{\substack{s\in I\\s< s_I\\s\leq T_{\epsilon,\delta}}}\vert Q_s\vert
\end{equation} 
Then, necessarily, \eqref{necessary-condition-budget} holds when \eqref{label-complexity-eq2} holds. \\
Also, for $s$ $\in$ $I$, by theorem\ref{theorem:savings label}, if we assume that $\vert\eta(X_s)-\frac 12\vert\geq \frac 12\Delta $, we have that $\vert Q_s\vert\leq \tilde{k}(\epsilon,\delta_s)$, and the subroutine \texttt{ConfidentLabel}$(X_s,t)$,  $($with $t=n-\sum_{s_i\in I,s_i<s} \vert Q_{s_i}\vert)$ terminates when the cut-off condition \eqref{guarantee-confident} is satisfied. The right hand side of \eqref{label-complexity-eq2} is equal to: 

\begin{equation}\label{label-complexity-eq3}
\sum_{\substack{s\in I\\s< s_I\\s\leq T_{\epsilon,\delta}\\ \vert\eta(X_s)-\frac 12\vert\geq \frac 12 \Delta }}\vert Q_s\vert + \sum_{\substack{s\in I\\s< s_I\\s\leq T_{\epsilon,\delta}\\ \vert\eta(X_s)-\frac 12\vert\leq \frac 12\Delta}}\vert Q_s\vert
\end{equation}
Firstly, let us consider the first term in \eqref{label-complexity-eq3} and denote it by $T_1$. Let us denote by $B_s$ the event: 
$$B_{s}=\lbrace \vert\eta(X_s)-\frac 12\vert\geq \frac 12\Delta\rbrace.$$
We have
\begin{equation}\label{label-complexity-eq4}
\mathds{1}_{B_s}=\sum_{j=1}^{m_{\epsilon}}\mathds{1}_{B_{s,j}}
\end{equation}
where $$B_{s,j}=\lbrace 2^{j-1} \frac{1}{2}\Delta\leq \vert\eta(X_s)-\frac 12\vert\leq 2^{j}\frac 12\Delta\rbrace\quad\text{and}\;m_{\epsilon}=\left\lceil\log_2\left(\frac{1}{\frac 12 \Delta}\right)\right\rceil.$$

Then, 
\begin{align}
T_1 &\leq\sum_{\substack{s\in I\\s< s_I\\s\leq T_{\epsilon,\delta}\\ \vert\eta(X_s)-\frac 12\vert\geq \frac 12 \Delta }}\tilde{k}(\epsilon,\delta_s)\;\quad \text{by Theorem \ref{theorem:savings label}}\nonumber\\
  &=\sum_{\substack{s\in I\\s< s_I\\s\leq T_{\epsilon,\delta} }}\sum_{j=1}^{m_\epsilon}\tilde{k}(\epsilon,\delta_s)\mathds{1}_{B_{s,j}}\label{label-complexity-eq5}
\end{align}
On $B_{s,j}$, 
\begin{align}
\tilde{k}(\epsilon,\delta_s)&\leq \frac{c}{2^{2j}\Delta^2}\left[\log(\frac{32s^2}{\delta})+\log\log(\frac{32s^2}{\delta})+\log\log\left(\frac{512\sqrt{e}}{2^j\Delta}\right)\right]\nonumber\\
                              & \leq \frac{c}{2^{2j}\Delta^2}\left[2\log(\frac{32s^2}{\delta})+\log\log\left(\frac{512\sqrt{e}}{\Delta}\right)\right]\label{label-complexity-eq6}                              
\end{align}
Then \eqref{label-complexity-eq5} becomes: 
\begin{align}
T_1 &\leq \frac{c}{\Delta^2}\left[2\log(\frac{32T_{\epsilon,\delta}^2}{\delta})+\log\log\left(\frac{512\sqrt{e}}{\Delta}\right)\right]\sum_{j=1}^{m_\epsilon}2^{-2j}\sum_{\substack{s\in I\\s\leq s_I\\s\leq T_{\epsilon,\delta} }}\mathds{1}_{B_{s,j}}\label{label-complexity-eq7}
\end{align}

In \eqref{label-complexity-eq7}, the term $N_j=\displaystyle\sum_{\substack{s\in I\\s\leq s_I\\s\leq T_{\epsilon,\delta} }}\mathds{1}_{B_{s,j}}$ represents the numbers of informative points  that belong to the set 
\begin{equation}
I_j=\lbrace x, \;\gamma_{j-1}\leq \vert \eta(x)-\frac 12\vert \leq \gamma_j\rbrace
\end{equation}
(where $\gamma_j=2^j.\frac{\Delta}{2}$, $j=1\ldots, m_{\epsilon}$). We will prove that  
\begin{equation}
N_j\leq O\left((\gamma_j)^{\beta-\frac{d}{\alpha}}\right)
\label{eq:packing-upper}
\end{equation}
We proceed in two steps: 
\begin{itemize}
\item The set of informative points that belong to $I_j$ forms a $p_j$-probability-packing set (for $p_j$ well chosen) of $I_j$.
\item The cardinal of any $p_j$-probability-packing set satisfies \eqref{eq:packing-upper} 
\end{itemize}
\begin{enumerate}
\item Let us begin with first step: \\
Let $X_s,X_{s'}$ any two informative points that belong to $I_j$. Let us assume that $s<s'$. As $X_s$ $\in$ $I_j$, we have $\vert \eta(X_s)-\frac{1}{2}\vert\geq \frac{\Delta}{2}$ and by Theorem \ref{theorem:savings label}, the number of label requests $\bar{k_s}$ used in \texttt{ConfidentLabel}$(X_s)$ satisfies:

\begin{equation}
\vert \widehat{\eta}_{\bar{k}_s}-\frac{1}{2}\vert \geq 2b_{\delta_s,\bar k_s}
\label{eq:label-cutoff}
\end{equation}   
where $\widehat{\eta}_{\bar{k}_s}:=\widehat{\eta}_{\bar{k}_s}(X_s)$ and $b_{\delta_s,\bar k_s}$ are respectively defined by \eqref{eq:eta_Xs} and \eqref{eq:pi_ks}.\\
Then 
\begin{equation}
LB_s:=\vert \widehat{\eta}_{\bar{k}_s}-\frac{1}{2}\vert 
- b_{\delta_s,\bar k_s}\geq 0.1 b_{\delta_s,\bar k_s}
\label{eq:lowergarant} 
\end{equation}
Additionally, as $X_s$ and $X_{s'}$ are both informative points, by Lemma \ref{lemma:sufficient-informative}, we necessary have on event $A_3$ (see Lemma \ref{lemma:sufficient-informative}),  that 
\begin{equation}
\min(\widehat{p}_X,\widehat{p}_{X'})\geq \frac{75}{94}\left(\frac{1}{64L}\widehat{LB}_s\right)^{d/\alpha}
\label{eq:notberest}
\end{equation} 

On the event $A_3$,  the equations \eqref{eq:lowergarant}, \eqref{eq:notberest}, necessary imply: 
\begin{equation}
\min(P_X(B(X_{s},\rho(X_{s'},X_s))),P_X(B(X_{s'},\rho(X_{s'},X_s))))\geq \frac{28}{47}\left(\frac{1}{64L}\widehat{LB}_{s'}\right)^{d/\alpha}
\label{eq:lowerprob}
\end{equation}

Let us introduction the quantity $\bar\eta_{\bar{k}_s}:=\bar\eta_{\bar{k}_s}(X_s)$ defined by \eqref{eq:eta_Xs}. We have, by Theorem \ref{theorem:savings label}, on the event $A_2$ (see Theorem \ref{theorem:savings label}), 
$$\vert  \bar\eta_{\bar{k}_s}-\widehat{\eta}_{\bar{k}_s}\vert\leq b_{\delta_s,\bar k_s}.$$
Then, on the event $A_1\cap A_2$, we have:
\begin{align}
\vert\bar\eta_{\bar{k}_s}-\widehat{\eta}_{\bar{k}_s}\vert\leq b_{\delta_s,\bar k_s} \Rightarrow \vert\widehat{\eta}_{\bar{k}_s}-\frac 12\vert &\geq \vert  \bar\eta_{\bar{k}_s}-\frac 12\vert-b_{\delta_s,\bar k_s}\nonumber\\
                                             &\geq \vert\eta(X_s)- \frac 12\vert- \frac{31}{1024}\Delta-b_{\delta_s,\bar k_s}\nonumber\\
                                             &\;\;\;\; \text{by smoothness assumption (see for e.g \eqref{proof-sufficient k0})}\nonumber\\
                                             & \geq \vert\eta(X_s)- \frac 12\vert- \frac{62}{1024}\vert\eta(X_s)- \frac 12\vert-b_{\delta_s,\bar k_s}\nonumber\\
                                             &\;\;\;\quad \text{as\;}\vert\eta(X_s)- \frac 12\vert\geq \frac 12\Delta\\
                                             & =\frac{481}{512}\vert\eta(X_s)- \frac 12\vert-b_{\delta_s,\bar k_s}\label{eq:lowergarant1}
\end{align}

Therefore, we have on $A_1\cap A_2$: 

\begin{align}
LB_s &=\vert \widehat{\eta}_{\bar{k}_s}-\frac{1}{2}\vert 
- b_{\delta_s,\bar k_s}\nonumber\\
     & =\vert \widehat{\eta}_{\bar{k}_s}-\frac{1}{2}\vert 
- \frac 43b_{\delta_s,\bar k_s}+\frac 13b_{\delta_s,\bar k_s}\nonumber\\
     & \geq \frac 13\vert \widehat{\eta}_{\bar{k}_s}-\frac{1}{2}\vert + \frac 13 b_{\delta_s,\bar k_s}\nonumber\\
     & \text{\;\;by using \eqref{eq:label-cutoff}}\nonumber\\
     &\geq \frac 13\left(\frac{481}{512}\vert\eta(X_s)- \frac 12\vert-b_{\delta_s,\bar k_s}\right)+ \frac 13 b_{\delta_s,\bar k_s}\nonumber\\
     &\text{\;\;\;\;by using \eqref{eq:lowergarant1}}\nonumber\\
     &=\frac{481}{1536}\vert\eta(X_s)- \frac 12\vert\nonumber\\
     & \geq \frac{481}{1536} \gamma_{j-1}\;\text{as}\;X_s\in I_j \nonumber\\
     &=  \frac{481}{3072} \gamma_{j}\label{eq:lowergaranpac}
\end{align}

Then, the equation \eqref{eq:lowergaranpac} becomes:
\begin{equation}
\min(P_X(B(X_{s},\rho(X_{s'},X_s))),P_X(B(X_{s'},\rho(X_{s'},X_s))))\geq \frac{28}{47}\left(\frac{1}{L}\frac{481}{196608} \gamma_{j}\right)^{d/\alpha}
\label{eq:lowergaranlast}
\end{equation}
As the same way, we also obtain \eqref{eq:lowergaranlast} if $s'<s$.\\
Then, if we set 
\begin{equation}
p_j=\frac{28}{47}\left(\frac{1}{L}\frac{481}{196608} \gamma_{j}\right)^{d/\alpha}
\label{eq:probpacking}
\end{equation}
we have that, by\eqref{eq:rp} and \eqref{eq:lowergaranlast}:
\begin{equation}
\rho(X_{s'},X_s)\geq r_{p_j}(X_s)\vee r_{p_j}(X_{s'})
\label{eq:pack_j}
\end{equation} 
Therefore, the set of informative points that belong to $I_j=\lbrace x,\; \gamma_{j-1}\leq\vert\eta(x)-\frac 12\vert\leq \gamma_j\rbrace $ forms an $p_j$-probability-packing set.\\

\item As second step, let us determine an upper bound of the cardinal of any $p_j$-probability-packing set of $I_j$. Let $\Lambda_j=\lbrace x_1,\ldots, x_{M_j}\rbrace$ any $p_j$-probability-packing set of $I_j$.\\
For all $s,s'\leq M_j$, we obviously have: 
\begin{equation}
s\neq s'\Longrightarrow B(x_s,\frac{r_{p_j}(x_s)}{2})\cap B(x_{s'},\frac{r_{p_j}(x_{s'})}{2})=\emptyset
\end{equation} 
Then, we have:
\begin{align} 
P_X(\bigcup_{s=1}^{M_j}B(x_s,r_{p_j}(x_s)/2))&= \sum_{i=1}^{M_j} P_X(B(x_s,r_{p_j}(x_s)/2))\nonumber\\
                                                  &\geq C_{db}\sum_{i=1}^{M_j} P_X(B(x_s,r_{p_j}(x_s)))\nonumber\\
                                                  &\;\;\;\;\;\; \text{\;\;\;by assumption \eqref{def:doubling}}\nonumber\\
                                                  &\geq C_{db}M_j p_j \label{eq:upperMj}\\
                                                  &\text{by \eqref{eq:rp}}\nonumber 
\end{align}

On the other hand, if $z$ $\in$ $B(x_s,r_{p_j}(x_s)/2)$ for some $s\leq M_j$,  by the assumption \eqref{def:smooth}, equation \eqref{eq:rp}, and the fact that $\vert \eta(x_s)-\frac 12\vert\leq \gamma_j$, we have: 

\begin{align}
\vert\eta(z)-\frac 12\vert &\leq \gamma_j+L(p_j)^{\alpha/d}\nonumber\\
                            &= \gamma_j+ (\frac{28}{47})^{\alpha/d}\frac{481}{196608}\gamma_{j}\nonumber\\
                            &=\gamma_j\left(1+(\frac{28}{47})^{\alpha/d}\frac{481}{196608}\right)\nonumber\\
                            &=\tilde{c}\gamma_j\label{eq:uppertsy}
\end{align}
Where 
\begin{equation}
\tilde{c}=\left(1+(\frac{28}{47})^{\alpha/d}\frac{481}{196608}\right)
\end{equation}
Now we can upper bound $M_j$: by using \eqref{eq:uppertsy}, \eqref{eq:upperMj}, and assumption \eqref{def:TsybakovMarginNoise}, 
\begin{align}
C_{db}M_j p_j\leq P_X(\bigcup_{s=1}^{M_j}B(x_s,r_{p_j}(x_s)/2))&\leq P_X( z\; \vert\eta(z)-\frac 12\vert\leq \tilde{c}\gamma_j)\nonumber\\
                                                               & \leq C(\tilde{c}\gamma_j)^\beta
\end{align}

Then, 
\begin{align}
M_j &\leq \frac{C}{C_{db}}\frac{(\tilde{c}\gamma_j)^\beta}{p_j}\nonumber\\
   &= \tilde{b}(\gamma_j)^{\beta-\frac{d}{\alpha}}\label{eq:finaluppercov}
\end{align} 
where $\displaystyle\tilde{b}=  \frac{C}{C_{db}}\tilde{c}^{\beta-\frac{d}{\alpha}}\frac{47}{28}\left(\frac{196608}{481}\right)^{d/\alpha}.$\\
Then, the cardinal of any $p_j$-probability-packing set of $I_j$ is upper bound by $O\left((\gamma_j)^{\beta-\frac{d}{\alpha}}\right)$, consequently, equation \eqref{eq:packing-upper} holds.
\end{enumerate}
The equation \eqref{label-complexity-eq7} becomes:

\begin{align}
T_1 &\leq \frac{c\tilde{b}}{\Delta^2}\left[2\log(\frac{32T_{\epsilon,\delta}^2}{\delta})+\log\log\left(\frac{512\sqrt{e}}{\Delta}\right)\right]\sum_{j=1}^{m_\epsilon}2^{-2j}(\gamma_j)^{\beta-\frac{d}{\alpha}}\nonumber\\
    & =\frac{c\tilde{b}}{2}\Delta^{\beta-\frac{d}{\alpha}-2}\left[2\log(\frac{32T_{\epsilon,\delta}^2}{\delta})+\log\log\left(\frac{512\sqrt{e}}{\Delta}\right)\right]\sum_{j=1}^{m_\epsilon}2^{(-2+\beta-\frac{d}{\alpha})j}\nonumber\\
T_1 &= \frac{c\tilde{b}}{\Delta^2}\left[2\log(\frac{32T_{\epsilon,\delta}^2}{\delta})+\log\log\left(\frac{512\sqrt{e}}{\Delta}\right)\right]\sum_{j=1}^{m_\epsilon}2^{-2j}(\gamma_j)^{\beta-\frac{d}{\alpha}}\nonumber\\
    & =b_0\left(\frac{1}{\epsilon}\right)^{\frac{2\alpha+d-\alpha\beta}{\alpha(\beta+1)}}\left[2\log(\frac{32T_{\epsilon,\delta}^2}{\delta})+\log\log\left(\frac{512\sqrt{e}}{\Delta}\right)\right]m_\epsilon\label{eq:upperlabelfinal2}    
\end{align} 
where $b_0=c\tilde{b}(2C)^{\frac{2\alpha+d-\alpha\beta}{\alpha(\beta+1)}}.$ Equation \eqref{eq:upperlabelfinal2} holds because we have $\alpha\beta\leq d$, $\Delta=\max\left(\frac{\epsilon}{2}, \left(\frac{\epsilon}{2C}\right)^{\frac{1}{\beta+1}}\right)$.

Now, it remains to upper bound the second term in \eqref{label-complexity-eq3}. Denote it by $T_2$. 
By Lemma \ref{Chernoff}, (equation\eqref{eq:chernoff2}), there exists an event $A_5$ such that $P(A_5)\geq 1-\delta/8$, and on $A_5$, we have:
$$T_2\leq\sum_{s\leq  T_{\epsilon,\delta}}\vert Q_s\vert\mathds{1}_{\tilde{B}_{s}} \leq  k(\epsilon,\delta)\left(1+\frac{4}{P_X(\tilde{B})T_{\epsilon,\delta}}\log\left(\frac{8}{\delta}\right)\right)P_X(\tilde{B})T_{\epsilon,\delta}$$
Because $\vert Q_s\vert\leq k(\epsilon,\delta)$ (according to the subroutine \texttt{ConfidentLabel}) for all $s\leq T_{\epsilon,\delta}$ and where $\tilde{B}=\lbrace x,\;\vert \eta(x)-\frac 12\vert\leq \frac{\Delta}{2}\rbrace$ and $k(\epsilon,\delta)$ is defined in \eqref{eq:k_s}.\\
Consequently, we have:
\begin{align}
 T_2 &\leq k(\epsilon,\delta)\left(P_X(\tilde{B})T_{\epsilon,\delta}+4\log\left(\frac{8}{\delta}\right)\right)\nonumber\\
     &\leq k(\epsilon,\delta)\left(T_{\epsilon,\delta}\frac{1}{2^{\beta}}C\Delta^{\beta}+ 4\log\left(\frac{8}{\delta}\right)\right)\quad \text{by assumption \eqref{def:TsybakovMarginNoise}}\nonumber\\
     &=k(\epsilon,\delta)\left(\left(\frac{128L}{\Delta}\right)^{d/\alpha}\log\left(\frac{8}{\delta}\right)\frac{1}{2^{\beta}}C\Delta^{\beta}+4\log\left(\frac{8}{\delta}\right)\right)\quad \text{by \eqref{eq:label-instance}}\nonumber\\
     &=k(\epsilon,\delta)\log\left(\frac{8}{\delta}\right)\left(\left(128\right)^{d/\alpha-\beta}64^{\beta}C\left(\frac{1}{\Delta}\right)^{d/\alpha-\beta}+4\right)\label{label-complexity-eq8}
\end{align}
As $\alpha\beta\leq d$, $\Delta\leq1$, $C\geq 1$, the term $\left(128\right)^{d/\alpha-\beta}64^{\beta}C\left(\frac{1}{\Delta}\right)^{d/\alpha-\beta}$ in \eqref{label-complexity-eq8} is greater than $1$. Thus, \eqref{label-complexity-eq8} becomes:

\begin{align}
T_2 &\leq 5k(\epsilon,\delta)\log\left(\frac{8}{\delta}\right)\left(128\right)^{d/\alpha-\beta}64^{\beta}C\left(\frac{1}{\Delta}\right)^{d/\alpha-\beta}\nonumber\\
    &=5c\left[\log(\frac{1}{\delta})+\log\log(\frac{1}{\delta})+\log\log\left(\frac{512\sqrt{e}}{\Delta}\right)\right]\log\left(\frac{8}{\delta}\right)\left(128\right)^{d/\alpha-\beta}64^{\beta}C\left(\frac{1}{\Delta}\right)^{d/\alpha-\beta+2}\nonumber\\
    &\quad\;\;\;\;\;\; \text{see \eqref{eq:k_s}, where}\; k(\epsilon,\delta)\; \text{is defined}\nonumber\\
    &\leq \left(\frac{1}{\epsilon}\right)^{\frac{2\alpha+d-\alpha\beta}{\alpha(\beta+1)}}\left[\log(\frac{1}{\delta})+\log\log(\frac{1}{\delta})+\log\log\left(\frac{512\sqrt{e}}{\Delta}\right)\right]\log\left(\frac{8}{\delta}\right)\tilde{u}\label{label-complexity-eq9}
\end{align}
Where $\tilde{u}=5c\left(2C\right)^{\frac{2\alpha+d-\alpha\beta}{\alpha(\beta+1)}}64^{\beta}(128)^{d/\alpha-\beta}C$. The equation \eqref{label-complexity-eq9} holds  by using the definition of $\Delta$ \eqref{eq:margin1}.\\

By combining \eqref{label-complexity-eq9} and \eqref{eq:upperlabelfinal2} , the term obtained in \eqref{label-complexity-eq3} is less than: 
\begin{align*}
& b_0\left(\frac{1}{\epsilon}\right)^{\frac{2\alpha+d-\alpha\beta}{\alpha(\beta+1)}}\left[2\log(\frac{32T_{\epsilon,\delta}^2}{\delta})+\log\log\left(\frac{512\sqrt{e}}{\Delta}\right)\right]m_\epsilon+\\
& \left(\frac{1}{\epsilon}\right)^{\frac{2\alpha+d-\alpha\beta}{\alpha(\beta+1)}}\left[\log(\frac{1}{\delta})+\log\log(\frac{1}{\delta})+\log\log\left(\frac{512\sqrt{e}}{\Delta}\right)\right]\log\left(\frac{8}{\delta}\right)\tilde{u}
\end{align*}
Thus, if the label budget $n$ satisfies
\begin{align}
n\geq b_0\left(\frac{1}{\epsilon}\right)^{\frac{2\alpha+d-\alpha\beta}{\alpha(\beta+1)}}\left[\left[2\log(\frac{32T_{\epsilon,\delta}^2}{\delta})+\log\log\left(\frac{512\sqrt{e}}{\Delta}\right)\right]m_\epsilon+\log(\frac{1}{\delta})+\log\log(\frac{1}{\delta})+\log\log\left(\frac{512\sqrt{e}}{\Delta}\right)\right]\log\left(\frac{8}{\delta}\right)\tilde{u}\label{eq:finalupperlabelcompl}
\end{align}

We have that $n$ satisfies \eqref{label-complexity-eq2}, and \eqref{necessary-condition-budget} is necessary satisfied. 
\end{enumerate} 
\end{proof}

\textbf{Proof of Theorem~\ref{upperboundcomplexity}}~\\
Finally, we are able to prove Theorem \ref{upperboundcomplexity}. \\
Previously, we have designed five events $A_1,A_2,A_3,A_4,A_5$, such that if the event $A:=A_1\cap A_2\cap A_3\cap A_4\cap A_5$, and if the label budget $n\geq \tilde{O}\left(\left(\frac{1}{\epsilon}\right)^{\frac{2\alpha+d-\alpha\beta}{\alpha(\beta+1)}}\right)$ \eqref{eq:finalupperlabelcompl} (where $\tilde{O}$ assumes the existence of polylogarithmic factor in $\frac{1}{\epsilon}$ and $\frac{1}{\delta}$), then \eqref{necessary-condition-budget} is necessary satisfied, therefore , by Theorem \ref{theorem:lab-instance-space}, if  $w$ satisfies \eqref{guarantee-on-pool-0} and \eqref{condition2-0}, and the event $A$ holds, the classifier provided by the algorithm \texttt{KALLS} agrees with the Bayes classifier on $\lbrace x, \;\vert\eta(x)-\frac{1}{2}\vert> \Delta\rbrace$. Consequently, we have: 

\begin{align*}
R(\widehat{f}_{n,w})-R(f^*)&=\int_{supp(P_X)} \mathds{1}_{\widehat{f}_{n,w}\neq f^*(x)}\vert 2\eta(x)-1\vert dP_X(x)\\
                           &\leq \int_{\lbrace x, \;\vert\eta(x)-\frac{1}{2}\vert\leq \Delta\rbrace\cap supp(P_X)} 2\Delta \,dP_X(x)\\
                           &\leq 2C\Delta^{\beta+1}\quad \text{by assumption \eqref{def:TsybakovMarginNoise}}                        \\
                           &\leq \epsilon \quad\text{by using the definition of}\; \Delta \;\eqref{eq:margin1}
\end{align*}
Thus with probability at least $1-P(A^c)\geq 1-\delta$, \eqref{eq:error} holds.

\end{document}